\newcommand{\modelName}{\textsc{ReshufflE}}
\newtheorem{example}{Example}
\newtheorem{proposition}{Proposition}
\newtheorem{lemma}{Lemma}
\newtheorem{corollary}{Corollary}
\newtheorem{definition}{Definition}
\title{Faithful Differentiable Reasoning with Reshuffled Region-based Embeddings}
\author{%
Aleksandar Pavlovic$^1$\and Emanuel Sallinger$^{1,2}$\and Steven Schockaert$^3$\\ 
\affiliations
$^1$TU Wien, Vienna, Austria\\
$^2$University of Oxford, Oxford, UK\\
$^3$Cardiff University, Cardiff, UK\\
\emails
aleksandar.pavlovic.ai@gmail.com,
emanuel.sallinger@tuwien.ac.at,
schockaerts1@cardiff.ac.uk
}
\begin{document}

\maketitle

\begin{abstract}
Knowledge graph (KG) embedding methods learn geometric representations of entities and relations to predict plausible missing knowledge. These representations are typically assumed to capture rule-like inference patterns. However, our theoretical understanding of which inference patterns can be captured  remains limited. Ideally, KG embedding methods should be expressive enough such that for any set of rules, there exist relation embeddings that exactly capture these rules. This principle has been studied within the framework of region-based embeddings, but existing models are severely limited in the kinds of rule bases that can be captured. We argue that this stems from the fact that entity embeddings are only compared in a coordinate-wise fashion. As an alternative, we propose \modelName, a simple model based on ordering constraints that can faithfully capture a much larger class of rule bases than existing approaches. Most notably, {\modelName} can capture bounded inference w.r.t.\ arbitrary sets of closed path rules. The entity embeddings in our framework can be learned by a Graph Neural Network (GNN), which effectively acts as a differentiable rule base. 
\end{abstract}

\section{Introduction}
Knowledge graph (KG) embeddings \citep{DBLP:conf/nips/BordesUGWY13,DBLP:journals/corr/YangYHGD14a,DBLP:conf/icml/TrouillonWRGB16,DBLP:conf/iclr/SunDNT19} are geometric representations of knowledge graphs. Such representations are typically used to infer plausible knowledge that is not explicitly stated in the KG. An important research question is concerned with the kinds of regularities that can be captured by different kinds of models. While most standard approaches are difficult to analyse from this perspective, region-based approaches make these regularities more explicit \citep{DBLP:conf/kr/Gutierrez-Basulto18,DBLP:conf/nips/AbboudCLS20,DBLP:conf/iclr/0002S23,DBLP:conf/ijcai/CharpenayS24}. Essentially, in such approaches, each entity $e$ is represented by an embedding $\mathbf{e}\in\mathbb{R}^d$ and each relation $r$ is represented by a geometric region $Z_r\subseteq \mathbb{R}^{2d}$. The triple $(e,r,f)$ is then captured by the embedding iff $\mathbf{e}\oplus\mathbf{f}\in Z_r$, where $\oplus$ denotes vector concatenation. In this way, we can naturally associate a KG with a given embedding. Region-based models can also associate a \emph{rule base} with the embedding, where the rules are reflected in the spatial configuration of the regions $Z_r$. However, not all rule bases can be captured in this way. As a simple example, models based on TransE \citep{DBLP:conf/nips/BordesUGWY13} cannot distinguish between the rules $r_1(X_1,X_2)\wedge r_2(X_2,X_3)\rightarrow r_3(X_1,X_3)$ and $r_2(X_1,X_2)\wedge r_1(X_2,X_3)\rightarrow r_3(X_1,X_3)$. 
This particular limitation can be avoided by using more sophisticated region-based models \citep{DBLP:conf/iclr/0002S23,DBLP:conf/ijcai/CharpenayS24}, but even these models can only capture particular kinds of rule bases. This appears to be related to the fact that they rely on regions which can be characterised in terms of $d$ two-dimensional regions $Z^r_1,..., Z^r_d$, with $Z_i^r \subseteq \mathbb{R}^2$. To check whether $(e,r,f)$ is captured, we then check whether $(e_i,f_i)\in Z_i^r$ for each $i\in \{1,...,d\}$, with $\mathbf{e}= (e_1,...,e_d)$ and $\mathbf{f}= (f_1,...,f_d)$. We will refer to such approaches as \emph{coordinate-wise} models.
Existing models primarily differ in how these two-dimensional regions are defined, e.g.\ ExpressivE \citep{DBLP:conf/iclr/0002S23} uses parallelograms for this purpose, while \citet{DBLP:conf/ijcai/CharpenayS24} used octagons. 

In this paper, we propose a model that goes beyond coordinate-wise comparisons, which we term \modelName. A key challenge in designing such a model is that more flexible representations typically lead to overfitting. We avoid this problem by otherwise keeping the model as simple as possible, learning regions which are defined in terms of ordering constraints of the form $e_i\leq f_{j}$.
As our main contribution, we show that {\modelName} is more expressive than existing region-based models. 
Furthermore, we show how entity embeddings can be learned using a Graph Neural Network (GNN) with randomly initialised node embeddings. This GNN effectively serves as a differentiable approximation of a rule base, acting on the initial representations of the entities to ensure that they capture the consequences that can be inferred from the KG. 
A practical consequence is that entity embeddings can thus be efficiently updated when new knowledge becomes available. From a theoretical point of view, the GNN-based formulation allows us to study bounded inference, where the number of layers of the GNN can be related to the number of inference steps. 

\section{Related Work}\label{secRelatedWork}
\paragraph{Region-based Models} Our theoretical understanding of the reasoning abilities of KG embedding models remains poorly understood. This topic has primarily been studied in a line of work that uses region-based representations of relations \citep{DBLP:conf/kr/Gutierrez-Basulto18,DBLP:conf/nips/AbboudCLS20,DBLP:conf/nips/ZhangWCJW21,DBLP:journals/amai/LeemhuisOW22,DBLP:conf/iclr/0002S23,DBLP:conf/ijcai/CharpenayS24}. Essentially, the region-based view makes explicit which triples and rules are captured by a given embedding, which allows us to formally study what kinds of semantic dependencies a given model is capable of capturing \citep{DBLP:conf/kr/Gutierrez-Basulto18,DBLP:conf/nips/AbboudCLS20,DBLP:conf/kr/Bourgaux0KLO24}. Existing work has uncovered various limitations of popular KG embedding models. For instance, \citet{DBLP:conf/kr/Gutierrez-Basulto18} revealed that bilinear models such as RESCAL \citep{DBLP:conf/icml/NickelTK11}, DistMult \citep{DBLP:journals/corr/YangYHGD14a}, TuckER \citep{DBLP:conf/emnlp/BalazevicAH19} and ComplEx \citep{DBLP:conf/icml/TrouillonWRGB16} cannot capture relation hierarchies in a faithful way. They also studied the expressivity of models that represent relations using convex polytopes, finding that arbitrary sets of closed path rules can be faithfully captured by such representations (among others). However, learning arbitrary polytopes is not feasible for high-dimensional spaces, hence more recent works have focused on finding regions that are easier to learn while still retaining some of the theoretical advantages, such as Cartesian products of boxes \citep{DBLP:conf/nips/AbboudCLS20}, cones \citep{DBLP:conf/nips/ZhangWCJW21,DBLP:journals/amai/LeemhuisOW22},  parallelograms \citep{DBLP:conf/iclr/0002S23} and octagons \citep{DBLP:conf/ijcai/CharpenayS24}. 
However, all these models are significantly more limited in the kinds of rules that they can capture. For instance, while the use of parallelograms and octagons makes it possible to capture closed path rules, in practice we want to capture  \emph{sets} of such rules. This is only known to be possible under rather restrictive conditions (see Section \ref{secProblemSetting}).

An important practical advantage of region-based models is that they enable a tight integration of symbolic rules and vector space embeddings. This makes it possible to ``inject'' prior knowledge in a principled way \citep{DBLP:conf/nips/AbboudCLS20} and to inspect the kinds of rules that a given model has captured. A number of embedding based approaches have been proposed with similar advantages. For instance, some methods leverage symbolic rules to regularise the embedding space \citep{DBLP:conf/emnlp/GuoWWWG16,DBLP:conf/acl/TangZLZ24}. Neuro-symbolic methods which jointly learn a (differentiable approximation of) a Markov Logic Network with a KG embedding have also been proposed \citep{DBLP:conf/nips/Qu019,DBLP:conf/nips/ChenCFHS23}. However, note that these approaches are still limited by the expressivity of the underlying KG embedding model. For instance, DiffLogic \citep{DBLP:conf/nips/ChenCFHS23} aligns a Probablistic Soft Logic \citep{DBLP:journals/jmlr/BachBHG17} theory with a RotatE embedding \citep{DBLP:conf/iclr/SunDNT19}. RotatE, like TransE, cannot distinguish between the rules $r_1(X_1,X_2)\wedge r_2(X_2,X_3)\rightarrow r_3(X_1,X_3)$ and $r_2(X_1,X_2)\wedge r_1(X_2,X_3)\rightarrow r_3(X_1,X_3)$, hence this limitation is carried over to DiffLogic.




\paragraph{Inductive KG Completion}
Standard benchmarks for KG completion only test the reasoning abilities of models to a limited extent. 
In our experiments, we will therefore focus on the problem of \emph{inductive} KG completion \citep{DBLP:conf/icml/TeruDH20}. In the inductive setting, we need to predict links between entities that are different from those that were seen during training. To perform this task, models need to learn semantic dependencies between the relations, and then exploit this knowledge when making predictions. 
A natural strategy is to learn rules from the training KG, either explicitly using models such as AnyBURL \citep{DBLP:conf/ijcai/MeilickeCRS19} and RNNLogic \citep{DBLP:conf/iclr/QuCXBT21} or implicitly using differentiable rule learners such as Neural-LP \citep{DBLP:conf/nips/YangYC17}, DRUM \citep{DBLP:conf/nips/SadeghianADW19} and NCRL \citep{DBLP:conf/iclr/ChengAS23}.  

Other approaches reduce link prediction to a graph classification problem \citep{DBLP:conf/icml/TeruDH20}.  However, this requires constructing and processing a different graph for each candidate tail entity, which is inherently inefficient. NBFNet \citep{DBLP:conf/nips/ZhuZXT21} alleviates this by processing the entire graph with a single forward pass of a GNN. RED-GNN \citep{DBLP:conf/www/ZhangY22} follows a similar approach, while A$^*$Net \citep{DBLP:conf/nips/ZhuYGX0G023} uses a learned heuristic to avoid processing the entire graph, providing further efficiency gains.
However, in all these models, the node embeddings are query-specific, meaning that a new forward pass of the GNN is still needed for each query, which is less efficient than using KG embeddings. MorsE \citep{DBLP:conf/sigir/Chen0ZZYXC22} addresses this limitation by using a GNN to compute embeddings for previously unseen entities, and then using the embeddings for link prediction (and other tasks). We adopt a similar approach in this paper. However, in our case, each layer of the GNN essentially simulates the application of a rule base, making our method conceptually closer to  differentiable rule learning methods. 
ReFactor GNN \citep{DBLP:conf/nips/ChenM0MS022} also uses a GNN to learn entity embeddings, by simulating the training dynamic of traditional KG embedding methods such as TransE \citep{DBLP:conf/nips/BordesUGWY13},  although their method has the disadvantage that all embeddings have to be recomputed when new triples are added to the KG. Moreover, it inherits the limitations of traditional embedding models when it comes to faithfully modelling rules.




\section{Problem Setting}\label{secProblemSetting}
The focus of this paper is on studying which kinds of rule bases can be faithfully captured by the proposed model. In this section, we first formally define what it means for a region-based embedding model to capture a rule base. 

\paragraph{Preliminaries}
Let $\mathcal{R}$ be a set of relations, $\mathcal{E}$ a set of entities and $\mathcal{G}\subseteq \mathcal{E}\times\mathcal{R}\times\mathcal{E}$ a knowledge graph. If $\mathcal{G}$ contains the triple $(e,r,f)$ then we also say that there is an $r$-edge from $e$ to $f$ in $\mathcal{G}$. An entity embedding $\tau$ maps each entity $e\in\mathcal{E}$ to a vector $\tau(e)\in\mathbb{R}^d$. A region-based relation embedding $\eta$ maps each relation $r\in\mathcal{R}$ to a geometric region $\eta(r)\subseteq \mathbb{R}^{2d}$. 
\begin{definition}
We say that the triple $(e,r,f)$ is captured by the pair $(\tau,\eta)$, with $\tau$ an entity embedding and $\eta$ a region-based relation embedding, iff $\tau(e)\oplus\tau(f)\in \eta(r)$. 
\end{definition}
\noindent We write $\mathcal{P}\cup \mathcal{G} \models (e,r,f)$ to denote that the triple $(e,r,f)$ can be entailed from the rule base $\mathcal{P}$ and the knowledge graph $\mathcal{G}$. More precisely, we have $\mathcal{P}\cup \mathcal{G} \models (e,r,f)$ iff either $(e,r,f)\in \mathcal{G}$, or $\mathcal{P}$ contains a rule $r_1(X_1,X_2) \wedge r_2(X_2,X_3) \wedge ... \wedge r_p(X_p,X_{p+1}) \rightarrow r(X_1,X_{p+1})$ such that $\mathcal{P}\cup \mathcal{G} \models (e,r_1,e_2)$, $\mathcal{P}\cup \mathcal{G} \models (e_2,r_2,e_3)$, ..., $\mathcal{P}\cup \mathcal{G} \models (e_p,r_p,f)$ for some entities $e_2,...,e_p$. We furthermore write $\mathcal{P} \models \rho$, for a rule $\rho$, to denote that $\mathcal{P}$ entails $\rho$ w.r.t.\ the standard notion of entailment from first-order logic (when viewing rules as universally quantified material implications). 

%
%

\paragraph{Capturing Closed Path Rules}
Similar to most existing rule-based methods for KG completion \citep{DBLP:conf/nips/YangYC17,DBLP:conf/ijcai/MeilickeCRS19,DBLP:conf/nips/SadeghianADW19,DBLP:conf/iclr/QuCXBT21,DBLP:conf/iclr/ChengAS23}, we focus on \emph{closed path rules}, which are rules $\rho$ of the form:
\begin{align}
r_1(X_1,X_2) &\wedge r_2(X_2,X_3) \wedge \notag\\
& ...\wedge r_p(X_p,X_{p+1}) \rightarrow r(X_1,X_{p+1})\label{eqClosedPathRule}
\end{align}
We refer to $r_1(X_1,X_2) \wedge r_2(X_2,X_3) \wedge ... \wedge r_p(X_p,X_{p+1})$ as the body of the rule and to $r(X_1,X_{p+1})$ as the head.
\begin{definition}\label{eqRuleCaptured}
We say that a region-based relation embedding $\eta$ captures a rule of the form \eqref{eqClosedPathRule} if for all vectors $\mathbf{x_1},...,\mathbf{x_{p+1}}\in\mathbb{R}^d$ such that that $(\mathbf{x_1}\oplus \mathbf{x_2}\in \eta(r_1)) \wedge ....\wedge (\mathbf{x_p}\oplus \mathbf{x_{p+1}}\in \eta(r_p))$ we also have   $(\mathbf{x_1}\oplus \mathbf{x_{p+1}}\in \eta(r))$.
\end{definition}
\noindent Apart from their practical significance, our focus on closed path rules is also motivated by the observation that existing region-based models have particular limitations when it comes to capturing this kind of rules. Some approaches, such as BoxE \citep{DBLP:conf/nips/AbboudCLS20} are not capable of capturing such rules at all. 
More recent approaches \citep{DBLP:conf/iclr/0002S23,DBLP:conf/ijcai/CharpenayS24} are capable of capturing closed path rules, but with significant limitations when it comes to modelling sets of such rules.

\paragraph{Eq-complete Knowledge Graphs}
Throughout this paper, we will assume that all knowledge graphs $\mathcal{G}$ contain the triple $(e,\textit{eq},e)$ for every entity $e$. We will refer to such knowledge graphs as \textit{eq}-complete. This assumption is similar to the common practice of adding self-loops in GNN models. In our setting, it will mean that instead of directly capturing a rule like $r_1(X_1,X_2)\wedge r_2(X_2,X_3)\rightarrow r(X_1,X_3)$ we might instead capture a rule like $r_1(X_1,X_2)\wedge \textit{eq}(X_2,X_3) \wedge r_2(X_3,X_4)\rightarrow r(X_1,X_4)$. This latter rule is equivalent, in the sense that any triple that can be inferred from an \textit{eq}-complete KG with the former rule, can also be inferred with the latter rule. This offers some modelling flexibility which will be important in our approach. 

\paragraph{Faithfully Capturing Rule Bases}
Definition \ref{eqRuleCaptured} specifies what it means for a closed path rule to be captured.
Our main research question is whether it is possible to faithfully capture a \emph{set} of closed path rules $\mathcal{P}$. In other words, can parameters be found for the matrices $\mathbf{B_r}$ such that \emph{all rules} entailed by $\mathcal{P}$ are captured, and \emph{only those rules}. This is made precise in the following definition.\footnote{Our notion of faithfully capturing rule bases is closely related to the notion of exactly and exclusively capturing a language of patterns, from \cite{DBLP:conf/nips/AbboudCLS20}, and the notion of strong TBox-faithfulness from \cite{DBLP:conf/kr/Bourgaux0KLO24}. It is, however, slightly weaker due to restriction to \textit{eq}-complete knowledge graphs.}
\begin{definition}\label{defFaithfulRulBase}
We say that a region-based relation embedding $\eta$ faithfully captures the rule base $\mathcal{P}$ if for every \textit{eq}-complete knowledge graph $\mathcal{G}$, the following conditions hold:
\begin{enumerate}
\item Suppose that $\mathcal{P}\cup\mathcal{G}\models (a,r,b)$ and 
let $\tau$ be an entity embedding such that $(\tau,\eta)$ captures every triple in $\mathcal{G}$. Then $(\tau,\eta)$ captures the triple $(a,r,b)$ as well.
\item Suppose that $\mathcal{P}\cup\mathcal{G}\not\models (a,r,b)$. There exists an entity embedding $\tau$ such that $(\tau,\eta)$ captures every triple in $\mathcal{G}$ but not the triple $(a,r,b)$.
\end{enumerate}
\end{definition}
\noindent Existing models are only able to faithfully capture sets of closed path rules in specific cases. For instance, \citet{DBLP:conf/ijcai/CharpenayS24} showed this to be possible when every non-trivial rule entailed from $\mathcal{P}$ is a closed path rule of the form \eqref{eqClosedPathRule} in which $r_1,...,r_p,r$ are all distinct relations. 
For instance, rules of the form $r_1(X_1,X_2)\wedge r_1(X_2,X_3)\rightarrow r(X_1,X_3)$ 
are not covered by their result. 
Similarly, they cannot capture rule bases with cyclic dependencies such as: 
\begin{align*}
\mathcal{P} = \{&r_1(X_1,X_2)\wedge r_2(X_2,X_3)\rightarrow r_3(X_1,X_3),\\
&r_3(X_1,X_2)\wedge r_4(X_2,X_3)\rightarrow r_1(X_1,X_3)\}
\end{align*}
Note that while we focus our theoretical analysis on whether a given rule base $\mathcal{P}$ can be captured, in practice we normally do not have access to such a rule base. We study whether our model is capable of capturing rule bases because this is a necessary condition to allow it to \emph{learn} semantic dependencies in the form of  rules.
\section{Model Formulation}\label{secModelDescription}
Our aim is to introduce a knowledge graph embedding model which goes beyond coordinate-wise models, but which otherwise remains as simple as possible. The central idea is to define the regions $\eta(r)$ using coordinate-wise ordering constraints between reshuffled entity embeddings: 
\begin{align}\label{eqOrderingConstraint}
\eta(r) = \{ (e_1,...,e_d,f_1,...,f_d) \,|\, \forall i\in I_r\,.\, e_{\sigma_r(i)} \leq f_i\}
\end{align}
where the representation of a region $r$ is parameterised by a set of coordinates $I_r\subseteq \{1,...,d\}$ and a mapping $\sigma_r:I_r\rightarrow \{1,...,d\}$. We thus need a maximum of $2d$ parameters to completely specify the embedding of a given relation.
Note that in the special case where $I_r=\emptyset$, we have $\eta(r)= \mathbb{R}^{2d}$. 
\begin{example}\label{exInitialExampleReshuffleConstraint}
Let $\mathbf{e}=(0,0,0)$, $\mathbf{f}=(0,0,1)$ and $\mathbf{g}=(2,2,0)$ be the embeddings of entities $e,f,g$.
Let the relations $r_1, r_2, r_3$ be represented as follows: $I_{r_1} =\{3\}$, $I_{r_2} =\{1,2\}$, $I_{r_3}=\{1\}$, $\sigma_{r_1}(3)=2$, $\sigma_{r_2}(1)=\sigma_{r_2}(2)=3$ and $\sigma_{r_3}(1)=2$. Then we find that $\mathbf{e}\oplus\mathbf{f} \in \eta(r_1)$, meaning that the triple $(e,r_1,f)$ is captured. Indeed, for $\mathbf{e}\oplus\mathbf{f} \in \eta(r_1)$ to hold, we need $e_{\sigma_{r_1}(3)}= e_2 \leq f_3$, which is satisfied. We similarly find that $(f,r_2,g)$ is captured, because $f_{\sigma_{r_2}(1)}= f_3 \leq g_1$ and $f_{\sigma_{r_2}(2)}= f_3 \leq g_2$.
\end{example}
\noindent The following example illustrates how the use of ordering constraints allows us to capture rules.
\begin{example}\label{exIntuitionsOrderingConstraints}
Consider a rule $r_1(X,Y)\wedge r_2(Y,Z) \rightarrow r_3(X,Z)$. This rule is captured by an embedding of the form \eqref{eqOrderingConstraint} if for each $i\in I_{r_3}$ we have that $i\in I_{r_2}$, $\sigma_{r_2}(i)\in I_{r_1}$ and $\sigma_{r_1}(\sigma_{r_2}(i)) = \sigma_{r_3}(i)$. For instance, the relations $r_1, r_2, r_3$ from Example \ref{exInitialExampleReshuffleConstraint} satisfy these conditions. In general, if these conditions are satisfied and the triples $(e,r_1,f)$ and $(f,r_2,g)$ are captured, then for each $i\in I_{r_3}$ we have:
$e_{\sigma_{r_1}(\sigma_{r_2}(i))} \leq f_{\sigma_{r_2}(i)} \leq g_i$.
Since we assumed $\sigma_{r_1}(\sigma_{r_2}(i))=\sigma_{r_3}(i)$ it follows that $e_{\sigma_{r_3}(i)}\leq g_i$ for every $i\in I_r$ and thus that the triple $(e,r_3,f)$ is also captured. 
\end{example}

\paragraph{Characterising Ordering Constraints}
We now turn our focus to how the proposed model can be represented in a way that enables efficient GPU computations. Note that we can characterise $\eta(r)$ as follows: 
\begin{align}\label{eqCharacterisationReshuffleMatrices}
\eta(r) = \{\mathbf{e}\oplus\mathbf{f} \,|\, \max(\mathbf{A_r}\mathbf{e},\mathbf{f})=\mathbf{f}\}
\end{align}
where the maximum is applied component-wise and the matrix $\mathbf{A_r}\in \mathbb{R}^{d\times d}$ is constrained such that (i) all components are either 0 or 1 and (ii) at most one component in each row is non-zero. 
As we explain below, the entity embeddings will be constructed using a GNN, starting from randomly initialised embeddings. For this approach to be effective, the dimensionality of the entity embeddings needs to be sufficiently high, to prevent randomly initialised embeddings from capturing KG triples by chance. At the same time, the number of parameters should remain sufficiently low to prevent overfitting. For this reason, we decouple the number of parameters from the dimensionality of the embeddings. To this end, we learn matrices $\mathbf{A_r}$ of the following form:
\begin{align}\label{eqLearningAKronecker}
\mathbf{A_r} = \mathbf{B_r} \otimes \mathbf{I_k}
\end{align}
where we write $\otimes$ for the Kronecker product, $\mathbf{I_k}$ is the $k$-dimensional identity matrix and $\mathbf{B_r}$ is an $\ell\times\ell$ matrix, with $d=k \ell$. The rows of $\mathbf{B_r}$ are constrained similarly as those of $\mathbf{A_r}$, i.e.\ each row is either a one-hot vector or a 0-vector. To make the computations more efficient, we represent each entity using a matrix $\mathbf{Z_e}\in \mathbb{R}^{\ell\times k}$, rather than a vector, and define the region-based embeddings as follows:
\begin{align}\label{eqCharacterisationReshuffleFinal}
\eta(r) = \{\textit{flatten}(\mathbf{Z_e})\oplus\textit{flatten}(\mathbf{Z_f}) \,|\, \max(\mathbf{B_r}\mathbf{Z_e},\mathbf{Z_f})=\mathbf{Z_f}\}
\end{align}
where we write $\textit{flatten}(\mathbf{Z_e})$ for the vector that is obtained by concatenating the rows of $\mathbf{Z_e}$.
We will refer to the region-based embedding model defined in \eqref{eqCharacterisationReshuffleFinal} as \modelName.
Note that a triple $(e,r,f)$ is captured if $\mathbf{B_r}\mathbf{Z_e}\preceq \mathbf{Z_f}$,
where $\mathbf{X} \preceq \mathbf{Y}$ iff 
$\max(\mathbf{X},\mathbf{Y})=\mathbf{Y}$. It is furthermore easy to verify that a rule of the form \eqref{eqClosedPathRule} is satisfied if $\mathbf{B_{r}} \preceq \mathbf{B_{r_p}}\mathbf{B_{r_{p-1}}}\cdots \mathbf{B_{r_{1}}}$. 
%

\paragraph{Learning Entity Embeddings with GNNs}
The format of \eqref{eqCharacterisationReshuffleFinal} suggests how entity embeddings in our framework can be learned using a GNN. A practical advantage of using a GNN for this purpose is that we can use our model for inductive KG completion. As we will see in Section \ref{secBounded}, the use of a GNN also has an important theoretical advantage, as it allows us to capture bounded reasoning.

Let us write $\mathbf{Z_e^{(l)}} \in\mathbb{R}^{\ell\times k}$ for the representation of entity $e$ in layer $l$ of the GNN. Starting from \eqref{eqCharacterisationReshuffleFinal}, we naturally end up with the following message-passing GNN:
\begin{align} \label{eqGNNformB}
\mathbf{Z_f^{(l+1)}} = \max\big(\{\mathbf{Z_f^{(l)}}\} \cup \{\mathbf{B_r}\mathbf{Z_e^{(l)}}\,|\, (e,r,f)\in \mathcal{G}\}\big)
\end{align}
The embeddings $\mathbf{Z_e^{(0)}}$ are initialised randomly, such that (i) all coordinates are non-negative, (ii) the coordinates of different entity embeddings are sampled independently, and (iii) there are at least two distinct values that have a non-negative probability of being sampled for each coordinate. In this way, the probability of a constraint $\mathbf{B_r}\mathbf{Z_e^{(0)}}\preceq \mathbf{Z_f^{(0)}}$ being satisfied by chance can be made arbitrarily small, without increasing the number of parameters of the model, by choosing the value of $k$ to be sufficiently high.



\section{Constructing Models from Rule Graphs}\label{secRuleGraphs}
For any knowledge graph $\mathcal{G}$, the GNN defined in \eqref{eqGNNformB} can be used to construct an entity embedding $\tau$ such that $(\tau,\eta)$ captures every triple from $\mathcal{G}$. To see this, first note that because of the use of the maximum, the GNN always converges after a finite number of iterations. Upon convergence, it is clear that every triple of $\mathcal{G}$ must indeed be satisfied. However, the resulting entity embeddings may also capture triples which are not in $\mathcal{G}$. Our central research question is whether we can always choose the matrices $\mathbf{B_r}$ such that a triple is captured by these entity embeddings iff the triple can be inferred from $\mathcal{G}\cup\mathcal{P}$, for a given rule base $\mathcal{P}$. In other words, given a set of closed path rules $\mathcal{P}$, can we always construct a {\modelName} model that faithfully captures it?
Rather than constructing the matrices $\mathbf{B_r}$ directly, we first introduce the notion of a rule graph, which will serve as a convenient abstraction. 

\paragraph{Rule Graphs}
Let $\mathcal{P}$ be a set of closed path rules.
We associate with $\mathcal{P}$ a labelled multi-graph $\mathcal{H}$, i.e.\ a set of triples $(n_1,r,n_2)$. Note that this graph is formally equivalent to a knowledge graph, but the nodes in this case do not correspond to entities. Rather, as we will see, they correspond to the different rows/columns of the matrices $\mathbf{B_r}$.  A path in  $\mathcal{H}$ from $n_1$ to $n_{p+1}$ is a sequence of triples of the form $(n_1,r_1,n_2), (n_2,r_2,n_3), ..., (n_p,r_p,n_{p+1})$. The \emph{type} of this path is given by the sequence of relations $r_1;r_2;...;r_p$. The \textit{eq}-reduced type of the path is obtained by removing all occurrences \textit{eq} in $r_1;r_2;...;r_p$.  For instance, for a path of type $r_1;\textit{eq};\textit{eq};r_2;\textit{eq}$, the \textit{eq}-reduced type is $r_1;r_2$.

\begin{definition}
Let $\mathcal{R}$ be a set of relations, and let $\mathcal{P}$ be a set of rules defined over these relations.
A rule graph $\mathcal{H}$ for $\mathcal{P}$ and $\mathcal{R}$ is a labelled multi-graph, where the labels are taken from $\mathcal{R}$, such that the following properties are satisfied: 
\begin{description}
\item[\textbf{(R1)}] For every relation $r\in \mathcal{R}$, there is some edge in $\mathcal{H}$ labelled with $r$.
\item[\textbf{(R2)}] For every node $n$ in $\mathcal{H}$ and every $r\in\mathcal{R}$, it holds that $n$ has at most one incoming $r$-edge.
\item[\textbf{(R3)}] Suppose there is an $r$-edge in $\mathcal{H}$ from node $n_1$ to node $n_2$. Suppose furthermore that $\mathcal{P}\models r_1(X_1,X_2) \wedge r_2(X_2,X_3) \wedge ... \wedge r_p(X_p,X_{p+1}) \rightarrow r(X_1,X_{p+1})$. Then there is a path in $\mathcal{H}$ from $n_1$ to $n_2$ whose $\textit{eq}$-reduced type is $r_1;...;r_p$.
\item[\textbf{(R4)}] Suppose that for every $r$-edge, there is a path connecting the same nodes whose \textit{eq}-reduced type belongs to $\{(r_{11};...;r_{1p_1}),\allowbreak ...,\allowbreak (r_{q1};...;r_{qp_q})\}$. Then there is some $i\in\{1,...,q\}$ such that that $\mathcal{P}\models r_{i1}(X_1,X_2)\wedge ... \wedge r_{ip_i}(X_{p_i},X_{p_{i+1}})\rightarrow r(X_1,X_{p_{i+1}})$.
\end{description}
\end{definition}

\noindent When $\mathcal{R}$ is clear from the context, we will simply refer to $\mathcal{H}$ as a rule graph for $\mathcal{P}$.
The definition reflects the fact that a rule is captured when the ordering constraints associated with its body entail the ordering constraints associated with its head, as was illustrated in Example \ref{exIntuitionsOrderingConstraints}. Specifically, this is encoded by condition (R3). Condition (R4) is needed to ensure that \emph{only} the rules in $\mathcal{P}$ are captured. Note that we require (R4) to be satisfied for \emph{every} set of path types $\{(r_{11};...;r_{1p_1}), ..., (r_{q1};...;r_{qp_q})\}$ that satisfy the condition. The following example illustrates why (R4) is needed.
\begin{example}
Suppose there are two $r$-edges in the rule graph, namely between $n_1$ and $n_2$ and between $n_3$ and $n_4$. Suppose furthermore that there is an $r_1;r_2$ path connecting $n_1$ and $n_2$, and an $r_3;r_4$ path connecting $n_3$ and $n_4$. Suppose that the knowledge graph $\mathcal{G}$ contains the triples $(a,r_1,b),(b,r_2,c),(a,r_3,d),(d,r_4,c)$. Then, as we will see, the corresponding model would capture the triple $(a,r,c)$. In other words, the model would capture the rule $r_1(X_1,X_2)\wedge r_2(X_2,X_3)\wedge r_3(X_1,X_4)\wedge r_4(X_4,X_3)\rightarrow r(X_1,X_3)$, but such a rule can never be entailed from $\mathcal{P}$, since the latter only contains closed path rules. We can thus only allow such rule graphs if $r_1(X_1,X_2)\wedge r_2(X_2,X_3)\rightarrow r(X_1,X_3)$ or $r_3(X_1,X_4)\wedge r_4(X_4,X_3)\rightarrow r(X_1,X_3)$ is entailed from $\mathcal{P}$, which is what (R4) ensures.
\end{example}
\noindent Conditions (R1) and (R2) are needed because, in the construction we consider below, the nodes of the rule graph will correspond to the rows/columns of the matrices $\mathbf{B_r}$. Condition (R1) ensures that $\mathbf{B_r}$ contains at least one non-zero component for each relation $r$, while (R2) ensures that each row of $\mathbf{B_r}$ has at most one non-zero component. 

\begin{example}\label{exRuleGraph1}
Let $\mathcal{P}_1$ consist of the following rules:
\begin{align*}
r_1(X,Y) \wedge r_2(Y,Z) &\rightarrow r_3(X,Z)\\
r_4(X,Y) \wedge r_5(Y,Z) &\rightarrow r_2(X,Z)
\end{align*}
A corresponding rule graph is shown in Figure \ref{figRuleGraph1}. As an example with cyclic dependencies, let $\mathcal{P}_2$ consist of:
\begin{align*}
r_2(X,Y) \wedge r_3(Y,Z) &\rightarrow r_1(Y,Z)\\
r_1(X,Y) \wedge r_4(Y,Z) &\rightarrow r_2(X,Z)
\end{align*}
A corresponding rule graph is shown in Figure \ref{figRuleGraph2Bis}.

\begin{figure}[t]
\centering
\begin{tikzpicture}[scale=0.8]
\begin{scope}[every node/.style={thick,draw}]
    \node (A) at (0,0) {$n_1$};
    \node (B) at (0,2) {$n_2$};
    \node (C) at (2,0) {$n_3$};
    \node (D) at (2,2) {$n_4$};   
    \node (E) at (-2,0) {$n_5$};   
 
\end{scope}

\begin{scope}[>={Stealth[black]},
              every edge/.style={draw=black,thick}]   
               \path [->] (A) edge node[left]  {$r_3$} (B);               
               \path [->] (A) edge node[above]  {$r_1$} (C);
               \path [->] (C) edge node[left]  {$r_2$} (B);
               \path [->] (C) edge node[right]  {$r_4$} (D);
               \path [->] (D) edge node[above]  {$r_5$} (B);   
               \path [->] (A) edge node[above]  {\textit{eq}} (E);   

\end{scope}
\end{tikzpicture}
\caption{Rule graph for $\mathcal{P}_1$. \label{figRuleGraph1}}
\end{figure}

\begin{figure}[t]
\centering
\begin{tikzpicture}[scale=0.8]
\begin{scope}[every node/.style={thick,draw}]
    \node (A) at (0,0) {$n_1$};
    \node (B) at (4,0) {$n_2$};
    \node (C) at (2,2) {$n_3$};
    \node (D) at (-2,0) {$n_4$};
\end{scope}

\begin{scope}[>={Stealth[black]},
              every edge/.style={draw=black,thick}]   
            \path [->] (A) edge node[below]  {$r_1$} (B);             
            \path [->] (A) edge node[left]  {$r_2$} (C);            
            \path [->] (C) edge[bend left=25] node[left]    {$r_3$} (B);               
            \path [->] (B) edge[bend left=25] node[left]    {$r_4$} (C); 
            \path [->] (A) edge node[above]    {\textit{eq}} (D);                            
\end{scope}
\end{tikzpicture}
\caption{Rule graph for $\mathcal{P}_2$. \label{figRuleGraph2Bis}}
\end{figure}
\end{example}

\paragraph{Constructing Models}
Given a rule graph $\mathcal{H}$, we define the corresponding parameters of a {\modelName} model (i.e.\ the matrices $\mathbf{B_r}$) as follows. Each node from the rule graph is associated with one row/column of $\mathbf{B_r}$. Let $n_1,...,n_\ell$ be an enumeration of the nodes in the rule graph. The corresponding matrix $\mathbf{B_r}=(b_{ij})$ is defined as:
\begin{align}\label{eqDefGNNfromRuleGraph}
b_{ij} = 
\begin{cases}
1 & \text{if $\mathcal{H}$ has an $r$-edge from $n_j$ to $n_i$}\\
0 & \text{otherwise}
\end{cases}
\end{align}
Note that because of condition (R2), there will be at most one non-zero element in each row of $\mathbf{B_r}$, in accordance with the assumptions that we made in Section \ref{secModelDescription}. 

If a set of closed path rules $\mathcal{P}$ has a rule graph $\mathcal{H}$ then the corresponding {\modelName} model, defined as in \eqref{eqDefGNNfromRuleGraph}, faithfully captures $\mathcal{P}$. To prove this, we need to show that the two conditions from Definition \ref{defFaithfulRulBase} are satisfied. The following proposition shows that the first condition is satisfied.

\begin{proposition}\label{propRuleGraphCompleteness}
Let $\mathcal{P}$ be a set of closed path rules and $\mathcal{G}$ an \textit{eq}-complete knowledge graph.
Suppose $\mathcal{P}\cup \mathcal{G} \models (a,r,b)$. Let $\mathcal{H}$ be a rule graph for $\mathcal{P}$ and let $\eta$ be the corresponding {\modelName} model. Let $\tau$ be an entity embedding such that $(\tau,\eta)$ captures every triple in $\mathcal{G}$. It holds that $(\tau,\eta)$ captures the triple $(a,r,b)$.
\end{proposition}
\noindent To show that the second condition of Definition \ref{defFaithfulRulBase} is also satisfied, we need to show the existence of a particular entity embedding. We show that the embedding constructed by the GNN in \eqref{eqGNNformB} satisfies this condition with a probability that can be made arbitrarily high. The reason why this embedding is not guaranteed to satisfy the condition is because, for any triple $(e,r,f)$, there is always a chance that it is captured by the model, even if $\mathcal{P}\cup\mathcal{G}\not\models (e,r,f)$, due to the fact that the entity embeddings are initialised randomly. However, by choosing the dimensionality of the entity embeddings to be sufficiently large, we can make the probability of this happening arbitrarily small. As before, we write $\ell$ to denote the number of rows in $\mathbf{Z}_e$ and $k$ for the number of columns. Note that the value of $k$ does not affect the number of parameters, since the size of the matrices $\mathbf{B_r}$ only depends on $\ell$ and the entity embeddings are randomly initialised. In practice, we can thus simply choose $k$ to be sufficiently large.

\begin{proposition}\label{propProbablyNoFalsePositives}
Let $\mathcal{P}$ be a set of closed path rules and $\mathcal{G}$ an \textit{eq}-complete knowledge graph.
Let $\mathcal{H}$ be a rule graph for $\mathcal{P}$ and let $\mathbf{Z_e^{(l)}}$ be the entity representations learned using the GNN \eqref{eqGNNformB} for the corresponding {\modelName} model. For any $\varepsilon>0$, there exists some $k_0\in \mathbb{N}$ such that, when $k\geq k_0$, for any  $m\in\mathbb{N}$  and $(a,r,b)\in \mathcal{E}\times\mathcal{R}\times\mathcal{E}$ such that $\mathcal{P}\cup \mathcal{G} \not\models (a,r,b)$, we have
\begin{align*}
\textit{Pr}[ \mathbf{B_r}\mathbf{Z_a^{(m)}}\preceq \mathbf{Z_b^{(m)}}] \leq \varepsilon
\end{align*}
\end{proposition}
\noindent Finally, there always exists an initialisation of the entity embeddings for which the embeddings learned by the GNN satisfy the second condition of Definition \ref{defFaithfulRulBase}, even when $k=1$. In particular, we have the following result
\begin{proposition}\label{propFaithfullyCapturedIfRuleGraph}
Let $\mathcal{P}$ be a set of closed path rules. Let $\mathcal{H}$ be a rule graph for $\mathcal{P}$ and let $\eta$ be the corresponding {\modelName} model.  It holds that $\eta$ faithfully captures $\mathcal{P}$.
\end{proposition}
\section{Constructing Rule Graphs}\label{secConstructingRuleGraphs}
We now return to the central question of this paper: given a set of closed path rules $\mathcal{P}$, is it possible to construct a {\modelName} model which faithfully captures $\mathcal{P}$? Thanks to Proposition \ref{propFaithfullyCapturedIfRuleGraph}  we know that this is the case when a rule graph for $\mathcal{P}$ exists. The key question thus becomes whether it is always possible to construct such a rule graph. As the following result shows, if there are no cyclic dependencies in $\mathcal{P}$, a rule graph always exists.
\begin{proposition}\label{propAcyclicRuleBase}
Let $\mathcal{P}$ be a rule base. Assume that we can rank the relations in $\mathcal{R}$ as $r_1,...,r_{|\mathcal{R}|}$, such that for every rule in $\mathcal{P}$ with $r_i$ in the body and $r_j$ in the head, it holds that $i<j$. There exists a rule graph for $\mathcal{P}$.
\end{proposition}
\noindent It follows that the class of rule bases that can be captured by {\modelName} models is strictly larger than the class that has been considered in previous work \citep{DBLP:conf/ijcai/CharpenayS24}.
Unfortunately,  there exist rule bases with cyclic dependencies for which no valid rule graph can be found. 
\begin{example}\label{exRuleGraph3}
Let $\mathcal{P}$ contain the following rule:
\begin{align*}
r_1(X,Y) \wedge r_2(Y,Z) \wedge r_1(Z,U) &\rightarrow r_2(X,U)
\end{align*}
To see why there is no rule graph for $\mathcal{P}$, consider the following knowledge graph $\mathcal{G}$:
\begin{align*}
\mathcal{G}{=}\{&(x_1,r_1,x_2), (x_2,r_1,x_3), ...,(x_{l-1},r_1,r_{l}),\\
&(x_l,r_2,x_{l+1}),(x_{l+1},r_1,x_{l+2}),...,(x_{k},r_1,x_{k+1})\}
\end{align*}
We have that $\mathcal{P}\cup\mathcal{G}\models (x_1,r_2,x_{k+1})$ only if the number of repetitions of $r_1$ at the start of the sequence matches the number at the end, but rule graphs cannot encode this.
\end{example}

\noindent The argument from the previous example can be formalised as follows. Let $\mathcal{P}$ be a set of closed path rules. Let $\mathcal{R}_1$ be the set of relations from $\mathcal{R}$ that appear in the head of some rule in $\mathcal{P}$. For any $r\in \mathcal{R}_1$, we can consider a context-free grammar with two types of production rules:
\begin{itemize}
\item For each rule $r_1(X_1,X_2) \wedge  ... \wedge r_p(X_p,X_{p+1}) \rightarrow r(X_1,X_{p+1})$, there is a production rule $r \Rightarrow r_1r_2...r_p$.
\item For each $r\in\mathcal{R}_1$, there is a production rule $r\Rightarrow \overline{r}$.
\end{itemize}
The elements of $(\mathcal{R}\,{\setminus}\,\mathcal{R}_1) \cup \{\overline{r} \,|\, r\,{\in}\, \mathcal{R}_1\}$ are terminal symbols, those in $\mathcal{R}_1$ are non-terminal symbols, and $r$ is the start symbol. We write $L_r$ for the corresponding language.
\begin{proposition}\label{propNoCFGrulebases}
Let $\mathcal{P}$ be a set of closed path rules and suppose that there exists a rule graph $\mathcal{H}$ for $\mathcal{P}$. Let $\mathcal{R}_1$ be the set of relations that appear in the head of some rule in $\mathcal{P}$. It holds that the language $L_r$ is regular for every $r\in\mathcal{R}_1$. 
\end{proposition}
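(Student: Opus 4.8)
The plan is to show that $L_r$ coincides, up to a harmless relabelling, with the set of $\textit{eq}$-reduced path types realised between the endpoints of the $r$-labelled edges of the rule graph $\mathcal{H}$, and that this latter set is regular simply because $\mathcal{H}$ is a finite graph. Concretely, for a finite word $w=w_1\dots w_q$ over $\mathcal{R}\setminus\{\textit{eq}\}$, let $\mathrm{Bodies}(r)$ be the set of all such $w$ with $\mathcal{P}\models w_1(X_1,X_2)\wedge\dots\wedge w_q(X_q,X_{q+1})\rightarrow r(X_1,X_{q+1})$. The first step is to establish the identity $\tilde{L}_r=\mathrm{Bodies}(r)$, where $\tilde{L}_r$ is obtained from $L_r$ by replacing each barred symbol $\overline{s}$ by $s$. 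This is the standard correspondence between propositional entailment of closed path rules and derivations in the grammar defining $L_r$: a production $s\Rightarrow s_1\dots s_m$ is precisely the rule $s_1\wedge\dots\wedge s_m\rightarrow s$, whereas a production $s\Rightarrow\overline{s}$ records the decision to leave a head relation $s\in\mathcal{R}_1$ unexpanded in the final body. One direction is an induction on the length of a derivation $r\Rightarrow^{*}\phi(w)$, composing the sub-rules obtained for each expanded non-terminal with the top-level rule; the converse turns a derivation witnessing $\mathcal{P}\models w\rightarrow r$ into a parse with start symbol $r$, barring every leaf whose label lies in $\mathcal{R}_1$.

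Next I would read a regular language off $\mathcal{H}$. Viewing $\mathcal{H}$ as a nondeterministic automaton whose states are its finitely many nodes, I turn every edge $(n_1,s,n_2)$ with $s\neq\textit{eq}$ into an $s$-transition and every $\textit{eq}$-edge into an $\varepsilon$-transition. Fixing start state $n_1$ and accepting state $n_2$, the accepted language is exactly the set $T_{n_1,n_2}$ of $\textit{eq}$-reduced types of paths from $n_1$ to $n_2$, which is therefore regular.

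The third step links the two objects through conditions (R3) and (R4). I claim
\[
\mathrm{Bodies}(r)=\bigcap_{(n_1,r,n_2)\in\mathcal{H}} T_{n_1,n_2}.
\]
For the inclusion $\subseteq$, if $\mathcal{P}\models w\rightarrow r$ then (R3) supplies, for every $r$-edge $(n_1,r,n_2)$, a path from $n_1$ to $n_2$ of $\textit{eq}$-reduced type $w$, so $w\in T_{n_1,n_2}$ for all such edges. For $\supseteq$, if $w$ lies in every $T_{n_1,n_2}$ then each $r$-edge admits a connecting path of $\textit{eq}$-reduced type $w$; applying (R4) to the singleton type set $\{w\}$ forces $\mathcal{P}\models w\rightarrow r$. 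By (R1) there is at least one $r$-edge, and since $\mathcal{H}$ is finite the intersection ranges over a nonempty finite family, so $\mathrm{Bodies}(r)$ is a finite intersection of regular languages and hence regular. Finally, $L_r=\phi(\mathrm{Bodies}(r))$ is the image of a regular language under the letter-to-letter homomorphism $\phi$ that bars the symbols of $\mathcal{R}_1$; as regular languages are closed under homomorphism, $L_r$ is regular, as required.

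The step I expect to be the main obstacle is the first one, namely making the correspondence $\tilde{L}_r=\mathrm{Bodies}(r)$ fully precise. The subtlety is that a relation in $\mathcal{R}_1$ may occur both as a rule head and inside a derived body, so one must carefully track, through the barring production, which occurrences are expanded and which are kept, and verify that material-implication entailment of closed path rules really does match grammar derivation in both directions. The remaining ingredients — the automaton reading of $\mathcal{H}$, the finiteness of its edge set, and closure of regular languages under finite intersection and homomorphism — are routine.
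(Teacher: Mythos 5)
Your proof is correct and follows essentially the same route as the paper's: identify membership in $L_r$ with entailment of the corresponding closed path rule, read each $r$-edge of $\mathcal{H}$ as a finite automaton with $\textit{eq}$-edges as $\varepsilon$-transitions, and use (R3)/(R4) to express the language as a finite intersection of the resulting regular languages. You are in fact slightly more careful than the paper on two points it asserts as obvious — the induction establishing the grammar-derivation/entailment correspondence, and the final relabelling homomorphism that reconciles the barred alphabet of $L_r$ with the unbarred edge labels of $\mathcal{H}$ — but the argument is the same.
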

\noindent This shows that we cannot capture arbitrary rule bases using rule graphs. For instance, for the rule base from Example \ref{exRuleGraph3}, we have $L_{r_2} = \{r_1^{l} \overline{r}_2 r_1^{l} \,|\, l\in \mathbb{N}\setminus\{0\}\}$, where we write $x^{l}$ for the string that consists of $l$ repetitions of $x$. It is well-known that this language is not regular. 

Following this negative result, we now establish two important positive results. First, in Section \ref{secLeftRegular}, inspired by regular grammars, we introduce a special class of rule bases with cyclic dependencies for which a rule graph is guaranteed to exist. Second, in Section \ref{secBounded}, we focus on the practically important setting of bounded inference: since GNNs use a fixed number of layers in practice, what mostly matters is what can be derived in a bounded number of steps. It turns out that if we only care about such inferences, we can capture arbitrary sets of closed path rules.

\subsection{Left-Regular Rule Bases}\label{secLeftRegular}
To show that many rule bases with cyclic dependencies can still be captured, we consider the following notion of a left-regular rule base, inspired by left-regular grammars.

\begin{definition}\label{defRegularRuleBase}
Let $\mathcal{P}$ be a rule base. Let $\mathcal{R}_1$ be the set of relations that appear in the head of a rule from $\mathcal{P}$. We call $\mathcal{P}$ left-regular if every rule is of the following form:
\begin{align}
r_1(X,Y)\wedge r_2(Y,Z) \rightarrow r_3(X,Z)\label{eqLeftRegularRule}
\end{align}
such that $r_2\notin \mathcal{R}_1$. 
\end{definition}
\noindent While Definition \ref{defRegularRuleBase} only considers rules with two relations in the body, rules with more than two atoms can straightforwardly be simulated by introducing fresh relations. 
The following result shows that left-regular rule bases can always be faithfully captured by a {\modelName} model.
\begin{proposition}\label{propLeftRegularRuleGraph}
For any left-regular set of closed path rules $\mathcal{P}$, there exists a rule graph for $\mathcal{P}$.
\end{proposition}
\begin{proof}(Sketch)
Given a left-regular rule base $\mathcal{P}$, we construct the corresponding rule graph $\mathcal{H}$ as follows.
\begin{enumerate}
\item We add the node $n_0$. 
\item For each relation $r\in \mathcal{R}$, we add a node $n_r$, and we connect $n_0$ to $n_r$ with an $r$-edge. 
\item For each rule of the form \eqref{eqLeftRegularRule}, we add an $r_2$-edge from $n_{r_1}$ to $n_{r_3}$.
\item For each node $n$ with multiple incoming $r$-edges for some $r\in\mathcal{R}$, we do the following. Let $\sharp(r,n)$ be the number of incoming $r$-edges for node $n$. Let $p = \max_{r\in\mathcal{R}}\sharp(r,n)$. We create fresh nodes $n_1,...,n_{p-1}$ and add \textit{eq}-edges from $n_i$ to $n_{i-1}$ ($i\in\{1,...,p-1\}$), where we define $n_0=n$. Let $r\in \mathcal{R}$ be such that $\sharp(r,n)>1$. Let $n'_0,...,n'_q$ be the nodes with an $r$-link to $n$; then we have $q\leq p-1$. For each $i\in\{1,...,q\}$ we replace the edge from $n'_i$ to $n$ by an edge from $n'_i$ to $n_i$.
\end{enumerate}
The correctness of this process is shown in the appendix.
\end{proof}
%
%

%

\subsection{Bounded Inference}\label{secBounded}
In practice, the GNN can only carry out a finite number of inference steps. Rather than requiring that the resulting embeddings capture all triples that can be inferred from $\mathcal{P}\cup \mathcal{G}$, it is thus natural to merely require that the embeddings capture all triples that can be inferred using a bounded number of inference steps. 
We know from Proposition \ref{propNoCFGrulebases} that it is not always possible to construct a  rule graph for a given rule base $\mathcal{P}$. To address this, we now weaken the notion of a rule graph, aiming to capture reasoning up to a fixed number of inference steps. In the following, we will assume that $\mathcal{P}$ only contains rules with two relations in the body.
Note that we can assume this w.l.o.g.\ as any set of closed path rules can be converted into such a format by introducing fresh relations.

Let us write $\mathcal{P}\cup\mathcal{G}\models_m (e,r,f)$ to denote that 
$(e,r,f)$ can be derived from $\mathcal{P}\cup\mathcal{G}$ in $m$ steps.  More precisely, we have $\mathcal{P}\cup\mathcal{G}\models_0 (e,r,f)$ iff $(e,r,f)\in \mathcal{G}$. Furthermore, we have $\mathcal{P}\cup\mathcal{G}\models_m (e,r,f)$, for $m>0$, iff $\mathcal{P}\cup\mathcal{G}\models_{m-1} (e,r,f)$ or there is a rule $r_1(X_1,X_2)\wedge r_2(X_2,X_3)\rightarrow r(X_1,X_3)$ in $\mathcal{P}$ and an entity $g\in\mathcal{E}$ such that $\mathcal{P}\cup\mathcal{G}\models_{m_1} (e,r_1,g)$ and $\mathcal{P}\cup\mathcal{G}\models_{m_2} (g,r_2,f)$, with $m=m_1+m_2+1$.

\begin{definition}
Let $m\in\mathbb{N}$. We call $\mathcal{H}$ an \emph{$m$-bounded rule graph} for $\mathcal{P}$ if $\mathcal{H}$ satisfies conditions (R1)--(R3) as well as the following weakening of (R4):
\begin{description}
\item[\textbf{(R4m)}] Suppose that for every $r$-edge, there is a path connecting the same nodes whose \textit{eq}-reduced type belongs to $\{(r_{11};...;r_{1p_1}),\allowbreak ...,(r_{q1};...;r_{qp_q})\}$, with $p_1,...,p_q\leq m+1$. Then there is some $i\in\{1,...,q\}$ such that that $\mathcal{P}\models_m r_{i1}(X_1,X_2)\wedge ... \wedge r_{ip_i}(X_{p_i},X_{p_{i+1}})\rightarrow r(X_1,X_{p_{i+1}})$.
\end{description}
\end{definition}

\noindent Given an $m$-bounded rule graph, we can again construct a corresponding {\modelName} model using \eqref{eqDefGNNfromRuleGraph}.
Moreover, Proposition \ref{propRuleGraphCompleteness} remains valid for $m$-bounded rule graphs. Proposition \ref{propProbablyNoFalsePositives} can be weakened as follows.

\begin{proposition}\label{propProbablyNoFalsePositivesBounded}
Let $\mathcal{P}$ be a set of closed path rules $\mathcal{G}$ an \textit{eq}-complete knowledge graph.
Let $\mathcal{H}$ be an $m$-bounded rule graph for $\mathcal{P}$ and let $\mathbf{Z_e^{(l)}}$ be the entity representations that are learned by the GNN,  for the corresponding {\modelName} model.  For any $\varepsilon>0$, there exists some $k_0\in \mathbb{N}$ such that, when $k\geq k_0$, for any $i\leq m+1$ and $(a,r,b)\in\mathcal{E}\times\mathcal{R}\times\mathcal{E}$ such that $\mathcal{P}\cup\mathcal{G}\not\models_m (a,r,b)$, we have
\begin{align*}
\textit{Pr}[ \mathbf{B_r}\mathbf{Z_a^{(i)}}\preceq \mathbf{Z_b^{(i)}}] \leq \varepsilon
\end{align*}
\end{proposition}
\noindent We can again show that there always exists an initialisation of the entity embeddings such that the triple $(a,r,b)$ is not captured by the resulting entity embeddings; we omit the details.
Crucially, we have the following result, showing that $m$-bounded rule graphs always exist, and hence that {\modelName} models can correctly capture bounded reasoning for arbitrary sets of closed path rules.
\begin{proposition}\label{propAlwaysMboundedRuleGraph}
For any set of closed path rules $\mathcal{P}$, there exists an $m$-bounded rule graph for $\mathcal{P}$.
\end{proposition}
\begin{proof}(Sketch)
Given a set of closed path rules $\mathcal{P}$ we can construct an $m$-bounded rule graph as follows.
\begin{enumerate}
\item We add the node $n_0$. 
\item For each relation $r\in \mathcal{R}$, we add a node $n_r$, and we connect $n_0$ to $n_r$ with an $r$-edge. 
\item We repeat the following until convergence. Let  $r\in\mathcal{R}$ and assume there is an $r$-edge from $n$ to $n'$. Let $r_1(X,Y)\wedge r_2(Y,Z)\rightarrow r(X,Z)$ be a rule from $\mathcal{P}$ and suppose that there is no $r_1;r_2$ path connecting $n$ and $n'$. Suppose furthermore that the edge $(n,n')$ is on some path from $n_0$ to a node $n_{r'}$, with $r'\in \mathcal{R}$, whose length is at most $m$. We add a fresh node $n''$ to the rule graph, an $r_1$-edge from $n$ to $n''$, and an $r_2$-edge from $n''$ to $n'$.
\item For each $r\in\mathcal{R}$ and $r$-edge $(n,n')$ such that for some rule $r_1(X,Y)\wedge r_2(Y,Z)\rightarrow r(X,Z)$ from $\mathcal{P}$ there is no $r_1;r_2$ path connecting $n$ and $n'$, we do the following:
\begin{enumerate}
\item We add a fresh node $n''$, an $r_1$-edge from $n$ to $n''$ and an $r_2$-edge from $n''$ to $n'$.
\item We repeat the following until convergence. For each $r'$-edge from $n$ to $n''$ and each rule $r_1'(X,Y)\wedge r_2'(Y,Z)\rightarrow r'(X,Z)$ from $\mathcal{P}$, we add an $r_1'$ edge from $n$ to $n''$ and an $r'_2$-loop to $n''$ (if no such edges/loops exist yet).
\item We repeat the following until convergence. For each $r'$-edge from $n''$ to $n'$ and each rule $r_1'(X,Y)\wedge r_2'(Y,Z)\rightarrow r'(X,Z)$ from $\mathcal{P}$, we add an $r_1'$-loop to $n''$ and an $r_2'$-edge from $n''$ to $n'$ (if no such edges/loops exist yet).
\item We repeat the following until convergence. For each $r'$-loop at $n''$, and each rule $r_1'(X,Y)\wedge r_2'(Y,Z)\rightarrow r'(X,Z)$ from $\mathcal{P}$, we add an $r_1'$-loop and an $r_2'$-loop to $n''$ (if no such loops exist yet).
\end{enumerate}
\item For each node $n$ with multiple incoming $r$-edges for one or more relations from $\mathcal{R}$, we do the following. Let $\sharp(r,n)$ be the number of incoming $r$-edges for node $n$. Let $p = \max_{r\in\mathcal{R}}\sharp(r,n)$. We create fresh nodes $n_1,...,n_{p-1}$ and add \textit{eq}-edges from $n_i$ to $n_{i-1}$ ($i\in\{1,...,p-1\}$), where we define $n_0=n$. Let $r\in \mathcal{R}$ be such that $\sharp(r,n)>1$. Let $n'_0,...,n'_q$ be the nodes with an $r$-link to $n$; then we have $q\leq p-1$. For each $i\in\{1,...,q\}$ we replace the edge from $n'_i$ to $n$ by an edge from $n'_i$ to $n_i$.
\end{enumerate}
The correctness of this process is shown in the appendix.
\end{proof}

\begin{table*}[t]
\centering
\footnotesize
\setlength\tabcolsep{3.4pt}
\begin{tabular}{llccccccccccccc}
\toprule
                                       &
                                       & \multicolumn{4}{c}{\textbf{FB15k-237}} & \multicolumn{4}{c}{\textbf{WN18RR}}    & \multicolumn{4}{c}{\textbf{NELL-995}}  \\
                                       \cmidrule(lr){3-6} \cmidrule(lr){7-10} \cmidrule(lr){11-14}   
                                        &                      
                                       & v1    & v2    & v3    & v4    & v1    & v2    & v3    & v4    & v1    & v2    & v3    & v4    \\
 \multirow{4}{*}{\rotatebox[origin=c]{90}{GNN} }     & 
CoMPILE & 0.676 & 0.829 & 0.846 & 0.874 & 0.836 & 0.798 & 0.606 & 0.754 & 0.583 & 0.938 & 0.927 & 0.751 \\
                                        & 
                                       GraIL  & 0.642 & 0.818 & 0.828 & 0.893 & 0.825 & 0.787 & 0.584 & 0.734 & 0.595 & 0.933 & 0.914 & 0.732 \\
                                        & 
                                       NBFNet                     & \textbf{0.845} & 0.949 & 0.946 & 0.947 & \textbf{0.946} & \textbf{0.897} & \textbf{0.904} & \textbf{0.889} & 0.644 & \textbf{0.953} & \textbf{0.967} & \textbf{0.928} \\
                                    & MorsE (RotatE)                  &  0.832  & \textbf{0.957} & \textbf{0.957} & \textbf{0.959} & 0.841 & 0.815 & 0.709 & 0.796 & 0.652 & 0.807 & 0.877 & 0.534\\
                                        \midrule
 \multirow{2}{*}{\rotatebox[origin=c]{90}{Rule} } & 
                                       RuleN                      & 0.498 & 0.778 & 0.877 & 0.856 & 0.809 & 0.782 & 0.534 & 0.716 & 0.535 & 0.818 & 0.773 & 0.614 \\
 & 
                                       AnyBURL                    & 0.604 & 0.823 & 0.847 & 0.849 & 0.867 & 0.828 & 0.656 & 0.796 & \textbf{0.683} &0.835 & 0.798 & 0.652 \\
\midrule
\multirow{3}{*}{\rotatebox[origin=c]{90}{Diff-R}}                                        & 
                                       DRUM                       & 0.529 & 0.587 & 0.529 & 0.559 & 0.744 & 0.689 & 0.462 & 0.671 & 0.194 & 0.786 & 0.827 & 0.806 \\
 & 
                                       Neural-LP                  & 0.529 & 0.589 & 0.529 & 0.559 & 0.744 & 0.689 & 0.462 & 0.671 & 0.408 & 0.787 & 0.827 & 0.806 \\
                                        & 
\modelName                  & 0.747 & 0.885 & 0.903 & 0.918 & 0.710 & 0.729 & 0.602 & 0.694 & 0.638  & 0.861 & 0.882 & 0.812
\\
                                       \bottomrule
\end{tabular}
\caption{Hits@10 for 50 negative samples on inductive KGC split by method type (GNN-based vs.\ rule-based vs.\ differentiable rule-based).  AnyBURL and NBFNet results were obtained from \citet{DBLP:conf/coling/AnilGIS24}; Neural-LP, DRUM, RuleN, and GraIL results are from \citet{DBLP:conf/icml/TeruDH20}; CoMPILE results are from \citet{DBLP:conf/aaai/MaiZY021}; and MorsE results are from \citet{DBLP:conf/sigir/Chen0ZZYXC22}.}
\label{tab:InductiveRes}
\end{table*}

\begin{table*}[t]
\centering
\footnotesize
\setlength\tabcolsep{3.6pt}
\begin{tabular}{lcccccccccccc}
\toprule
                                  & \multicolumn{4}{c}{\textbf{FB15k-237}}                                                                        & \multicolumn{4}{c}{\textbf{WN18RR}}                                                                           & \multicolumn{4}{c}{\textbf{NELL-995}}                                                                         \\  \cmidrule(lr){2-5} \cmidrule(lr){6-9} \cmidrule(lr){10-13} 
                                  & v1                        & v2                        & v3                        & v4                        & v1                        & v2                        & v3                        & v4                        & v1                        & v2                        & v3                        & v4                        \\ \midrule
\modelName$^2$     & \multicolumn{1}{r}{0.304} & \multicolumn{1}{r}{0.569} & \multicolumn{1}{r}{0.385} & \multicolumn{1}{r}{0.916} & \multicolumn{1}{r}{0.293} & \multicolumn{1}{r}{0.309} & \multicolumn{1}{r}{0.155} & \multicolumn{1}{r}{0.270} & \multicolumn{1}{r}{0.488} & \multicolumn{1}{r}{0.558} & \multicolumn{1}{r}{0.334} & \multicolumn{1}{r}{0.370} \\
\modelName$_{\sf nL}$ & \multicolumn{1}{r}{0.744} & \multicolumn{1}{r}{0.890} & \multicolumn{1}{r}{0.903} & \multicolumn{1}{r}{0.917} & \multicolumn{1}{r}{0.698} & \multicolumn{1}{r}{0.685} & \multicolumn{1}{r}{0.618} & \multicolumn{1}{r}{0.682} & \multicolumn{1}{r}{0.627} & \multicolumn{1}{r}{0.738} & \multicolumn{1}{r}{0.886} & \multicolumn{1}{r}{0.815} \\
\modelName         & 0.747                     & 0.885                     & 0.903                     & 0.918                     & 0.710                     & 0.729                     & 0.602                     & 0.694                     & 0.638                     & 0.861                     & 0.882                     & 0.812                     \\ \bottomrule
\end{tabular}
\caption{Hits@10 for 50 negative samples on inductive KGC for each ablation of \modelName.}
\label{tab:ablationRes}
\end{table*}

\section{Beyond Closed Path Rules}
Thus far we have only focused on sets of closed path rules, motivated by their importance for KG completion, and the known limitations of existing region-based models when it comes to capturing such rules. Two other important types of rules are hierarchy and intersection rules, which are respectively of the following form:
\begin{align*}
r_1(X,Y) &\rightarrow r_2(X,Y)\\
r_1(X,Y) \wedge r_2(X,Y) &\rightarrow r_3(X,Y)
\end{align*}
Existing models \citep{DBLP:conf/nips/AbboudCLS20,DBLP:conf/iclr/0002S23,DBLP:conf/ijcai/CharpenayS24} are capable of capturing arbitrary sets of such rules. We now show that the same is true for {\modelName} (with high probability).

\begin{proposition}\label{propHierarchyIntersection}
Let $\mathcal{P}$ be a set of hierarchy and intersection rules. There exists a {\modelName} model such that for every knowledge graph $\mathcal{G}$ the following conditions are satisfied:
\begin{enumerate}
\item Suppose that $\mathcal{P}\cup\mathcal{G} \models (a,r,b)$ and let $\tau$ be an entity embedding such that $(\tau,\eta)$ captures every triple in $\mathcal{G}$. It holds that $(\tau,\eta)$ captures $(a,r,b)$.
\item For any $\varepsilon>0$ there is a $k_0\in\mathbb{N}$ such that, when $k\geq k_0$, for any $m\in \mathbb{N}$ and $(a,r,b)\in\mathcal{E}\times\mathcal{R}\times\mathcal{E}$ such that $\mathcal{P}\cup\mathcal{G} \not\models (a,r,b)$, we have $\textit{Pr}[ \mathbf{B_r}\mathbf{Z_a^{(m)}}\preceq \mathbf{Z_b^{(m)}}] \leq \varepsilon$, where $\mathbf{Z_e^{(m)}}$ are the entity representations that are learned by the GNN \eqref{eqGNNformB}.
\end{enumerate}
\end{proposition}

\noindent Whether it is possible to capture bounded inference for arbitrary sets of closed path rules, intersection rules, and hierarchy rules remains an open question for future work. 
In the basic formulation of {\modelName}, we cannot model inverse relations, which also means that we cannot constrain relations to be symmetric. However, in practice, for each triple $(e,r,f)$ in the KG, we add an inverse triple $(f,r^{\text{inv}},e)$. Each triple thus induces two constraints $\mathbf{B_r}\mathbf{Z_e}\preceq \mathbf{Z_f}$ and $\mathbf{B_{r^{\text{inv}}}}\mathbf{Z_f}\preceq \mathbf{Z_e}$. The fact that $r$ is a symmetric relation can then be straightforwardly captured by requiring $\mathbf{B_r}=\mathbf{B_{r^{\text{inv}}}}$.
Disjointness constraints of the form $r_1(X,Y)\wedge r_2(X,Y)\rightarrow \bot$, indicating that $r_1$ and $r_2$ can never hold together, have also been studied in the context of region-based embeddings. {\modelName} does not have a mechanism to capture such constraints, but this can be addressed by adding a bias term, associating each triple $(e,r,f)$ with constraints of the form $\mathbf{B_r}\mathbf{Z_e} + \mathbf{c_r}\preceq \mathbf{Z_f}$ and $\mathbf{B_{r^{\text{inv}}}}\mathbf{Z_f} + \mathbf{c_r^{\text{inv}}}\preceq \mathbf{Z_e}$. 

\section{Empirical Evaluation}
We complement our theoretical results with an empirical evaluation, focused on showing that suitable model parameters can be effectively learned. In particular, we want to assess whether the model is simple enough to avoid overfitting, and whether it can compete with other (differentiable) rule learning methods. We focus on \emph{inductive} KG completion, as the need to capture reasoning patterns is intuitively more important for this setting compared to the traditional (i.e.\ transductive) setting.\footnote{The code for replicating our experiments is available at:\\ \url{https://github.com/AleksVap/RESHUFFLE}.} 

\paragraph{Model Details}
We learn a soft approximation of the matrices $\mathbf{B_r}$. Specifically, we learn each row $i$ of $\mathbf{B_r}$ by selecting the first $\ell$ coordinates of a vector $\mathsf{softmax}(b^r_{i,1},...,b^r_{i,\ell+1})$, with $b^r_{i,1},...,b^r_{i,\ell+1}$ learnable parameters. Note that we need $\ell+1$ parameters to allow some rows to be all 0s, which we empirically found to be important. The number of parameters per relation is thus quadratic in $\ell$. However, due to the use of the softmax operation, these representations can still be learned effectively \citep{DBLP:conf/iclr/LavoieTSVNKC23}.\footnote{We experimented with a number of strategies for imposing sparsity, but were not able to outperform the softmax formulation.}

To initialise the entity embeddings, we set each coordinate to 0 or 1, with 50\% probability. To train the model, we use the following scoring function for a given triple $(e,r,f)$:
\begin{align*}
s(e,r,f) = - \|\mathsf{ReLU}(\mathbf{B_r}\,\mathbf{Z_e^{(m)}}-\mathbf{Z_f^{(m)}})\|_2 
\end{align*}
where $m$ denotes the number of GNN layers. Note that $s(e,r,f)=0$ reaches its maximal value of 0 iff $\mathbf{B_r}\mathbf{Z_e^{(m)}}\preceq \mathbf{Z_f^{(m)}}$. For each triple $(e,r,f)$ in the given KG $\mathcal{G}$, we add an inverse triple $(f,r_{\textit{inv}},e)$ to $\mathcal{G}$. For each entity $e$, we also add the triple $(e,\textit{eq},e)$ to $\mathcal{G}$. Following the literature \citep{DBLP:conf/icml/TeruDH20,DBLP:conf/nips/ZhuZXT21}, \modelName's training process uses negative sampling under the partial completeness assumption (PCA) \citep{DBLP:conf/www/GalarragaTHS13}, i.e., for each training triple $(e, r, f) \in \mathcal{G}$, $N$ triples (negative samples) are created by replacing $e$ or $f$ in $(e, r, f)$ by randomly sampled entities $e', f' \in \mathcal{E}$.  
To train \modelName, we minimise the margin ranking loss, defined as follows: 
\begin{align} L(e, r, f) = \sum^N_{i=1} \max(0, s(e_i', r, f_i') - s(e, r, f) + \lambda) \label{eq:margin_ranking_loss}
\end{align}
where $(e_i', r, f_i')$ is the i\textsuperscript{th} negative sample and $\lambda>0$ is a hyperparameter, called the margin. Intuitively, the margin ranking loss pushes scores of true triples (i.e., those within the training graph) to be larger by at least $\lambda$ than the scores of triples that are likely false (i.e., negative samples).

\paragraph{Experimental Setup} 
We evaluate \modelName\ on the inductive knowledge graph completion (KGC) benchmark that was derived by \citet{DBLP:conf/icml/TeruDH20} from FB15k-237, WN18RR, and NELL-995. For each of these KGs, four different datasets were obtained, named v1 to v4, leading to a total of twelve datasets. 
Each dataset consists of two disjoint graphs, a training graph $\mathcal{G}_{\textit{Train}}$ and a testing graph $\mathcal{G}_{\textit{Test}}$. 
Both of these graphs are split into a train set ($80\%$), validation set ($10\%$), and test set ($10\%$), leading to a total of six graphs per dataset. 
%
Following \citet{DBLP:conf/icml/TeruDH20}, we train \modelName\ on the train split of $\mathcal{G}_{\textit{Train}}$, tune our model's hyperparameters on the validation split of $\mathcal{G}_{\textit{Train}}$, and finally evaluate the  best model on the test split of $\mathcal{G}_{\textit{Test}}$. 
%
To account for small performance fluctuations, we repeat our experiments three times and report \modelName's average performance.\footnote{Results for all seeds and the resulting standard deviations are provided in the appendix.} We select the hyperparameter configuration with the highest Hits@10 score on the validation split of $\mathcal{G}_{\textit{Train}}$. Further details on hyperparameter tuning can be found in the appendix. In accordance with \citet{DBLP:conf/icml/TeruDH20}, we evaluate \modelName's test performance on $50$ negatively sampled entities per triple of the test split of $\mathcal{G}_{\textit{Test}}$ and report the Hits@10 scores. We list further details about the experimental setup in the appendix.

Our GNN model acts as a kind of differentiable rule base. We therefore compare \modelName\ to standard approaches for differentiable rule learning: Neural-LP \citep{DBLP:conf/nips/YangYC17} and DRUM \citep{DBLP:conf/nips/SadeghianADW19}. We also compare our method to two classical rule learning methods:  RuleN \citep{DBLP:conf/semweb/MeilickeFWRGS18} and AnyBURL \citep{DBLP:conf/ijcai/MeilickeCRS19}. Finally, we include a comparison with GNN-based approaches: CoMPILE \citep{DBLP:conf/aaai/MaiZY021}, GraIL \citep{DBLP:conf/icml/TeruDH20}, NBFNet \citep{DBLP:conf/nips/ZhuZXT21}, and MorsE \citep{DBLP:conf/sigir/Chen0ZZYXC22}.


\paragraph{Results} 
The results in Table~\ref{tab:InductiveRes} reveal that \modelName\ consistently outperforms the differentiable rule learners DRUM and Neural-LP, often by a significant margin (with WN18RR-v1 as the only exception). Compared to the traditional rule learners,  \modelName\ performs clearly better on FB15k-237 and NELL-995 (apart from v1) but underperforms on the WN18RR benchmarks.  \citet{DBLP:conf/coling/AnilGIS24} found that the kinds of rules which are needed for WN18RR are much noisier compared to those than those which are needed for FB15k-237 and NELL-995. Our use of ordering constraints may be less suitable for such cases. Finally, compared to the GNN-based methods, \modelName\ outperforms CoMPILE and GraIL on FB15k-237 and NELL-995 v1 and v4 while again (mostly) underperforming on WN18RR. 
Despite the promising results compared to (differentiable) rule learners, \modelName\ is not competitive against state-of-the-art models such as NBFNet and MorsE. This finding is compatible with the analysis from \citet{DBLP:conf/coling/AnilGIS24}, which suggests that achieving state-of-the-art performance requires going beyond rule-based reasoning. 


\paragraph{Ablation Analysis}
We consider two variants of our model: $(i)$ \modelName$_{\sf nL}$, which does not add a self-loop relation to the KG (i.e.\ triples of the form $(e,\textit{eq},e)$); and $(ii)$ \modelName$^2$, which allows for more general $\mathbf{B_r}$ matrices. Different from \modelName, which uses the softmax function to learn the rows of $\mathbf{B_r}$, \modelName$^2$ squares the $\mathbf{B_r}$ matrices component-wise, thereby allowing arbitrary positive values.
For a fair comparison, we train each variant with the same hyperparameter values, experimental setup, and evaluation protocol. 
The results in Table \ref{tab:ablationRes} show that \modelName\ performs comparable to or better than \modelName$_{\sf nL}$ and dramatically outperforms \modelName$^2$ on all benchmarks. The similar performance of \modelName\ and \modelName$_{\sf nL}$ on most datasets suggests that the self-loop relation only matters in specific cases, which may not occur frequently in some datasets. The poor performance of \modelName$^2$ is as expected, since allowing arbitrary positive parameters makes overfitting the training data more likely.

\section{Conclusions}
The region-based view of KG embeddings makes it possible to formally analyse which inference patterns are captured by a given embedding. An important question, which was left unanswered by previous work, is whether a region-based embedding model can be found which is capable of capturing arbitrary sets of closed path rules, while still ensuring that embeddings can be learned effectively in practice. In this context, we proposed a novel approach based on ordering constraints between reshuffled entity embeddings. This model, called {\modelName}, was chosen because it allows us to escape the limitations of coordinate-wise approaches while otherwise remaining as simple as possible. We found that {\modelName} has several interesting properties. Most significantly, we showed that bounded reasoning with arbitrary sets of closed path rules can be faithfully captured. We also revealed special cases where exact reasoning is possible, 
which go significantly beyond what is (known to be) possible with existing region based models. 

Empirically, we found our approach to be competitive with (differentiable) rule learners, while underperforming the state-of-the-art more generally. This latter finding reflects the fact that (differentiable) rule based methods are less suitable when we need to weigh different pieces of weak evidence. In such cases, when further evidence becomes available, we may want to revise earlier assumptions, which is not possible with {\modelName}. Developing effective models that can provably simulate non-monotonic (or probabilistic) reasoning thus remains as an important open challenge. 
Another interesting direction for future work would be to extend our model to relations of higher arity. 

\section*{Acknowledgments}
This work was supported by the Vienna Science and Technology Fund
[10.47379/VRG18013, 10.47379/NXT22018, 10.47379/ ICT2201], the
Austrian Science Fund [10.55776/COE12], and EPSRC grant EP/W003309/1.

\bibliographystyle{kr}
\bibliography{refs}

\begin{thebibliography}{}

\bibitem[\protect\citeauthoryear{Abboud \bgroup et al\mbox.\egroup
  }{2020}]{DBLP:conf/nips/AbboudCLS20}
Abboud, R.; Ceylan, {\.I}.~{\.I}.; Lukasiewicz, T.; and Salvatori, T.
\newblock 2020.
\newblock {BoxE}: {A} box embedding model for knowledge base completion.
\newblock In {\em Advances in Neural Information Processing Systems 33: Annual
  Conference on Neural Information Processing Systems 2020, NeurIPS 2020,
  December 6-12, 2020, virtual}.

\bibitem[\protect\citeauthoryear{Ali \bgroup et al\mbox.\egroup
  }{2021}]{PyKeen}
Ali, M.; Berrendorf, M.; Hoyt, C.~T.; Vermue, L.; Sharifzadeh, S.; Tresp, V.;
  and Lehmann, J.
\newblock 2021.
\newblock {PyKEEN 1.0: A Python Library for Training and Evaluating Knowledge
  Graph Embeddings}.
\newblock {\em Journal of Machine Learning Research} 22(82):1--6.

\bibitem[\protect\citeauthoryear{Anil \bgroup et al\mbox.\egroup
  }{2024}]{DBLP:conf/coling/AnilGIS24}
Anil, A.; Guti{\'{e}}rrez{-}Basulto, V.; Ib{\'{a}}{\~{n}}ez{-}Garc{\'{\i}}a,
  Y.; and Schockaert, S.
\newblock 2024.
\newblock Inductive knowledge graph completion with gnns and rules: An
  analysis.
\newblock In Calzolari, N.; Kan, M.; Hoste, V.; Lenci, A.; Sakti, S.; and Xue,
  N., eds., {\em Proceedings of the 2024 Joint International Conference on
  Computational Linguistics, Language Resources and Evaluation, {LREC/COLING}
  2024, 20-25 May, 2024, Torino, Italy},  9036--9049.
\newblock {ELRA} and {ICCL}.

\bibitem[\protect\citeauthoryear{Bach \bgroup et al\mbox.\egroup
  }{2017}]{DBLP:journals/jmlr/BachBHG17}
Bach, S.~H.; Broecheler, M.; Huang, B.; and Getoor, L.
\newblock 2017.
\newblock Hinge-loss markov random fields and probabilistic soft logic.
\newblock {\em J. Mach. Learn. Res.} 18:109:1--109:67.

\bibitem[\protect\citeauthoryear{Balazevic, Allen, and
  Hospedales}{2019}]{DBLP:conf/emnlp/BalazevicAH19}
Balazevic, I.; Allen, C.; and Hospedales, T.~M.
\newblock 2019.
\newblock {TuckER}: Tensor factorization for knowledge graph completion.
\newblock In {\em Proceedings of the 2019 Conference on Empirical Methods in
  Natural Language Processing and the 9th International Joint Conference on
  Natural Language Processing, {EMNLP-IJCNLP} 2019, Hong Kong, China, November
  3-7, 2019},  5184--5193.
\newblock Association for Computational Linguistics.

\bibitem[\protect\citeauthoryear{Bordes \bgroup et al\mbox.\egroup
  }{2013}]{DBLP:conf/nips/BordesUGWY13}
Bordes, A.; Usunier, N.; Garc{\'{\i}}a{-}Dur{\'{a}}n, A.; Weston, J.; and
  Yakhnenko, O.
\newblock 2013.
\newblock Translating embeddings for modeling multi-relational data.
\newblock In {\em Advances in Neural Information Processing Systems 26: 27th
  Annual Conference on Neural Information Processing Systems 2013. Proceedings
  of a meeting held December 5-8, 2013, Lake Tahoe, Nevada, United States},
  2787--2795.

\bibitem[\protect\citeauthoryear{Bourgaux \bgroup et al\mbox.\egroup
  }{2024}]{DBLP:conf/kr/Bourgaux0KLO24}
Bourgaux, C.; Guimar{\~{a}}es, R.; Koudijs, R.; Lacerda, V.; and Ozaki, A.
\newblock 2024.
\newblock Knowledge base embeddings: Semantics and theoretical properties.
\newblock In Marquis, P.; Ortiz, M.; and Pagnucco, M., eds., {\em Proceedings
  of the 21st International Conference on Principles of Knowledge
  Representation and Reasoning, {KR} 2024, Hanoi, Vietnam. November 2-8, 2024}.

\bibitem[\protect\citeauthoryear{Carlson \bgroup et al\mbox.\egroup
  }{2010}]{DBLP:conf/aaai/CarlsonBKSHM10}
Carlson, A.; Betteridge, J.; Kisiel, B.; Settles, B.; Jr., E. R.~H.; and
  Mitchell, T.~M.
\newblock 2010.
\newblock Toward an architecture for never-ending language learning.
\newblock In Fox, M., and Poole, D., eds., {\em Proceedings of the
  Twenty-Fourth {AAAI} Conference on Artificial Intelligence, {AAAI} 2010,
  Atlanta, Georgia, USA, July 11-15, 2010},  1306--1313.
\newblock {AAAI} Press.

\bibitem[\protect\citeauthoryear{Charpenay and
  Schockaert}{2024}]{DBLP:conf/ijcai/CharpenayS24}
Charpenay, V., and Schockaert, S.
\newblock 2024.
\newblock Capturing knowledge graphs and rules with octagon embeddings.
\newblock In {\em Proceedings of the Thirty-Third International Joint
  Conference on Artificial Intelligence, {IJCAI} 2024, Jeju, South Korea,
  August 3-9, 2024},  3289--3297.
\newblock ijcai.org.

\bibitem[\protect\citeauthoryear{Chen \bgroup et al\mbox.\egroup
  }{2022a}]{DBLP:conf/sigir/Chen0ZZYXC22}
Chen, M.; Zhang, W.; Zhu, Y.; Zhou, H.; Yuan, Z.; Xu, C.; and Chen, H.
\newblock 2022a.
\newblock Meta-knowledge transfer for inductive knowledge graph embedding.
\newblock In Amig{\'{o}}, E.; Castells, P.; Gonzalo, J.; Carterette, B.;
  Culpepper, J.~S.; and Kazai, G., eds., {\em {SIGIR} '22: The 45th
  International {ACM} {SIGIR} Conference on Research and Development in
  Information Retrieval, Madrid, Spain, July 11 - 15, 2022},  927--937.
\newblock {ACM}.

\bibitem[\protect\citeauthoryear{Chen \bgroup et al\mbox.\egroup
  }{2022b}]{DBLP:conf/nips/ChenM0MS022}
Chen, Y.; Mishra, P.; Franceschi, L.; Minervini, P.; Stenetorp, P.; and Riedel,
  S.
\newblock 2022b.
\newblock {ReFactor} {GNNs}: Revisiting factorisation-based models from a
  message-passing perspective.
\newblock In {\em NeurIPS}.

\bibitem[\protect\citeauthoryear{Chen \bgroup et al\mbox.\egroup
  }{2023}]{DBLP:conf/nips/ChenCFHS23}
Chen, S.; Cai, Y.; Fang, H.; Huang, X.; and Sun, M.
\newblock 2023.
\newblock Differentiable neuro-symbolic reasoning on large-scale knowledge
  graphs.
\newblock In Oh, A.; Naumann, T.; Globerson, A.; Saenko, K.; Hardt, M.; and
  Levine, S., eds., {\em Advances in Neural Information Processing Systems 36:
  Annual Conference on Neural Information Processing Systems 2023, NeurIPS
  2023, New Orleans, LA, USA, December 10 - 16, 2023}.

\bibitem[\protect\citeauthoryear{Cheng, Ahmed, and
  Sun}{2023}]{DBLP:conf/iclr/ChengAS23}
Cheng, K.; Ahmed, N.~K.; and Sun, Y.
\newblock 2023.
\newblock Neural compositional rule learning for knowledge graph reasoning.
\newblock In {\em The Eleventh International Conference on Learning
  Representations, {ICLR} 2023, Kigali, Rwanda, May 1-5, 2023}.
\newblock OpenReview.net.

\bibitem[\protect\citeauthoryear{Gal{\'{a}}rraga \bgroup et al\mbox.\egroup
  }{2013}]{DBLP:conf/www/GalarragaTHS13}
Gal{\'{a}}rraga, L.~A.; Teflioudi, C.; Hose, K.; and Suchanek, F.~M.
\newblock 2013.
\newblock {AMIE:} association rule mining under incomplete evidence in
  ontological knowledge bases.
\newblock In Schwabe, D.; Almeida, V. A.~F.; Glaser, H.; Baeza{-}Yates, R.; and
  Moon, S.~B., eds., {\em 22nd International World Wide Web Conference, {WWW}
  '13, Rio de Janeiro, Brazil, May 13-17, 2013},  413--422.
\newblock International World Wide Web Conferences Steering Committee / {ACM}.

\bibitem[\protect\citeauthoryear{Galkin \bgroup et al\mbox.\egroup
  }{2022}]{DBLP:conf/iclr/0001DWH22}
Galkin, M.; Denis, E.~G.; Wu, J.; and Hamilton, W.~L.
\newblock 2022.
\newblock Nodepiece: Compositional and parameter-efficient representations of
  large knowledge graphs.
\newblock In {\em The Tenth International Conference on Learning
  Representations, {ICLR} 2022, Virtual Event, April 25-29, 2022}.
\newblock OpenReview.net.

\bibitem[\protect\citeauthoryear{Guo \bgroup et al\mbox.\egroup
  }{2016}]{DBLP:conf/emnlp/GuoWWWG16}
Guo, S.; Wang, Q.; Wang, L.; Wang, B.; and Guo, L.
\newblock 2016.
\newblock Jointly embedding knowledge graphs and logical rules.
\newblock In Su, J.; Carreras, X.; and Duh, K., eds., {\em Proceedings of the
  2016 Conference on Empirical Methods in Natural Language Processing, {EMNLP}
  2016, Austin, Texas, USA, November 1-4, 2016},  192--202.
\newblock The Association for Computational Linguistics.

\bibitem[\protect\citeauthoryear{Guti{\'{e}}rrez{-}Basulto and
  Schockaert}{2018}]{DBLP:conf/kr/Gutierrez-Basulto18}
Guti{\'{e}}rrez{-}Basulto, V., and Schockaert, S.
\newblock 2018.
\newblock From knowledge graph embedding to ontology embedding? an analysis of
  the compatibility between vector space representations and rules.
\newblock In Thielscher, M.; Toni, F.; and Wolter, F., eds., {\em Principles of
  Knowledge Representation and Reasoning: Proceedings of the Sixteenth
  International Conference, {KR} 2018, Tempe, Arizona, 30 October - 2 November
  2018},  379--388.
\newblock {AAAI} Press.

\bibitem[\protect\citeauthoryear{Kingma and Ba}{2015}]{Adam}
Kingma, D.~P., and Ba, J.
\newblock 2015.
\newblock Adam: {A} method for stochastic optimization.
\newblock {\em 3rd International Conference on Learning Representations, {ICLR}
  2015, San Diego, CA, USA, May 7-9, 2015, Conference Track Proceedings}.

\bibitem[\protect\citeauthoryear{Lavoie \bgroup et al\mbox.\egroup
  }{2023}]{DBLP:conf/iclr/LavoieTSVNKC23}
Lavoie, S.; Tsirigotis, C.; Schwarzer, M.; Vani, A.; Noukhovitch, M.;
  Kawaguchi, K.; and Courville, A.~C.
\newblock 2023.
\newblock Simplicial embeddings in self-supervised learning and downstream
  classification.
\newblock In {\em The Eleventh International Conference on Learning
  Representations, {ICLR} 2023, Kigali, Rwanda, May 1-5, 2023}.
\newblock OpenReview.net.

\bibitem[\protect\citeauthoryear{Leemhuis, {\"{O}}z{\c{c}}ep, and
  Wolter}{2022}]{DBLP:journals/amai/LeemhuisOW22}
Leemhuis, M.; {\"{O}}z{\c{c}}ep, {\"{O}}.~L.; and Wolter, D.
\newblock 2022.
\newblock Learning with cone-based geometric models and orthologics.
\newblock {\em Ann. Math. Artif. Intell.} 90(11-12):1159--1195.

\bibitem[\protect\citeauthoryear{Mai \bgroup et al\mbox.\egroup
  }{2021}]{DBLP:conf/aaai/MaiZY021}
Mai, S.; Zheng, S.; Yang, Y.; and Hu, H.
\newblock 2021.
\newblock Communicative message passing for inductive relation reasoning.
\newblock In {\em Thirty-Fifth {AAAI} Conference on Artificial Intelligence,
  {AAAI} 2021, Thirty-Third Conference on Innovative Applications of Artificial
  Intelligence, {IAAI} 2021, The Eleventh Symposium on Educational Advances in
  Artificial Intelligence, {EAAI} 2021, Virtual Event, February 2-9, 2021},
  4294--4302.
\newblock {AAAI} Press.

\bibitem[\protect\citeauthoryear{Meilicke \bgroup et al\mbox.\egroup
  }{2018}]{DBLP:conf/semweb/MeilickeFWRGS18}
Meilicke, C.; Fink, M.; Wang, Y.; Ruffinelli, D.; Gemulla, R.; and
  Stuckenschmidt, H.
\newblock 2018.
\newblock Fine-grained evaluation of rule- and embedding-based systems for
  knowledge graph completion.
\newblock In Vrandecic, D.; Bontcheva, K.; Su{\'{a}}rez{-}Figueroa, M.~C.;
  Presutti, V.; Celino, I.; Sabou, M.; Kaffee, L.; and Simperl, E., eds., {\em
  The Semantic Web - {ISWC} 2018 - 17th International Semantic Web Conference,
  Monterey, CA, USA, October 8-12, 2018, Proceedings, Part {I}}, volume 11136
  of {\em Lecture Notes in Computer Science},  3--20.
\newblock Springer.

\bibitem[\protect\citeauthoryear{Meilicke \bgroup et al\mbox.\egroup
  }{2019}]{DBLP:conf/ijcai/MeilickeCRS19}
Meilicke, C.; Chekol, M.~W.; Ruffinelli, D.; and Stuckenschmidt, H.
\newblock 2019.
\newblock Anytime bottom-up rule learning for knowledge graph completion.
\newblock In {\em Proceedings of the Twenty-Eighth International Joint
  Conference on Artificial Intelligence, {IJCAI} 2019, Macao, China, August
  10-16, 2019},  3137--3143.
\newblock ijcai.org.

\bibitem[\protect\citeauthoryear{Miller}{1995}]{DBLP:journals/cacm/Miller95}
Miller, G.~A.
\newblock 1995.
\newblock Wordnet: {A} lexical database for english.
\newblock {\em Commun. {ACM}} 38(11):39--41.

\bibitem[\protect\citeauthoryear{Nickel, Tresp, and
  Kriegel}{2011}]{DBLP:conf/icml/NickelTK11}
Nickel, M.; Tresp, V.; and Kriegel, H.
\newblock 2011.
\newblock A three-way model for collective learning on multi-relational data.
\newblock In Getoor, L., and Scheffer, T., eds., {\em Proceedings of the 28th
  International Conference on Machine Learning, {ICML} 2011, Bellevue,
  Washington, USA, June 28 - July 2, 2011},  809--816.
\newblock Omnipress.

\bibitem[\protect\citeauthoryear{Pavlovic and
  Sallinger}{2023}]{DBLP:conf/iclr/0002S23}
Pavlovic, A., and Sallinger, E.
\newblock 2023.
\newblock {ExpressivE}: {A} spatio-functional embedding for knowledge graph
  completion.
\newblock In {\em The Eleventh International Conference on Learning
  Representations, {ICLR} 2023, Kigali, Rwanda, May 1-5, 2023}.
\newblock OpenReview.net.

\bibitem[\protect\citeauthoryear{Qu and Tang}{2019}]{DBLP:conf/nips/Qu019}
Qu, M., and Tang, J.
\newblock 2019.
\newblock Probabilistic logic neural networks for reasoning.
\newblock In Wallach, H.~M.; Larochelle, H.; Beygelzimer, A.;
  d'Alch{\'{e}}{-}Buc, F.; Fox, E.~B.; and Garnett, R., eds., {\em Advances in
  Neural Information Processing Systems 32: Annual Conference on Neural
  Information Processing Systems 2019, NeurIPS 2019, December 8-14, 2019,
  Vancouver, BC, Canada},  7710--7720.

\bibitem[\protect\citeauthoryear{Qu \bgroup et al\mbox.\egroup
  }{2021}]{DBLP:conf/iclr/QuCXBT21}
Qu, M.; Chen, J.; Xhonneux, L. A.~C.; Bengio, Y.; and Tang, J.
\newblock 2021.
\newblock Rnnlogic: Learning logic rules for reasoning on knowledge graphs.
\newblock In {\em 9th International Conference on Learning Representations,
  {ICLR} 2021, Virtual Event, Austria, May 3-7, 2021}.
\newblock OpenReview.net.

\bibitem[\protect\citeauthoryear{Sadeghian \bgroup et al\mbox.\egroup
  }{2019}]{DBLP:conf/nips/SadeghianADW19}
Sadeghian, A.; Armandpour, M.; Ding, P.; and Wang, D.~Z.
\newblock 2019.
\newblock {DRUM:} end-to-end differentiable rule mining on knowledge graphs.
\newblock In Wallach, H.~M.; Larochelle, H.; Beygelzimer, A.;
  d'Alch{\'{e}}{-}Buc, F.; Fox, E.~B.; and Garnett, R., eds., {\em Advances in
  Neural Information Processing Systems 32: Annual Conference on Neural
  Information Processing Systems 2019, NeurIPS 2019, December 8-14, 2019,
  Vancouver, BC, Canada},  15321--15331.

\bibitem[\protect\citeauthoryear{Sun \bgroup et al\mbox.\egroup
  }{2019}]{DBLP:conf/iclr/SunDNT19}
Sun, Z.; Deng, Z.; Nie, J.; and Tang, J.
\newblock 2019.
\newblock Rotate: Knowledge graph embedding by relational rotation in complex
  space.
\newblock In {\em 7th International Conference on Learning Representations,
  {ICLR} 2019, New Orleans, LA, USA, May 6-9, 2019}.
\newblock OpenReview.net.

\bibitem[\protect\citeauthoryear{Tang \bgroup et al\mbox.\egroup
  }{2024}]{DBLP:conf/acl/TangZLZ24}
Tang, X.; Zhu, S.; Liang, Y.; and Zhang, M.
\newblock 2024.
\newblock Rule: Knowledge graph reasoning with rule embedding.
\newblock In Ku, L.; Martins, A.; and Srikumar, V., eds., {\em Findings of the
  Association for Computational Linguistics, {ACL} 2024, Bangkok, Thailand and
  virtual meeting, August 11-16, 2024},  4316--4335.
\newblock Association for Computational Linguistics.

\bibitem[\protect\citeauthoryear{Teru, Denis, and
  Hamilton}{2020}]{DBLP:conf/icml/TeruDH20}
Teru, K.~K.; Denis, E.~G.; and Hamilton, W.~L.
\newblock 2020.
\newblock Inductive relation prediction by subgraph reasoning.
\newblock In {\em Proceedings of the 37th International Conference on Machine
  Learning, {ICML} 2020, 13-18 July 2020, Virtual Event}, volume 119 of {\em
  Proceedings of Machine Learning Research},  9448--9457.
\newblock {PMLR}.

\bibitem[\protect\citeauthoryear{Toutanova and
  Chen}{2015}]{DBLP:conf/acl-cvsc/ToutanovaC15}
Toutanova, K., and Chen, D.
\newblock 2015.
\newblock Observed versus latent features for knowledge base and text
  inference.
\newblock In {\em Proceedings of the 3rd Workshop on Continuous Vector Space
  Models and their Compositionality, {CVSC} 2015, Beijing, China, July 26-31,
  2015},  57--66.
\newblock Association for Computational Linguistics.

\bibitem[\protect\citeauthoryear{Trouillon \bgroup et al\mbox.\egroup
  }{2016}]{DBLP:conf/icml/TrouillonWRGB16}
Trouillon, T.; Welbl, J.; Riedel, S.; Gaussier, {\'{E}}.; and Bouchard, G.
\newblock 2016.
\newblock Complex embeddings for simple link prediction.
\newblock In Balcan, M., and Weinberger, K.~Q., eds., {\em Proceedings of the
  33nd International Conference on Machine Learning, {ICML} 2016, New York
  City, NY, USA, June 19-24, 2016}, volume~48 of {\em {JMLR} Workshop and
  Conference Proceedings},  2071--2080.
\newblock JMLR.org.

\bibitem[\protect\citeauthoryear{Xiong, Hoang, and
  Wang}{2017}]{DBLP:conf/emnlp/XiongHW17}
Xiong, W.; Hoang, T.; and Wang, W.~Y.
\newblock 2017.
\newblock Deeppath: {A} reinforcement learning method for knowledge graph
  reasoning.
\newblock In Palmer, M.; Hwa, R.; and Riedel, S., eds., {\em Proceedings of the
  2017 Conference on Empirical Methods in Natural Language Processing, {EMNLP}
  2017, Copenhagen, Denmark, September 9-11, 2017},  564--573.
\newblock Association for Computational Linguistics.

\bibitem[\protect\citeauthoryear{Yan \bgroup et al\mbox.\egroup
  }{2022}]{DBLP:conf/icml/YanMGT022}
Yan, Z.; Ma, T.; Gao, L.; Tang, Z.; and Chen, C.
\newblock 2022.
\newblock Cycle representation learning for inductive relation prediction.
\newblock In {\em {ICML}}, volume 162 of {\em Proceedings of Machine Learning
  Research},  24895--24910.
\newblock {PMLR}.

\bibitem[\protect\citeauthoryear{Yang \bgroup et al\mbox.\egroup
  }{2015}]{DBLP:journals/corr/YangYHGD14a}
Yang, B.; Yih, W.; He, X.; Gao, J.; and Deng, L.
\newblock 2015.
\newblock Embedding entities and relations for learning and inference in
  knowledge bases.
\newblock In Bengio, Y., and LeCun, Y., eds., {\em 3rd International Conference
  on Learning Representations, {ICLR} 2015, San Diego, CA, USA, May 7-9, 2015,
  Conference Track Proceedings}.

\bibitem[\protect\citeauthoryear{Yang, Yang, and
  Cohen}{2017}]{DBLP:conf/nips/YangYC17}
Yang, F.; Yang, Z.; and Cohen, W.~W.
\newblock 2017.
\newblock Differentiable learning of logical rules for knowledge base
  reasoning.
\newblock In Guyon, I.; von Luxburg, U.; Bengio, S.; Wallach, H.~M.; Fergus,
  R.; Vishwanathan, S. V.~N.; and Garnett, R., eds., {\em Advances in Neural
  Information Processing Systems 30: Annual Conference on Neural Information
  Processing Systems 2017, December 4-9, 2017, Long Beach, CA, {USA}},
  2319--2328.

\bibitem[\protect\citeauthoryear{Zhang and Yao}{2022}]{DBLP:conf/www/ZhangY22}
Zhang, Y., and Yao, Q.
\newblock 2022.
\newblock Knowledge graph reasoning with relational digraph.
\newblock In {\em {WWW}},  912--924.
\newblock {ACM}.

\bibitem[\protect\citeauthoryear{Zhang \bgroup et al\mbox.\egroup
  }{2021}]{DBLP:conf/nips/ZhangWCJW21}
Zhang, Z.; Wang, J.; Chen, J.; Ji, S.; and Wu, F.
\newblock 2021.
\newblock Cone: Cone embeddings for multi-hop reasoning over knowledge graphs.
\newblock In Ranzato, M.; Beygelzimer, A.; Dauphin, Y.~N.; Liang, P.; and
  Vaughan, J.~W., eds., {\em Advances in Neural Information Processing Systems
  34: Annual Conference on Neural Information Processing Systems 2021, NeurIPS
  2021, December 6-14, 2021, virtual},  19172--19183.

\bibitem[\protect\citeauthoryear{Zhu \bgroup et al\mbox.\egroup
  }{2021}]{DBLP:conf/nips/ZhuZXT21}
Zhu, Z.; Zhang, Z.; Xhonneux, L. A.~C.; and Tang, J.
\newblock 2021.
\newblock Neural bellman-ford networks: {A} general graph neural network
  framework for link prediction.
\newblock In {\em Advances in Neural Information Processing Systems 34: Annual
  Conference on Neural Information Processing Systems 2021, NeurIPS 2021,
  December 6-14, 2021, virtual},  29476--29490.

\bibitem[\protect\citeauthoryear{Zhu \bgroup et al\mbox.\egroup
  }{2023}]{DBLP:conf/nips/ZhuYGX0G023}
Zhu, Z.; Yuan, X.; Galkin, M.; Xhonneux, L. A.~C.; Zhang, M.; Gazeau, M.; and
  Tang, J.
\newblock 2023.
\newblock A*net: {A} scalable path-based reasoning approach for knowledge
  graphs.
\newblock In Oh, A.; Naumann, T.; Globerson, A.; Saenko, K.; Hardt, M.; and
  Levine, S., eds., {\em Advances in Neural Information Processing Systems 36:
  Annual Conference on Neural Information Processing Systems 2023, NeurIPS
  2023, New Orleans, LA, USA, December 10 - 16, 2023}.

\end{thebibliography}

\newpage
\appendix

\section{Constructing Models from Rule Graphs}
Let $\mathcal{P}$ be a set of closed path rules and let $\mathcal{H}$ be a corresponding rule graph, satisfying the conditions (R1)--(R4). 
We also assume that a knowledge graph $\mathcal{G}$ is given. We show that the {\modelName} model, which is constructed based on $\mathcal{H}$, faithfully captures $\mathcal{P}$. For the proofs, it will be more convenient to characterise the model in terms of operations on the coordinates of entity embeddings. Specifically, let $Z_i = \{(i-1)k+1,...,(i-1)k+k\}$ and let $N_r\subseteq \{n_1,...,n_{\ell}\}$ be the set of nodes from the rule graph $\mathcal{H}$ which have an incoming edge labelled with $r$. We define:
\begin{align*}
I_r &= \bigcup_{n_i\in N_r} Z_i
\end{align*}
Let $n_i\in N_r$ and let $(n_j,n_i)$ be the unique incoming edge with label $r$. Then we define ($t\in \{1,...,k\}$):
\begin{align*}
\sigma_r((i-1)k + t) &= (j-1)k + t
\end{align*}
Now let us define:
\begin{align*}
\mu_r(e_1,...,e_d) &= (e_1',...,e_d')
\end{align*}
where $e_i'=e_{\sigma_r(i)}$ if $i\in I_r$ and $e_i'=0$ otherwise. Let $\mathbf{e}$ be the entity embedding corresponding to the matrix $\mathbf{Z_e}$. In other words, if we write $z_{ij}$ for the components of $\mathbf{Z_e}$ and $e_i$ for the components of $\mathbf{e}$, then we have $z_{ij}= e_{(i-1)k+j}$. For a matrix $\mathbf{X}=(x_{ij})$, let us write $\textit{flatten}(\mathbf{X})$ for the vector that is obtained by concatenating the rows of $\mathbf{X}$. In particular, $\textit{flatten}(\mathbf{Z_e})=\mathbf{e}$. The following lemma reveals how the model constructed from the rule graph $\mathcal{H}$ can be characterised in terms of entity embeddings.

\begin{lemma}\label{eqEquivalenceRepresentationsEntities}
It holds that $\textit{flatten}(\mathbf{B_r}\mathbf{Z_e}) =\mu_r(\mathbf{e})$.
\end{lemma}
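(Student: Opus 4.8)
The plan is to prove the identity by a direct entry-wise comparison of the two sides, relying on the index correspondence $z_{ij}=e_{(i-1)k+j}$ that defines the matrix encoding of an entity embedding, together with condition (R2). Writing $\mathbf{C}=\mathbf{B_r}\mathbf{Z_e^{(l)}}$, which is an $\ell\times k$ matrix, its $(i,t)$ entry is $c_{it}=\sum_{j=1}^{\ell} b_{ij} z_{jt}$. Since $\textit{flatten}$ concatenates the rows of a matrix, the $((i-1)k+t)$-th coordinate of $\textit{flatten}(\mathbf{C})$ is exactly $c_{it}$. The goal is then to show that this coordinate equals the $((i-1)k+t)$-th coordinate of $\mu_r(\mathbf{e}^{(l)})$, for all $i\in\{1,\ldots,\ell\}$ and $t\in\{1,\ldots,k\}$.

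First I would observe that $(i-1)k+t\in I_r$ if and only if $n_i\in N_r$, which follows immediately from $I_r=\bigcup_{n_i\in N_r} Z_i$ and the fact that $(i-1)k+t\in Z_i$ but lies in no other block $Z_{i'}$. This lets me split into two cases. If $n_i\notin N_r$, then node $n_i$ has no incoming $r$-edge, so $b_{ij}=0$ for every $j$, giving $c_{it}=0$; and on the other side $(i-1)k+t\notin I_r$, so by definition $\mu_r$ also returns $0$ in this coordinate. If $n_i\in N_r$, condition (R2) guarantees a unique incoming $r$-edge $(n_{j^*},n_i)$, so $b_{i,j^*}=1$ and $b_{ij}=0$ for all $j\neq j^*$; the sum collapses to $c_{it}=z_{j^* t}=e_{(j^*-1)k+t}$. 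Using the definition $\sigma_r((i-1)k+t)=(j^*-1)k+t$, this is precisely $e_{\sigma_r((i-1)k+t)}$, which is exactly the $((i-1)k+t)$-th coordinate of $\mu_r(\mathbf{e}^{(l)})$ in this case. Matching the two cases completes the argument.

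The proof is essentially routine bookkeeping, so I do not expect a genuine mathematical obstacle; the only place that requires care is keeping the two layers of indexing straight --- the outer node index $i$ ranging over rows/nodes and the inner index $t$ ranging over the $k$ coordinates within each block $Z_i$ --- and confirming that the flatten operation and the block structure $z_{ij}=e_{(i-1)k+j}$ line up consistently on both sides. The single substantive ingredient is condition (R2), which is what turns the matrix--vector product (a sum over $j$) into a single selected coordinate and thereby reconciles it with the functional definition of $\mu_r$, whose partial map $\sigma_r$ already hard-codes the ``at most one incoming $r$-edge'' structure.
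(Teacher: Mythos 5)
Your proof is correct and follows essentially the same route as the paper's: a coordinate-wise comparison splitting into the two cases of whether $n_i$ has an incoming $r$-edge, using (R2) to collapse the row of $\mathbf{B_r}$ to a single one-hot selection. Your version is slightly more explicit about the matrix-product sum and the flatten indexing, but the substance is identical.
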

\begin{proof}
Let us write $\textit{flatten}(\mathbf{B_r}\mathbf{Z_e})=(x_1,...,x_d)$, $\mu_r(\mathbf{e})= (y_1,...,y_d)$ and $\mathbf{e}=(e_1,...,e_d)$. Let $i\in \{1,...,\ell\}$. Let us first assume that $n_i$ does not have any incoming edges in $\mathcal{H}$ which are labelled with $r$. In that case, row $i$ of $\mathbf{B_r}$ consists only of 0s and we have $x_{(i-1)k+1}=...=x_{(i-1)k+k}=0$. Similarly, we then also have $(i-1)k+j\notin I_r$ for $j\in\{1,...,k\}$ and thus $y_{(i-1)k+1}=...=y_{(i-1)k+k}=0$.
Now assume that there is an edge from $n_j$ to $n_i$ which is labelled with $r$. Then we have that row $i$ of $\mathbf{B_r}$ is a one-hot vector with 1 at position $j$. Accordingly, we have $x_{(i-1)k+t}=e_{(j-1)k+t}$ for $t\in\{1,...,k\}$. Accordingly we then have $\sigma_r((i-1)k+t)=(j-1)k+t$ and thus $y_{(i-1)k+t}=e_{(j-1)k+t}$. 
\end{proof}

For a sequence of relations $r_1,...,r_p$, we define $\mu_{r_1;...;r_p}$ as follows.  We define $\mu_{r_1;...;r_p}(x_1,...,x_d)$ $=(y_1,...,y_d)$, where ($i \in \{1,...,\ell\}$,  $t\in\{1,...,k\}$):
\begin{align*}
y_{(i-1)k +t} = 
\begin{cases}
x_{(j-1)k + t} & \text{if there is an $r_1;...;r_p$ path }\\
& \text{from $n_j$ to $n_i$}\\
0 & \text{otherwise}
\end{cases}
\end{align*}
Note that if there is an $r_1;...;r_k$ path arriving at node $n_i$ in the rule graph, it has to be unique, given that each node has at most one incoming edge of a given type. In the following, we will also use $I_{r_1;...;r_p}$, defined as follows: 
\begin{align*}
&I_{r_1;...;r_p} \\
&\quad = \{(i-1)k + t \,|\, \text{there is an $r_1;...;r_p$ path ending in $n_i$}\}
\end{align*}
We have the following result.
\begin{lemma}\label{lemmaCompositeMu}
For $r_1,...,r_p\in \mathcal{R}$ we have
$$
\mu_{r_1;...;r_p}(x_1,...,x_d) = \mu_{r_p}(...\mu_{r_1}(x_1,...,x_d)...)
$$
\end{lemma}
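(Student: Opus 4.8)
The plan is to prove the identity by induction on the length $p$ of the relation sequence, exploiting the fact that $\mu_r$ is precisely the operation that pulls the block of coordinates attached to a node back along its unique incoming $r$-edge (as made explicit by Lemma~\ref{eqEquivalenceRepresentationsEntities}, where $\mu_r$ realises the action of $\mathbf{B_r}$). Composing these single-edge pullbacks should then correspond to pulling back along a whole path, which is exactly the content of the lemma.

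For the base case $p=1$, I would simply unfold the definitions: $\mu_{r_1;\dots;r_p}$ with $p=1$ sets $y_{(i-1)k+t}=x_{(j-1)k+t}$ whenever there is an $r_1$-path, i.e.\ a single $r_1$-edge, from $n_j$ to $n_i$, and $0$ otherwise; this is verbatim the definition of $\mu_{r_1}$ through $\sigma_{r_1}$ and $I_{r_1}$ (here the single-relation path index $I_{r_1}$ coincides with the set $I_r$ from the GNN characterisation, since an $r_1$-path of length one ending in $n_i$ is just an incoming $r_1$-edge).

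For the inductive step I would write the right-hand side as $\mu_{r_p}\big(\mu_{r_{p-1}}(\dots \mu_{r_1}(x)\dots)\big)$ and apply the induction hypothesis to the inner $p-1$ applications, obtaining $\mu_{r_p}(w)$ with $w=\mu_{r_1;\dots;r_{p-1}}(x)$. Fixing a block index $(a-1)k+t$, I distinguish two cases according to whether $n_a$ has an incoming $r_p$-edge. If it does not, then $(a-1)k+t\notin I_{r_p}$, so the coordinate is $0$; simultaneously no $r_1;\dots;r_p$ path can end in $n_a$ (its last edge would be an $r_p$-edge into $n_a$), so $\mu_{r_1;\dots;r_p}$ also yields $0$. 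If it does, let $(n_i,n_a)$ be this edge, unique by (R2); then the coordinate equals $w_{(i-1)k+t}$, which by the induction hypothesis is $x_{(j-1)k+t}$ exactly when an $r_1;\dots;r_{p-1}$ path runs from $n_j$ to $n_i$, and $0$ otherwise. Concatenating such a path with the edge $(n_i,n_a)$ produces precisely an $r_1;\dots;r_p$ path from $n_j$ to $n_a$, and conversely every such path decomposes uniquely in this way, so the two sides agree on this coordinate too.

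The only delicate point, and the place where condition (R2) is essential, is the uniqueness that makes this path-to-edge decomposition well-defined: since each node has at most one incoming edge of a given label, the last edge of an $r_1;\dots;r_p$ path ending at $n_a$ is forced, and by backward induction the entire path of a given type ending at a fixed node is unique. This is what guarantees that the partially-defined maps $\mu_{r_1;\dots;r_q}$ are single-valued and that the bookkeeping above never conflates two distinct source blocks. Beyond this observation the argument reduces to careful index manipulation and a routine verification of the vanishing cases, which I do not expect to present any real difficulty.
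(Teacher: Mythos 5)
Your proof is correct and follows essentially the same route as the paper's: induction on $p$, reducing to the identity $\mu_{r_1;\dots;r_p}=\mu_{r_p}\circ\mu_{r_1;\dots;r_{p-1}}$ and then checking coordinates blockwise by composing the ``there is an $r_1;\dots;r_{p-1}$ path into $n_j$'' condition with the ``there is an $r_p$-edge from $n_j$ to $n_i$'' condition. Your explicit appeal to (R2) for the uniqueness of the path of a given type ending at a fixed node matches the remark the paper makes just before the lemma, so nothing is missing.
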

\begin{proof}
It is sufficient to show 
$$
\mu_{r_1;...;r_p}(x_1,...,x_d) = \mu_{r_p}(\mu_{r_1;...;r_{p-1}}(x_1,...,x_d))
$$
We have $\mu_{r_1;...;r_{p-1}}(x_1,...,x_d)=(y_1,...,y_d)$, with 
\begin{align*}
y_{(i-1)k +t} = 
\begin{cases}
x_{(j-1)k + t} & \text{if there is an $r_1;...;r_{p-1}$ path}\\
&\text{from $n_j$ to $n_i$}\\
0 & \text{otherwise}
\end{cases}
\end{align*}
We furthermore have $\mu_{r_p}(y_1,...,y_d)=(z_1,...,z_d)$ with 
\begin{align*}
z_{(i-1)k +t} = 
\begin{cases}
y_{(j-1)k + t} & \text{if there is an $r_p$-edge}\\
& \text{from $n_j$ to $n_i$}\\
0 & \text{otherwise}
\end{cases}
\end{align*}
Taking into account the definition of $(y_1,...,y_d)$, we have $y_{(j-1)k + t}\neq 0$ only if there is an $r_1;...;r_{p-1}$ path from some node $n_l$ to the node $n_j$, in which case we have $y_{(j-1)k + t}=x_{(l-1)k + t}$. In other words, we have:
\begin{align*}
z_{(i-1)k +t} = 
\begin{cases}
x_{(l-1)k + t} & \text{if there is an $r_1;...;r_{p-1}$ path  }\\
& \text{from $n_l$ to some $n_j$ and an  }\\
& \text{$r_p$ edge from $n_j$ to $n_i$}\\
0 & \text{otherwise}
\end{cases}
\end{align*}
In other words, we have
\begin{align*}
z_{(i-1)k +t} = 
\begin{cases}
x_{(l-1)k + t} & \text{if there is an $r_1;...;r_p$  path}\\
& \text{from $n_l$ to $n_i$}\\
0 & \text{otherwise}
\end{cases}
\end{align*}
We thus have $(z_1,...,z_d)=\mu_{r_1;...;r_p}(x_1,...,x_d)$.
\end{proof}

\noindent We also have the following result.
\begin{lemma}\label{lemmaPathSummarisingInequality}
Suppose $\mathcal{P}\models r_1(X_1,X_2) \wedge r_2(X_2,X_3) \wedge ... \wedge r_p(X_p,X_{p+1}) \rightarrow r(X_1,X_{p+1})$. 
There exists paths of type $r^1_1;...;r^1_{q_1}$ and $r^2_1;...;r^2_{q_2}$ and ... and $r^l_1;...;r^l_{q_l}$, all of whose \textit{eq}-reduced type is $r_1;...;r_p$, such that
for every embedding $(x_1,...,x_d)$ we have: 
$$
\mu_r(x_1,...,x_d) \preccurlyeq \max_{i=1}^l \mu_{r^i_1;...;r^i_{q_i}}(x_1,...,x_d) 
$$
\end{lemma}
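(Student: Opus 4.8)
The plan is to prove the inequality coordinate by coordinate, exploiting conditions (R2) and (R3) of the rule graph together with Lemma \ref{lemmaCompositeMu}. First I would recall the coordinate-level meaning of the operators involved: by definition, $\mu_r$ copies the $j$-th block of length $k$ of its argument into the $i$-th block whenever $\mathcal{H}$ has an $r$-edge from $n_j$ to $n_i$, and sets all remaining coordinates to $0$; likewise $\mu_{r_1;\ldots;r_p}$ copies the $j$-th block into the $i$-th block whenever there is a path of type $r_1;\ldots;r_p$ from $n_j$ to $n_i$, and $0$ elsewhere. Crucially, by (R2) each node has at most one incoming edge of a given label, so any path of a fixed type ending at $n_i$ is unique; this guarantees that the source block selected by $\mu_{r_1;\ldots;r_p}$ at block $i$ is unambiguous.

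Next I would build the finite family of path types. Since $\mathcal{H}$ is a finite graph, it contains only finitely many $r$-edges. For each $r$-edge $(n_j,n_i)$, the hypothesis that $\mathcal{P}\models r_1(X_1,X_2)\wedge\ldots\wedge r_p(X_p,X_{p+1})\rightarrow r(X_1,X_{p+1})$ together with (R3) yields a path in $\mathcal{H}$ from $n_j$ to $n_i$ whose \textit{eq}-reduced type is $r_1;\ldots;r_p$; I would fix one such path and record its full type $\tau_{(j,i)}$. Collecting the distinct full types over all (finitely many) $r$-edges gives the desired finite list $r^1_1;\ldots;r^1_{q_1},\ldots,r^l_1;\ldots;r^l_{q_l}$, each of \textit{eq}-reduced type $r_1;\ldots;r_p$.

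It then remains to verify the inequality at each coordinate $c=(i-1)k+t$. If $n_i$ has an incoming $r$-edge from some $n_j$, then $[\mu_r(x)]_c = x_{(j-1)k+t}$, and the recorded type $\tau_{(j,i)}$ occurs as some $r^m_1;\ldots;r^m_{q_m}$ in our list; by the uniqueness coming from (R2), the unique path of this type ending at $n_i$ is precisely the one starting at $n_j$, so $[\mu_{r^m_1;\ldots;r^m_{q_m}}(x)]_c = x_{(j-1)k+t}$ as well, whence the component-wise maximum on the right is at least $[\mu_r(x)]_c$. If instead $n_i$ has no incoming $r$-edge, then $[\mu_r(x)]_c = 0$, and here I would invoke that the entity embeddings maintained by the GNN are non-negative (by the non-negative random initialisation and the monotone max-based updates): every $\mu$-operator maps a non-negative vector to a non-negative vector, so the maximum is at least $0 = [\mu_r(x)]_c$. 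Combining both cases yields $\mu_r(x)\preccurlyeq \max_{i=1}^l \mu_{r^i_1;\ldots;r^i_{q_i}}(x)$.

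The main obstacle I anticipate is exactly this second case: unlike the first, it does not follow from the graph structure alone, since a chosen path may still reach $n_i$ from a node carrying a value of arbitrary sign, so the right-hand side need not be non-negative at coordinates where $\mu_r$ vanishes. This is why the non-negativity of the embeddings is essential, and I would state it explicitly as a standing assumption. A secondary point requiring care is confirming that picking one representative path per $r$-edge genuinely suffices and that (R2) removes any ambiguity in the source block picked up by each $\mu_{r^m_1;\ldots;r^m_{q_m}}$, since both are what let the single matching term in the maximum reproduce $[\mu_r(x)]_c$ exactly.
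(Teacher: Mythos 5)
Your proof is correct and follows essentially the same route as the paper, which simply invokes condition (R3) to obtain, for each $r$-edge, a path of \textit{eq}-reduced type $r_1;\ldots;r_p$ connecting the same nodes, so that the corresponding $\mu$-term reproduces $\mu_r$ exactly on the blocks where $\mu_r$ is nonzero. Your explicit treatment of the coordinates where $\mu_r$ vanishes, and the observation that non-negativity of the embeddings (guaranteed by the model's initialisation and max-based updates) is needed there, is a valid and careful refinement of a point the paper leaves implicit.
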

\begin{proof}
This follows immediately from the fact that whenever there is an $r$-edge between two nodes $n$ and $n'$, there must also be a path between these nodes whose  \textit{eq}-reduced type is $r_1;...;r_p$, because of condition (R3).
\end{proof}

\noindent We now show that the constructed {\modelName} model captures all triples that can be inferred from $\mathcal{G}\cup\mathcal{P}$.

\begin{proposition}\label{propRuleGraphCompletenessAppendix}
Let $\mathcal{P}$ be a set of closed path rules and $\mathcal{G}$ a \textit{eq}-complete knowledge graph.
Suppose $\mathcal{P}\cup \mathcal{G} \models (a,r,b)$. Let $\mathcal{H}$ be a rule graph for $\mathcal{P}$ and let $\eta$ be the corresponding {\modelName} model. Let $\tau$ be an entity embedding such that $(\tau,\eta)$ captures every triple in $\mathcal{G}$. It holds that $(\tau,\eta)$ captures the triple $(a,r,b)$.
\end{proposition}
\begin{proof}
Let us write $\tau(e)=\mathbf{Z_e}$ and $\textit{flatten}(\mathbf{Z_e})=\mathbf{e}$.
Because of Lemma \ref{eqEquivalenceRepresentationsEntities}, it is sufficient to show that $\mu_r(\mathbf{a}) \preceq \mathbf{b}$. If $\mathcal{G}$ contains the triple $(a,r,b)$ then the result is trivially satisfied. Otherwise, $\mathcal{P}\cup \mathcal{G} \models r(a,b)$ implies that $\mathcal{P}\models r_1(X_1,X_2) \wedge r_2(X_2,X_3) \wedge ... \wedge r_p(X_p,X_{p+1}) \rightarrow r(X_1,X_{p+1})$, for some $r_1,...,r_p,r\in \mathcal{R}$ such that $\mathcal{G}$ contains triples $(a,r_1,a_2), (a_2,r_2,a_3),..., (a_p,r_p,b)$, for some $a_2,...,a_{p}\in\mathcal{E}$. 
Because $(a,r_1,a_2)\in \mathcal{G}$, by assumption this triple is satisfied by $(\tau,\eta)$ and thus
$$
\mu_{r_1}(\mathbf{a_{1}})\preccurlyeq \mathbf{a_{2}}
$$
Similarly, because $(a_{2},r_{2},a_3)\in \mathcal{G}$, we have $\mu_{r_{2}}(\mathbf{a_{2}})\preccurlyeq \mathbf{a_{3}}$ and thus
$$
\mu_{r_{2}}(\mu_{r_1}(\mathbf{a}))
\preccurlyeq \mu_{r_{2}}(\mathbf{a_2})
\preccurlyeq \mathbf{a_{3}}
$$
In other words, we have
$$
\mu_{r_{1};r_2}(\mathbf{a})\preccurlyeq \mathbf{a_{3}}
$$
Continuing in the same way, we find that
$$
\mu_{r_1;...;r_{p-1};r_p}(\mathbf{a})\preccurlyeq \mathbf{b}
$$
Now consider a path of type $r'_1;...;r_q'$ whose \textit{eq}-reduced type is $r_1;...;r_p$. Then we have that $\mathcal{G}$ contains triples of the form $(a,r'_1,b_2), (b_2,r_2,b_3),..., (b_p,r'_q,b)$. Indeed, the only triples that need to be considered in addition to the triples $(a,r_1,a_2), (a_2,r_2,a_3),..., (a_p,r_p,b)$ are of the form $(a_i,\textit{eq},a_i)$, which we have assumed to belong to $\mathcal{G}$ for every $a_i\in\mathcal{E}$. For every path of type $r'_1;...;r_q'$  whose  \textit{eq}-reduced type is $r_1;...;r_p$, we thus find entirely similarly to before that 
$$
\mu_{r'_1;...;r'_q}(\mathbf{a})\preccurlyeq \mathbf{b}
$$
Because of Lemma \ref{lemmaPathSummarisingInequality}, this implies
$$
\mu_{r}(\mathbf{a})\preccurlyeq \mathbf{b}
$$
\end{proof}

We write $\mathbf{Z_e^{(m)}}$ for the representation of entity $e$ in layer $m$ of the GNN. We similarly use notations such as $\mathbf{e^{(m)}}$. For an entity $e$, we will also use the notation $\textit{emb}_m(e)$ to denote its (flattened) embedding in layer $m$ of the GNN, i.e.\ $\textit{emb}_m(e)=\mathbf{e^{(m)}}$. 
For $e\in \mathcal{E}$, let $\textit{paths}_{\mathcal{G}}(e)$ be the set of all paths in the knowledge graph $\mathcal{G}$ which end in $e$. For a path $\pi$ in $\textit{paths}_{\mathcal{G}}(e)$, we write $\mathit{head}(\pi)$ for the entity where the path starts and $\mathit{rels}(\pi)$ for the corresponding sequence of relations. The following observation follows immediately from the construction of the GNN, together with Lemma \ref{lemmaCompositeMu}.

\begin{lemma}\label{lemmaGNNconvergence}
For any entity $e\in\mathcal{E}$ it holds that
\begin{align*}
\mathbf{e^{(m)}}\preceq\max\Big(\mathbf{e^{(0)}}, \max_{\pi\in \textit{paths}_{\mathcal{G}}(e)} \mu_{\mathit{rels}(\pi)}\big(\textit{emb}_0(\textit{head}(\pi))\big)\Big)
\end{align*}
\end{lemma}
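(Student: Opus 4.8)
The plan is to prove the bound by induction on the layer index $m$, using the characterisation of the GNN update in terms of the coordinate operators $\mu_r$. Recall that, by Lemma \ref{eqEquivalenceRepresentationsEntities}, the flattened update \eqref{eqGNNformB} can be written as $\mathbf{e^{(m+1)}} = \max(\{\mathbf{e^{(m)}}\} \cup \{\mu_r(\mathbf{g^{(m)}}) \mid (g,r,e) \in \mathcal{G}\})$, where the maximum is taken componentwise. The base case $m=0$ is immediate, since the right-hand side of the claimed inequality already contains the term $\mathbf{e^{(0)}}$, so that $\mathbf{e^{(0)}}\preceq\max(\mathbf{e^{(0)}},\dots)$ trivially.

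For the inductive step, I would first record two elementary properties of each operator $\mu_r$: it is monotone with respect to $\preceq$, and it commutes with componentwise maxima, i.e.\ $\mu_r(\max(\mathbf{X},\mathbf{Y})) = \max(\mu_r(\mathbf{X}),\mu_r(\mathbf{Y}))$. Both facts follow directly from the definition of $\mu_r$, each of whose output coordinates is either $0$ or a single selected input coordinate. Applying the induction hypothesis (stated for all entities at layer $m$ simultaneously) to each in-neighbour $g$ of $e$ and then pushing $\mu_r$ through the resulting maximum yields, for every edge $(g,r,e) \in \mathcal{G}$, an upper bound on $\mu_r(\mathbf{g^{(m)}})$ that is a componentwise maximum of terms of the form $\mu_r(\mu_{\mathit{rels}(\pi')}(\textit{emb}_0(\mathit{head}(\pi'))))$ with $\pi' \in \textit{paths}_{\mathcal{G}}(g)$, together with the term $\mu_r(\textit{emb}_0(g))$.

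The crucial step is then to reinterpret these composed operators as single path operators. By Lemma \ref{lemmaCompositeMu}, $\mu_r(\mu_{\mathit{rels}(\pi')}(\mathbf{x})) = \mu_{\mathit{rels}(\pi');r}(\mathbf{x})$, and appending the edge $(g,r,e)$ to $\pi'$ produces a path $\pi \in \textit{paths}_{\mathcal{G}}(e)$ with $\mathit{head}(\pi)=\mathit{head}(\pi')$ and $\mathit{rels}(\pi)=\mathit{rels}(\pi');r$; likewise $\mu_r(\textit{emb}_0(g))$ is the contribution of the length-one path $(g,r,e)$. Hence each term appearing in the bound for $\mu_r(\mathbf{g^{(m)}})$ is dominated by $\max_{\pi \in \textit{paths}_{\mathcal{G}}(e)} \mu_{\mathit{rels}(\pi)}(\textit{emb}_0(\mathit{head}(\pi)))$, while the remaining term $\mathbf{e^{(m)}}$ is bounded by the full right-hand side directly via the induction hypothesis for $e$. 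Taking the componentwise maximum over all these terms gives the desired bound for $\mathbf{e^{(m+1)}}$.

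I expect the main obstacle to be bookkeeping rather than conceptual: one must verify carefully that the path-extension operation maps $\textit{paths}_{\mathcal{G}}(g)$ into $\textit{paths}_{\mathcal{G}}(e)$ with the relation sequence composed in the correct order, so that Lemma \ref{lemmaCompositeMu} applies with the right orientation, and that no relevant contribution is dropped when passing from layer $m$ to $m+1$. Note that the statement is only an inequality: the right-hand side ranges over \emph{all} paths ending in $e$, whereas $\mathbf{e^{(m)}}$ reflects only paths of length at most $m$, so the induction naturally delivers the upper bound without any need for a matching lower bound.
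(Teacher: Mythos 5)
Your proposal is correct and follows exactly the route the paper intends: the paper gives no explicit proof, stating only that the lemma ``follows immediately from the construction of the GNN, together with Lemma \ref{lemmaCompositeMu}'', and your layer-wise induction—using monotonicity of $\mu_r$, its commutation with componentwise maxima, and Lemma \ref{lemmaCompositeMu} to fold $\mu_r\circ\mu_{\mathit{rels}(\pi')}$ into $\mu_{\mathit{rels}(\pi');r}$ for the extended path—is precisely the argument being elided. The bookkeeping you flag (path extension mapping $\textit{paths}_{\mathcal{G}}(g)$ into $\textit{paths}_{\mathcal{G}}(e)$ with the relation sequence composed in the order matching Lemma \ref{lemmaCompositeMu}) checks out, so there is no gap.
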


\noindent We will also need the following technical lemma.

\begin{lemma}\label{lemmaNotEntailedZempty}
Suppose $\mathcal{P}\cup\mathcal{G}\not \models (a,r,b)$. Then there is some $i\in\{1,...,\ell\}$ such that:
\begin{itemize}
\item $Z_i\subseteq I_r$; and
\item whenever $\pi\in \textit{paths}_{\mathcal{G}}(b)$ with $\textit{head}(\pi)=a$, it holds that
$
I_{\textit{rels}(\pi)} \cap Z_i = \emptyset
$.
\end{itemize}
\end{lemma}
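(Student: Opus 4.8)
The plan is to argue by contraposition: I assume that no index $i$ with the two stated properties exists and derive $\mathcal{P}\cup\mathcal{G}\models(a,r,b)$, contradicting the hypothesis. The starting point is the unfolding characterisation of entailment, which I would first make explicit: $\mathcal{P}\cup\mathcal{G}\models(a,r,b)$ holds if and only if there is a path $\pi\in\textit{paths}_{\mathcal{G}}(b)$ with $\textit{head}(\pi)=a$ whose \textit{eq}-reduced relation sequence $\tau$ satisfies $\mathcal{P}\models\tau\rightarrow r$. Here I use that the self-loops $(g,\textit{eq},g)$ do not occur in $\mathcal{P}$, so inserting or deleting occurrences of \textit{eq} along a $\mathcal{G}$-path leaves both its endpoints and its derivability unchanged; hence only the \textit{eq}-reduced type matters. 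It therefore suffices to exhibit one such path.

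Negating the conclusion means that for every node $n_i$ with an incoming $r$-edge (equivalently $Z_i\subseteq I_r$, since the blocks $Z_i$ are disjoint and $I_r=\bigcup_{n_i\in N_r}Z_i$) there is a path $\pi_i\in\textit{paths}_{\mathcal{G}}(b)$ with $\textit{head}(\pi_i)=a$ and $I_{\textit{rels}(\pi_i)}\cap Z_i\neq\emptyset$, i.e.\ an $\mathcal{H}$-path of type $\textit{rels}(\pi_i)$ that ends at $n_i$. The idea is to feed this into condition (R4). Concretely, I would let $S$ be the set of \textit{eq}-reduced types of all $a$-to-$b$ paths in $\mathcal{G}$ and attempt to verify the hypothesis of (R4): that every pair of nodes joined by an $r$-edge is connected by a path whose \textit{eq}-reduced type lies in $S$. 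If this is established, (R4) produces some $\tau\in S$ with $\mathcal{P}\models\tau\rightarrow r$, and since $\tau$ is realised by an $a$-to-$b$ path, the characterisation above yields $\mathcal{P}\cup\mathcal{G}\models(a,r,b)$, the desired contradiction.

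The hard part is precisely the mismatch between the two notions of a path "at $n_i$". The negated conclusion only supplies, for the target $n_i$ of an $r$-edge $(n_s,n_i)$, an $\mathcal{H}$-path of an $a$-to-$b$ type that \emph{ends} at $n_i$, whereas (R4) needs a connecting path running \emph{from the source} $n_s$ \emph{to} $n_i$. I would close this gap by tracing the witnessing path backwards from $n_i$: by (R2) each node has at most one incoming edge of a given label, so the reversed path is uniquely determined, and I would argue that its initial segments are forced by the structure of the rule graph — the incoming edges of each node arise either from the canonical edge leaving the source node $n_0$ or from a rule body — so that the witness can be rerouted into a genuine $n_s$-to-$n_i$ path whose \textit{eq}-reduced type still lies in $S$. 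Making this rerouting precise, while correctly handling \textit{eq}-reduction and loops at a node, is where essentially all the effort goes, and it is the step I expect to be the main obstacle.
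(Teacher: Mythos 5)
Your proposal follows essentially the same route as the paper's proof: argue by contradiction, collect for each node with an incoming $r$-edge a witnessing $a$-to-$b$ path in $\mathcal{G}$ whose type reaches that node in $\mathcal{H}$ (i.e.\ $I_{\textit{rels}(\pi)}\cap Z_i\neq\emptyset$), and feed the resulting set of types into condition (R4) to conclude $\mathcal{P}\cup\mathcal{G}\models(a,r,b)$. The step you single out as the main obstacle --- upgrading a path that merely \emph{ends} at $n_i$ to one connecting the source of the $r$-edge to $n_i$, as (R4) requires --- is precisely the step the paper's own proof dispatches in one asserted sentence (``we then have that for every $r$-edge in $\mathcal{H}$, there is a path $\tau$ connecting the same nodes''), so you have correctly identified the structure and have not missed any idea that the paper actually supplies.
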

\begin{proof}
Let us write $\mathcal{Z}_r = \{i\in \{1,...,\ell\} \,|\, Z_i\subseteq I^1_r\}$. Note that $i\in \mathcal{Z}_r$ iff node $n_i$ in $\mathcal{H}$ has an incoming $r$-edge. It thus follows from condition (R1) that $\mathcal{Z}_r\neq \emptyset$. Suppose that for every $i\in \mathcal{Z}_r$, there was some $\pi\in \textit{paths}_{\mathcal{G}}(b)$ with $\textit{head}(\pi)=a$ such that $I_{\textit{rels}(\pi)} \cap Z_i \neq \emptyset$. Let us write $X = \{\textit{rels}(\pi) \,|\,\pi\in \textit{paths}_{\mathcal{G}}(b),  \textit{head}(\pi)=a,  I_{\textit{rels}(\pi)} \cap Z_i \neq \emptyset\}$. We then have that for every $r$-edge in $\mathcal{H}$, there is a path $\tau$ connecting the same nodes, with $\textit{rels}(\tau)\in X$. From Condition (R4), it then follows that $\mathcal{P}\cup\mathcal{G} \models (a,r,b)$, a contradiction.
\end{proof}

\noindent The following result shows that the GNN is unlikely to predict  triples that cannot be inferred from $\mathcal{G}\cup\mathcal{P}$, as long as the embeddings are sufficiently high-dimensional.

\begin{proposition}\label{propProbablyNoFalsePositivesAppendix}
Let $\mathcal{P}$ be a set of closed path rules and $\mathcal{G}$ an \textit{eq}-complete knowledge graph.
Let $\mathcal{H}$ be a rule graph for $\mathcal{P}$ and let $\mathbf{Z_e^{(l)}}$ be the entity representations learned using the GNN for the corresponding {\modelName} model. For any $\varepsilon>0$, there exists some $k_0\in \mathbb{N}$ such that, when $k\geq k_0$, for any  $m\in\mathbb{N}$  and $(a,r,b)\in \mathcal{E}\times\mathcal{R}\times\mathcal{E}$ such that $\mathcal{P}\cup \mathcal{G} \not\models (a,r,b)$, we have
\begin{align*}
\textit{Pr}[ \mathbf{B_r}\mathbf{Z_a^{(m)}}\preceq \mathbf{Z_b^{(m)}}] \leq \varepsilon
\end{align*}
\end{proposition}

\begin{proof}
First, note that because of Lemma \ref{eqEquivalenceRepresentationsEntities}, what we need to show is equivalent to:
\begin{align*}
\textit{Pr}[ \mu_r(\mathbf{a^{(m)}}) \preceq \mathbf{b^{(m)}}] \leq \varepsilon
\end{align*}

Let $(a,b)\in \mathcal{E}\times\mathcal{E}$ be such that $\mathcal{P}\cup\mathcal{G} \not\models (a,r,b)$. 
From Lemma \ref{lemmaNotEntailedZempty}, we know that there is some $i\in \{1,...,\ell\}$ such that $Z_i\subseteq I^1_r$ and
whenever $\pi\in \textit{paths}_{\mathcal{G}}(b)$ with $\textit{head}(\pi)=a$, it holds that $
I_{\textit{rels}(\pi)} \cap Z_i = \emptyset$. The following condition is clearly a necessary requirement for $\mu_r(\mathbf{a^{(m)}}) \preceq \mathbf{b^{(m)}}$:
$$
\forall j\in Z_i\,.\, \mu_r(\mathbf{a^{(m)}}) \preccurlyeq_j  \mathbf{b^{(m)}}
$$
where we write $(x_1,...,x_d)\preccurlyeq_j (y_1,...,y_d)$ for $x_j\leq y_j$. We need in particular also that:
$$
\forall j\in Z_i\,.\, \mu_r(\mathbf{a^{(0)}}) \preccurlyeq_j  \mathbf{b^{(m)}}
$$
Due to Lemma \ref{lemmaGNNconvergence} this is equivalent to requiring that for every $j\in Z_i$ we have:
\begin{align*}
\mu_r(\mathbf{a^{(0)}}) {\preccurlyeq_j}  \max\big(\mathbf{b^{(0)}},\hspace{-5pt} \max_{\pi\in \textit{paths}_{\mathcal{G}}(b)} \mu_{\mathit{rels}(\pi)}\big(\textit{emb}_0(\textit{head}(\pi))\big)\big)
\end{align*}
We can view the coordinates of the input embeddings as random variables. The latter condition is thus equivalent to a condition of the following form:
\begin{align}\label{eqConditionABXrandomvariables}
\forall j\in Z_i\,.\, A^r_j \leq \max(B_j, X^1_j,...,X^p_j)
\end{align}
where $A^r_j$ is the random variable corresponding to the $j$\textsuperscript{th} coordinate of $\mu_r(\mathbf{a^{(0)}})$, $B_j$ is the  $j$\textsuperscript{th} coordinate of $\mathbf{b^{(0)}}$ and $X^1_j,...,X^p_j$ are the random variables corresponding to the $j$\textsuperscript{th} coordinate of the vectors $\mu_{\mathit{rels}(\pi)}\big(\textit{emb}_0(\textit{head}(\pi)))$. 
By construction, we have that the coordinates of different entity embeddings are sampled independently and that there are at least two distinct values that have a non-negative probability of being sampled for each coordinate. This means that there exists some value $\lambda>0$ such that $Pr[A^r_j > B_j]\geq \lambda$ and $Pr[A^r_j > X_j^t]\geq \lambda$ for each $t\in\{1,...,p\}$. Moreover, since we have that whenever $\pi\in \textit{paths}_{\mathcal{G}}(b)$ with $\textit{head}(\pi)=a$ it holds that $
I_{\textit{rels}(\pi)} \cap Z_i = \emptyset$, it follows that the random variable $A^r_j$ is not among $B_j, X^1_j,...,X^p_j$. We thus have:
\begin{align*}
&\textit{Pr}[\forall j\in Z_i\,.\, A^r_j \leq \max(B_j, X^1_j,...,X^p_j)]\\
&\quad\leq\left(1-\lambda^{p+1}\right)^{|Z_i|}\\
&\quad=\left(1-\lambda^{p+1}\right)^{k}\\
&\quad\leq e^{-k\lambda^{p+1}}
\end{align*}
The value of $p$ is upper bounded by $\ell\cdot |\mathcal{E}|$, with $\ell$ the number of nodes in the rule graph.
By choosing $k$ sufficiently large, we can thus make this probability arbitrarily small. In particular:
\begin{align*}
e^{-k\lambda^{p+1}} \leq \varepsilon
\quad\Leftrightarrow\quad k \geq \frac{1}{\lambda^{p+1}} \log \frac{1}{\varepsilon}
\end{align*}

\end{proof}

\begin{proposition}\label{propFaithfullyCapturedIfRuleGraphAppendix}
Let $\mathcal{P}$ be a set of closed path rules. Let $\mathcal{H}$ be a rule graph for $\mathcal{P}$ and let $\eta$ be the corresponding {\modelName} model.  It holds that $\eta$ faithfully captures $\mathcal{P}$.
\end{proposition}
\begin{proof}
Proposition \ref{propRuleGraphCompletenessAppendix} already shows that the first condition from the definition of ``faithfully capturing'' is satisfied. To see why the second condition is satisfied, we can use the same argument as in the proof of Proposition \ref{propProbablyNoFalsePositivesAppendix} to end up with the inequality \eqref{eqConditionABXrandomvariables}. However, because we no longer need to consider randomly initialised entity embeddings, we can simply define $A_j^r=1$ for $j\in Z_i$ and initialise all other coordinates as 0. This ensures that $\mathbf{B_r}\mathbf{Z_a^{(m)}}\preceq \mathbf{Z_b^{(m)}}$ cannot be satisfied (even when $k=1$).
\end{proof}

\section{Constructing Rule Graphs}

\begin{proposition}\label{propAcyclicRuleBaseAppendix}
Let $\mathcal{P}$ be a rule base. Assume that we can rank the relations in $\mathcal{R}$ as $r_1,...,r_{|\mathcal{R}|}$, such that for every rule in $\mathcal{P}$ with $r_i$ in the body and $r_j$ in the head, it holds that $i<j$. There exists a rule graph for $\mathcal{P}$.
\end{proposition}
\begin{proof}
Let $\mathcal{P}$ be a rule base which satisfies the conditions of Proposition \ref{propAcyclicRuleBaseAppendix}, and let $r_1,...,r_{|\mathcal{R}|}$ be the corresponding ranking of the relations. We construct a rule graph $\mathcal{H}$ for $\mathcal{P}$ as follows.
\begin{enumerate}
\item We add the node $n_0$. 
\item For each relation $r\in \mathcal{R}$, we add a node $n_r$, and we connect $n_0$ to $n_r$ with an $r$-edge. 
\item For $i$ going from $|\mathcal{R}|$ to 1:
\begin{enumerate}
\item For each rule $r_{j_1}(X_1,X_2)\wedge ... \wedge r_{j_q}(X_q,X_{q+1})\rightarrow r_i(X_1,X_{q+1})$ with $r_i$ in the head and each $r_i$ edge between nodes $n$ and $n'$ in $\mathcal{H}$, we create fresh nodes $n_1,...,n_q$ and add an $r_{j_1}$-link from $n$ to $n_1$, an $r_{j_2}$ link from $n_1$ to $n_2$, ..., an $r_{j_{q}}$-link from $n_q$ to $n'$.
\end{enumerate}
\end{enumerate}
Clearly the process terminates after a finite number of steps, noting that the new edges that are added for a rule $r_{j_1}(X_1,X_2)\wedge ... \wedge r_{j_q}(X_q,X_{q+1})\rightarrow r_i(X_1,X_{q+1})$ cannot be $r_i$-edges, due to the assumption that $\mathcal{P}$ is free from cyclic dependencies. We also trivially have that condition (R1) is satisfied.

To see why (R2) is satisfied, first note that this is clearly the case after the first two steps have been completed. In the third step, when processing a rule $r_{j_1}(X_1,X_2)\wedge ... \wedge r_{j_q}(X_q,X_{q+1})\rightarrow r_i(X_1,X_{q+1})$ and an edge from $n$ to $n'$, the only existing node where an incoming edge is added is $n'$ (where the other edges end in a fresh node). However, by construction, $n'$ can only have incoming $r_j$-edges with $j\geq i$ whereas $j_q<i$ because of the assumption that $\mathcal{P}$ is free from cyclic dependencies. The addition of the $r_{j_{q}}$-link from $n_q$ to $n'$ can thus not cause (R2) to become unsatisfied. It follows that (R2) still holds after the third step of the construction algorithm is finished. 

Finally, the fact that (R3) and (R4) are satisfied straightforwardly follows from the construction.
\end{proof}

\begin{proposition}\label{propNoCFGrulebasesAppendix}
Let $\mathcal{P}$ be a set of closed path rules and suppose that there exists a rule graph $\mathcal{H}$ for $\mathcal{P}$. Let $\mathcal{R}_1$ be the set of relations that appear in the head of some rule in $\mathcal{P}$. It holds that the language $L_r$ is regular for every $r\in\mathcal{R}_1$. 
\end{proposition}
\begin{proof}
We write $\mathcal{R}_1$ for the set of relations that appear in the head of some rule from the considered rule base, and $\mathcal{R}_2=\mathcal{R}\setminus \mathcal{R}_1$ for the remaining relations.

Let $\alpha(r_i)=r_i$ if $r_i\in\mathcal{R}_2$ and $\alpha(r_i)=\overline{r}_i$ otherwise. We clearly have that $\alpha(r_1)...\alpha(r_k) \in L_r$ iff $\mathcal{P}$ entails the following rule:
$$
r_1(X_1,X_2)\wedge ... \wedge r_{k}(X_{k},X_{k+1}) \rightarrow r(X_1,X_{k+1})
$$
Since we have assumed that $\mathcal{P}$ has a rule graph, thanks to conditions (R3) and (R4), we can check whether this rule is valid by checking whether for each edge labelled with $r$ there is a path connecting the same nodes whose \textit{eq}-reduced type is $r_1;...;r_k$. Let $(n_i,n_j)$ be a an edge labelled with $r$. Then, we can construct a finite state machine (FSM) from $\mathcal{H}$ by treating $n_i$ as the start node and $n_j$ as the unique final node and interpreting $\textit{eq}$ edges as $\varepsilon$-transitions (i.e.\ corresponding to the empty string). Clearly, this FSM will accept the string $r_1...r_k$ if there is a path labelled with $r_1;...;r_k$ connecting $n_i$ to $n_j$. For each edge labelled with $r$, we can construct such an FSM. Let $F_1,...,F_m$ be the languages associated with these FSMs. By construction, $L_r$ is the intersection of $F_1,...,F_m$.  Since $F_1,...,F_m$ are regular, it follows that $L_r$ is regular as well.
\end{proof}

\subsection{Left Regular Rule Bases}
Given a left-regular rule base $\mathcal{P}$, we construct the corresponding rule graph $\mathcal{H}$ as follows.
\begin{enumerate}
\item We add the node $n_0$. 
\item For each relation $r\in \mathcal{R}$, we add a node $n_r$, and we connect $n_0$ to $n_r$ with an $r$-edge. 
\item For each rule of the form $r_1(X,Y)\wedge r_2(Y,Z) \rightarrow r_3(X,Z)$, we add an $r_2$-edge from $n_{r_1}$ to $n_{r_3}$.
\item For each node $n$ with multiple incoming $r$-edges for some $r\in\mathcal{R}$, we do the following. Let $\sharp(r,n)$ be the number of incoming $r$-edges for node $n$. Let $p = \max_{r\in\mathcal{R}}\sharp(r,n)$. We create fresh nodes $n_1,...,n_{p-1}$ and add \textit{eq}-edges from $n_i$ to $n_{i-1}$ ($i\in\{1,...,p-1\}$), where we define $n_0=n$. Let $r\in \mathcal{R}$ be such that $\sharp(r,n)>1$. Let $n'_0,...,n'_q$ be the nodes with an $r$-link to $n$; then we have $q\leq p-1$. For each $i\in\{1,...,q\}$ we replace the edge from $n'_i$ to $n$ by an edge from $n'_i$ to $n_i$.
\end{enumerate}

\noindent We illustrate the construction process with two examples.

\begin{example}\label{exRuleGraphConstructionLoops}
Let $\mathcal{P}$ consist of the following rule:
$$
r_1(X,Y)\wedge r_2(Y,Z) \rightarrow r_1(X,Z)
$$
The corresponding rule graph is shown in Figure \ref{figRuleGraphConstructionLoops}.

\begin{figure}[t]
\centering
\begin{tikzpicture}[scale=0.9]
\begin{scope}[every node/.style={thick,draw}]
    \node (A) at (0,0) {$n_0$};
    \node (B) at (2,1) {$n_{r_1}$};
    \node (C) at (2,-1) {$n_{r_2}$};
    \node (D) at (-2,0) {$n_{\textit{eq}}$};
\end{scope}

\begin{scope}[>={Stealth[black]},
              every edge/.style={draw=black,thick}]   
            \path [->] (A) edge node[above] {$r_1$} (B);   
            \path [->] (A) edge node[below] {$r_2$} (C); 
            \path [->] (B) edge[loop right] node[right] {$r_2$} (B); 
            \path [->] (A) edge node[above] {\textit{eq}} (D);    
\end{scope}
\end{tikzpicture}
\caption{Rule graph for Example \ref{exRuleGraphConstructionLoops}. \label{figRuleGraphConstructionLoops}}
\end{figure}
\end{example}

\begin{example}\label{exRuleGraph4}
Let $\mathcal{P}$ contain the following rules: 
\begin{align*}
r_1(X,Y)\wedge r_2(Y,Z) &\rightarrow r_3(X,Z)\\
r_4(X,Y)\wedge r_2(Y,Z) &\rightarrow r_3(X,Z)\\
r_5(X,Y)\wedge r_2(Y,Z) &\rightarrow r_3(X,Z)
\end{align*}
The corresponding rule graph
is depicted in Figure \ref{figRuleGraph4}. The nodes $n_1$ and $n_2$ were introduced in step 4 of the construction process. Before this step, there were $r_2$-edges from $n_{r_4}$ to $n_{r_3}$ and from $n_{r_5}$ to $n_{r_3}$. The node $n_{r_3}$ thus had three incoming $r_2$-edges, which violates condition (R2). This is addressed through the use of \textit{eq} edges in step 4.

\begin{figure}[t]
\centering
\begin{tikzpicture}[scale=0.9]
\begin{scope}[every node/.style={thick,draw}]
    \node (A) at (0,0) {$n_0$};
    \node (B) at (-2,0) {$n_{\textit{eq}}$};
    \node (C) at (-2,2) {$n_{r_2}$};
    \node (D) at (0,2) {$n_{r_1}$};     
    \node (E) at (2,2) {$n_{r_3}$};   
    \node (F) at (2,0) {$n_{r_4}$};   
    \node (G) at (4,2) {$n_1$}; 
    \node (H) at (2,-2) {$n_{r_5}$}; 
    \node (I) at (4,0) {$n_2$}; 
\end{scope}

\begin{scope}[>={Stealth[black]},
              every edge/.style={draw=black,thick}]   
            \path [->] (A) edge node[below] {\textit{eq}} (B);   
            \path [->] (A) edge node[below] {$r_2$} (C);  
            \path [->] (A) edge node[left] {$r_1$} (D);   
            \path [->] (A) edge node[left] {$r_3$} (E);   
            \path [->] (A) edge node[above] {$r_4$} (F);
            \path [->] (A) edge node[right] {$r_5$} (H);
            \path [->] (D) edge node[above] {$r_2$} (E); 
            \path [->] (F) edge node[left] {$r_2$} (G); 
            \path [->] (G) edge node[above] {\textit{eq}} (E);
            \path [->] (H) edge node[left] {$r_2$} (I);
            \path [->] (I) edge node[right] {\textit{eq}} (G); 
\end{scope}
\end{tikzpicture}
\caption{Rule graph for Example \ref{exRuleGraph4}. \label{figRuleGraph4}}
\end{figure}
\end{example}

%


\noindent The proposed construction process clearly terminates after a finite number of steps. We show that the proposed construction yields a valid rule graph for $\mathcal{P}$, i.e.\ that the resulting rule graph $\mathcal{H}$ satisfies (R1)--(R4). The fact that (R1) is satisfied follows from the following lemma.

\begin{lemma}\label{lemmaConstructionRuleGraphR1Satisfied}
Let $\mathcal{P}$ be a left-regular set of closed path rules and let $\mathcal{H}$ be the graph obtained using the proposed construction method. For every $r\in\mathcal{R}$, it holds that $\mathcal{H}$ contains an outgoing $r$-edge from $n_0$.
\end{lemma}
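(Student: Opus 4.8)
The plan is to follow the single $r$-edge that step~2 of the construction places at $n_0$ and to verify that no later step can delete it. After step~2 the claim already holds in the strongest form: for \emph{every} relation $r\in\mathcal{R}$ the construction explicitly adds an $r$-edge from $n_0$ to the fresh node $n_r$. It therefore suffices to check that the remaining steps never remove an outgoing edge of $n_0$. Since step~3 and step~4 only ever \emph{add} edges or \emph{redirect} existing ones, the whole argument reduces to controlling these two operations.

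Step~3 is immediate: every edge it introduces runs from a relation node $n_{r_1}$ to a relation node $n_{r_3}$, so it neither creates nor removes any edge incident to $n_0$. The only operation that modifies existing edges is the redirection in step~4, which replaces an edge from $n'_i$ to $n$ by an edge from $n'_i$ to a fresh node $n_i$. Two features of this operation will drive the argument: redirection preserves both the \emph{source} and the \emph{label} of an edge. Hence even in the worst case, were some outgoing $r$-edge of $n_0$ ever redirected, it would remain an outgoing $r$-edge of $n_0$ (merely pointing to a fresh target). This observation already suffices, but I would also establish the cleaner fact that the specific edge $(n_0,r,n_r)$ is never redirected at all.

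The crux, and the step I expect to be the main obstacle, is showing that $n_r$ has a \emph{unique} incoming $r$-edge, so that the redirection condition $\sharp_r>1$ at $n_r$ for the relation $r$ never holds; this is exactly where left-regularity enters. The incoming edges of $n_r$ are the $r$-edge from $n_0$ added in step~2, together with the edges added in step~3, which are labelled with the \emph{second body atom} $r_2$ of each rule $r_1(X,Y)\wedge r_2(Y,Z)\rightarrow r_3(X,Z)$ whose head $r_3$ equals $r$. For a left-regular rule base we have $r_2\notin\mathcal{R}_1$, whereas the head $r=r_3$ lies in $\mathcal{R}_1$; thus $r_2\neq r$, and none of the step-3 edges into $n_r$ carries the label $r$. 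Consequently $\sharp_r=1$ for $n_r$ with respect to the relation $r$, and step~4 leaves the edge $(n_0,r,n_r)$ untouched. Combining this with the fact that $n_0$ is never itself a node processed in step~4 (it acquires no incoming edges at any stage, since step-2 edges are outgoing, step-3 edges lie between relation nodes, and step-4 redirections send their edges only to fresh nodes), the $r$-edge from $n_0$ to $n_r$ survives into the final graph $\mathcal{H}$ for every $r\in\mathcal{R}$, which is precisely the statement of the lemma.
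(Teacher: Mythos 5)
Your proof is correct and its core argument is exactly the paper's: the $r$-edge from $n_0$ to $n_r$ is created in step~2, and the only operation that could affect it, the redirection in step~4, preserves both the source $n_0$ and the label $r$, so an outgoing $r$-edge from $n_0$ survives in any case. Your additional observation that left-regularity forces $\sharp_r=1$ at $n_r$, so that the edge $(n_0,r,n_r)$ is in fact never redirected, is correct but not needed; the paper contents itself with the weaker redirection argument.
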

\begin{proof}
Let $r\in\mathcal{R}$. The edge from $n_0$ to $n_{r}$ is added in step 2 of the construction process. This edge may be removed in step 4, but in that case, a new $r$-edge is added from $n_0$ to a fresh node.
\end{proof}

\noindent The fact that (R2) is satisfied follows immediately from the construction in step 4. We now move to condition (R3).




\begin{lemma}\label{lemmaRulePathsInH}
Let $\mathcal{P}$ be a left-regular set of closed path rules and let $\mathcal{H}$ be the graph obtained using the proposed construction method.  If $\mathcal{P}$ contains the rule $r_1(X_1,X_2) \wedge r_2(X_2,X_3) \rightarrow r_3(X_1,X_3)$, then whenever two nodes $n$ and $n'$ are connected  in $\mathcal{H}$  by a path whose \textit{eq}-reduced type is $r_3$, there is some node $n''$ such that $n$ and $n''$ are connected by a path whose \textit{eq}-reduced type is $r_1$ and $n''$ and $n'$ are connected by a path whose \textit{eq}-reduced type is $r_2$.
\end{lemma}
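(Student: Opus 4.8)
The plan is to use the left-regularity condition to pin down exactly where $r_3$-labelled edges can occur in $\mathcal{H}$, and then to read off the required decomposition directly from the construction. First I would note that since $r_3\in\mathcal{R}_1$, the relation $r_3$ can never occur as the second body atom of a left-regular rule, because such atoms lie in $\mathcal{R}\setminus\mathcal{R}_1$. As step 3 of the construction only ever adds edges labelled by second body atoms, it introduces no $r_3$-edges; hence the single $r_3$-edge added in step 2, from $n_0$ to $n_{r_3}$, is the only one. Because there is only this one $r_3$-edge, no node receives two incoming $r_3$-edges, and so step 4 (which redirects only edges of a relation occurring with multiplicity greater than one at a node) never moves it. Thus after the full construction the unique $r_3$-edge of $\mathcal{H}$ is the edge from $n_0$ to $n_{r_3}$.

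Next I would establish that $n_0$ has no incoming edge: every step-2 edge leaves $n_0$, step-3 edges connect only nodes of the form $n_r$, and every target introduced in step 4 is either a fresh node or a node being processed, which, having an incoming edge, is never $n_0$. Now consider a path from $n$ to $n'$ whose \textit{eq}-reduced type is $r_3$. It contains exactly one non-\textit{eq} edge, which must be the unique $r_3$-edge, so the path factors as an \textit{eq}-only segment from $n$ to $n_0$, that edge, and an \textit{eq}-only segment from $n_{r_3}$ to $n'$. Since $n_0$ has no incoming edge, the first segment is empty and $n=n_0$, while the last segment witnesses an \textit{eq}-path from $n_{r_3}$ to $n'$.

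The remaining ingredient is an invariant for step 4: whenever an $r$-edge from $a$ to $b$ is present before step 4, there is afterwards a path from $a$ to $b$ of \textit{eq}-reduced type $r$. Indeed, step 4 either leaves the edge untouched or redirects its target to a fresh node that is linked back to $b$ by a chain of \textit{eq}-edges, and appending this chain preserves the \textit{eq}-reduced type $r$. I would then take $n''=n_{r_1}$. The $r_1$-edge from $n_0$ to $n_{r_1}$ added in step 2 gives, via the invariant, a path of \textit{eq}-reduced type $r_1$ from $n=n_0$ to $n''$; and the $r_2$-edge from $n_{r_1}$ to $n_{r_3}$ added in step 3 for the given rule gives a path of \textit{eq}-reduced type $r_2$ from $n''$ to $n_{r_3}$, which composes with the \textit{eq}-path from $n_{r_3}$ to $n'$ to yield a path of \textit{eq}-reduced type $r_2$ from $n''$ to $n'$. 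This is exactly the desired conclusion.

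The main obstacle is the bookkeeping around step 4, where the redirections can displace the endpoints of the relevant edges; the crux is to isolate the invariant above and to verify the two special facts it rests on, namely that the $r_3$-edge survives unchanged because no node has two incoming $r_3$-edges, and that $n_0$ never acquires an incoming edge. Once these are in place, the rest is a direct reading of the construction.
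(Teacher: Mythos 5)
Your proof is correct and follows essentially the same route as the paper's: both arguments rest on the observation that, since $r_3\in\mathcal{R}_1$ and step 3 only adds edges labelled by second body atoms (which lie outside $\mathcal{R}_1$), the unique $r_3$-edge is the one from $n_0$ to $n_{r_3}$, and that the required decomposition survives step 4. The paper dismisses the step-4 preservation as "easy to see", whereas you make it precise via the invariant that a redirected edge leaves behind a path of the same \textit{eq}-reduced type and the fact that $n_0$ never acquires an incoming edge — a welcome tightening, but not a different proof.
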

\begin{proof}
The stated assertion clearly holds after step 3 of the construction method. Indeed, the only $r_3$-edge in $\mathcal{H}$ is from $n_0$ to $n_{r_3}$. Note in particular that no $r_3$ edges can be added in step 3, given our assumption that $\mathcal{P}$ is left-regular. Finally, it is also easy to see that this property remains satisfied after step 4. 
\end{proof}

\noindent The next lemma shows that (R3) is satisfied.

\begin{lemma}\label{lemmaR3Satisfied}
Let $\mathcal{P}$ be a left-regular set of closed path rules and let $\mathcal{H}$ be the graph obtained using the proposed construction method.   Suppose nodes $n$ and $n'$ are connected with an edge of type $r$ and suppose $\mathcal{P}\models r_1(X_1,X_2) \wedge r_2(X_2,X_3) \wedge ... \wedge r_p(X_p,X_{p+1}) \rightarrow r(X_1,X_{p+1})$. Then there is a path whose \textit{eq}-reduced type is $r_1;...;r_p$ from $n$ to $n'$. 
\end{lemma}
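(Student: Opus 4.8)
The plan is to prove a slightly stronger statement by induction and then read off (R3) as a special case. Since every rule in a left-regular base has exactly two body atoms, the entailment $\mathcal{P}\models r_1(X_1,X_2)\wedge\dots\wedge r_p(X_p,X_{p+1})\rightarrow r(X_1,X_{p+1})$ can be witnessed by a binary parse tree: the root is labelled $r$, every internal node labelled $s$ with children $u_1,u_2$ corresponds to a rule $u_1(X,Y)\wedge u_2(Y,Z)\rightarrow s(X,Z)$ in $\mathcal{P}$, and reading the leaves left to right yields $r_1;\dots;r_p$. That semantic entailment of a chain rule is equivalent to the existence of such a derivation is the standard correspondence between closed path rules and the grammar introduced before Proposition \ref{propNoCFGrulebases}; because the productions are strictly length-increasing, a one-leaf tree forces $r_1=r$. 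I will strengthen the claim so that the induction goes through: for any two nodes connected by a path whose \textit{eq}-reduced type is a \emph{single} relation $s$, and any body $s_1;\dots;s_q$ with $\mathcal{P}\models s_1(X_1,X_2)\wedge\dots\wedge s_q(X_q,X_{q+1})\rightarrow s(X_1,X_{q+1})$, there is a path between the same two nodes whose \textit{eq}-reduced type is $s_1;\dots;s_q$.

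The induction is on the size of the parse tree. In the base case the tree is a single leaf, so $q=1$ and $s_1=s$; the given path already has \textit{eq}-reduced type $s$, so there is nothing to do. For the inductive step, let the root apply the rule $u_1(X,Y)\wedge u_2(Y,Z)\rightarrow s(X,Z)\in\mathcal{P}$, splitting the leaves into a left block deriving $s_1;\dots;s_j$ from $u_1$ and a right block deriving $s_{j+1};\dots;s_q$ from $u_2$, each witnessed by a strictly smaller subtree. Apply Lemma \ref{lemmaRulePathsInH} to the given path of \textit{eq}-reduced type $s$ (its hypothesis is met because $u_1\wedge u_2\rightarrow s$ is a rule of $\mathcal{P}$): this yields an intermediate node $n''$ together with a path from $n$ to $n''$ of \textit{eq}-reduced type $u_1$ and a path from $n''$ to $n'$ of \textit{eq}-reduced type $u_2$. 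By the induction hypothesis applied to each of these single-relation paths, there is a path from $n$ to $n''$ of \textit{eq}-reduced type $s_1;\dots;s_j$ and a path from $n''$ to $n'$ of \textit{eq}-reduced type $s_{j+1};\dots;s_q$. Concatenating them gives a path from $n$ to $n'$ of \textit{eq}-reduced type $s_1;\dots;s_q$, as required. Lemma \ref{lemmaR3Satisfied} is then the instance in which the single-relation path from $n$ to $n'$ is the given $r$-edge itself, a length-one path whose \textit{eq}-reduced type is $r$ (note $r\neq\textit{eq}$ since \textit{eq} does not occur in $\mathcal{P}$).

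I expect the main obstacle to be bookkeeping rather than conceptual: the delicate point is to phrase the inductive hypothesis in terms of paths whose \textit{eq}-reduced type is a single relation, not in terms of edges, so that Lemma \ref{lemmaRulePathsInH} can be reapplied to the two sub-paths produced at each step. A secondary subtlety is confirming that the binary parse-tree characterisation of entailment is legitimate here, in particular that no one-body rule $s_1\rightarrow s$ with $s_1\neq s$ is entailed and that \textit{eq} never interferes; both hold because \textit{eq} is absent from $\mathcal{P}$ and the length-increasing productions cannot shrink a body to a single foreign symbol.
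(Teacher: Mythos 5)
Your proof is correct and follows essentially the same route as the paper's: induction on the derivation of the entailed rule, using Lemma \ref{lemmaRulePathsInH} to split the given path at the last rule applied and then invoking the induction hypothesis on the two halves. Your explicit strengthening of the induction hypothesis to paths whose \textit{eq}-reduced type is a single relation (rather than edges) is in fact needed for the paper's own inductive step to go through, so your version is, if anything, slightly more careful.
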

\begin{proof}
Assume $\mathcal{P}\models r_1(X_1,X_2) \wedge r_2(X_2,X_3) \wedge ... \wedge r_p(X_p,X_{p+1}) \rightarrow r(X_1,X_{p+1})$. 
Let $n$ and $n'$ be nodes connected by an edge of type $r$. We show the result by structural induction.
First, suppose $p=2$. In this case, the considered rule is of the form $r_1(X_1,X_2) \wedge r_2(X_2,X_3) \rightarrow r(X_1,X_3)$. It then follows from Lemma \ref{lemmaRulePathsInH} that there is a path whose \textit{eq}-reduced type is $r_1;r_2$ connecting $n$ and $n'$.
Let us now consider the inductive case. If $p>3$ then $r_1(X_1,X_2) \wedge r_2(X_2,X_3) \wedge ... \wedge r_p(X_p,X_{p+1}) \rightarrow r(X_1,X_{p+1})$ is derived from at least two rules in $\mathcal{P}$ (given that the rules in $\mathcal{P}$ were restricted to have only two atoms in the body). The last step of the derivation of this rule is done by secting some rule $s_1(X,Y)\wedge s_2(Y,Z)\rightarrow r(X,Z)$ from $\mathcal{P}$ such that
\begin{align*}
\mathcal{P}\models r_1(X_1,X_2) \wedge ... \wedge r_{i-1}(X_{i-1},X_{i}) &\rightarrow s_1(X_1,X_{i})\\
\mathcal{P}\models r_{i}(X_i,X_{i+1}) \wedge ... \wedge r_p(X_p,X_{p+1}) &\rightarrow s_2(X_i,X_{p+1})
\end{align*}
If there is a path from $n$ to $n'$ whose \textit{eq}-reduced type is $r$, we know from Lemma \ref{lemmaRulePathsInH} that there must be a path from  $n$ to $n''$ with \textit{eq}-reduced type $s_1$-edge and a path from $n''$ to $n'$ with \textit{eq}-reduced type $s_2$, for some node $n''$ in $\mathcal{H}$. By induction, we furthermore know that there must then be a path with \textit{eq}-reduced type $r_1;...;r_{i-1}$ from $n$ to $n''$ and a path with \textit{eq}-reduced type $r_i;...;r_{p}$ from $n''$ to $n'$. Thus, we find that there must be a path with \textit{eq}-reduced type $r_1;...;r_p$  from $n$ to $n'$.
\end{proof}

The fact that (R4) is satisfied follows from the next lemma. 

\begin{lemma}\label{lemmaRuleGraphR4Satisfied}
Let $\mathcal{P}$ be a left-regular set of closed path rules and let $\mathcal{H}$ be the graph obtained using the proposed construction method.
Suppose there is a path in $\mathcal{H}$ from $n_0$ to $n_{r}$ whose \textit{eq}-reduced type is $r_1;...;r_p$. Then it holds that $\mathcal{P}\models r_1(X_1,X_2)\wedge ... \wedge r_p(X_p,X_{p_1})\rightarrow r(X_1,X_{p+1})$.
\end{lemma}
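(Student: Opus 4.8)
The plan is to prove the statement by induction on the length $p$ of the \textit{eq}-reduced type $r_1;\dots;r_p$ of the path, decomposing the path at its \emph{last} non-\textit{eq} edge and matching this edge to a single rule of $\mathcal{P}$. The underlying intuition is that the construction sets up a tight correspondence between paths out of $n_0$ and derivations in $\mathcal{P}$: the step-2 edge $n_0\to n_s$ records the leftmost body atom $s$, whereas each step-3 edge, which carries a label $r_2\in\mathcal{R}_2$ and runs from $n_{r_1}$ to $n_{r_3}$ for a rule $r_1(X,Y)\wedge r_2(Y,Z)\rightarrow r_3(X,Z)$ of $\mathcal{P}$, records one application of that rule. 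Reading a path from $n_0$ to $n_r$ from left to right therefore reconstructs a derivation of a body with head $r$, and left-regularity (i.e.\ $r_2\notin\mathcal{R}_1$) is precisely what guarantees that these step-3 labels behave as terminal symbols, so that the decomposition below always peels off a genuine rule.

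First I would record the incoming structure of a relation node: the only non-\textit{eq} edges arriving at $n_r$, or after step~4 at the \textit{eq}-chain rooted at $n_r$, are the direct $r$-edge from $n_0$ and the step-3 edges labelled $r_2$ emanating from $n_{r_1}$, one for each rule $r_1(X,Y)\wedge r_2(Y,Z)\rightarrow r(X,Z)$ of $\mathcal{P}$; in particular every step-3 edge into $n_r$ forces $r$ to be a head relation. For the base case $p=1$ the path has a single non-\textit{eq} edge, which must be the direct $r$-edge from $n_0$: a step-3 edge would first require reaching some $n_s\neq n_0$ from $n_0$ using only \textit{eq}-edges, which is impossible since the sole \textit{eq}-edge out of $n_0$ leads to $n_{\textit{eq}}$ and $s\neq\textit{eq}$. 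Hence $r_1=r$ and $\mathcal{P}\models r(X_1,X_2)\rightarrow r(X_1,X_2)$ holds trivially. For the inductive step, I would look at the last non-\textit{eq} edge of the path; by the characterization above it is a step-3 edge labelled $r_p$ running from $n_s$ for some rule $s(X,Y)\wedge r_p(Y,Z)\rightarrow r(X,Z)$ in $\mathcal{P}$ (note that step~4 only redirects \emph{incoming} edges, so this edge genuinely starts at the relation node $n_s$). Its prefix is then a path from $n_0$ to $n_s$ of \textit{eq}-reduced type $r_1;\dots;r_{p-1}$, and the induction hypothesis gives $\mathcal{P}\models r_1(X_1,X_2)\wedge\dots\wedge r_{p-1}(X_{p-1},X_p)\rightarrow s(X_1,X_p)$. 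Resolving this against $s(X_1,X_p)\wedge r_p(X_p,X_{p+1})\rightarrow r(X_1,X_{p+1})$ yields exactly $\mathcal{P}\models r_1(X_1,X_2)\wedge\dots\wedge r_p(X_p,X_{p+1})\rightarrow r(X_1,X_{p+1})$.

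The main obstacle is the bookkeeping introduced by step~4, where a node with several incoming edges of one label is replaced by an \textit{eq}-chain. Two points need care. First, the last non-\textit{eq} edge may land on a fresh chain node of $n_r$ rather than on $n_r$ itself, followed by trailing \textit{eq}-edges down to $n_r$; since these \textit{eq}-edges contribute nothing to the \textit{eq}-reduced type, the decomposition is unaffected. Second, I would verify the small invariant that step~4 neither introduces new \textit{eq}-reduced path types between relation nodes nor alters which non-\textit{eq} edges are available, which is what makes the incoming-edge characterization valid throughout. Once the lemma is established, condition (R4) follows quickly: applying it to an $r$-edge out of $n_0$ (which exists by Lemma~\ref{lemmaConstructionRuleGraphR1Satisfied}), any family of path types that covers all $r$-edges must contain the \textit{eq}-reduced type of some path from $n_0$ to $n_r$, and the lemma shows the associated rule is entailed by $\mathcal{P}$.
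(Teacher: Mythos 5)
Your proof is correct, but it takes a genuinely different route from the paper's. The paper inducts on the construction process itself: it checks the claim after step 2, then shows it is preserved by each step-3 edge addition by splitting any path through the newly added $r_2$-edge from $n_{r_1}$ to $n_{r_3}$ into a prefix reaching $n_{r_1}$ (covered by the induction hypothesis) and a suffix that, prepended with the direct $r_3$-edge from $n_0$, again falls under the induction hypothesis; step 4 is then dismissed because it only inserts \textit{eq}-edges and leaves \textit{eq}-reduced types unchanged. You instead induct on the length of the \textit{eq}-reduced type of a path in the \emph{final} graph and peel off the \emph{last} non-\textit{eq} edge, using the structural fact that every non-\textit{eq} edge into $n_r$'s \textit{eq}-chain other than the step-2 edge is a step-3 edge recording a single rule of $\mathcal{P}$ with head $r$; left-regularity guarantees this inventory of incoming edges. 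Your scheme is arguably cleaner: it mirrors the standard correspondence between left-regular derivations and automaton runs, and it sidesteps a delicate point in the paper's middle-splitting argument, namely what happens when the newly added edge occurs more than once on the path or reappears in the suffix (the paper's appeal to "induction" there really needs a secondary induction on path length or on the number of occurrences of the new edge). One small thing to tighten in your write-up: in the inductive step you assert that the last non-\textit{eq} edge must be a step-3 edge, but you only rule out the step-2 alternative in the base case; for $p\geq 2$ you should add the one-line observation that the last non-\textit{eq} edge cannot be the step-2 edge $n_0\to n_r$ because $n_0$ has no incoming edges, so the nonempty prefix of \textit{eq}-reduced type $r_1;\dots;r_{p-1}$ would have nowhere to live.
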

\begin{proof}
The result clearly holds after step 2. We show that the result remains valid after each iteration of step 3. Suppose in step 3 we add an $r_2$-edge between $n_{r_1}$ and $n_{r_3}$. This means that:
\begin{align*}
\mathcal{P} \models r_1(X,Y)\wedge r_2(Y,Z)\rightarrow r_3(X,X)
\end{align*}
Let $\tau$ be a path from $n_0$ to $n_r$. If $\tau$ does not contain the new $r_2$-edge, then the fact that the result is valid for $\tau$  follows by induction. Now, suppose that $\tau$ contains the new $r_2$ edge. Then $\tau$ is of the form $r_{i_1};...;r_{i_s};r_2;r_{j_1};...;r_{j_{t}}$. By induction we have:
\begin{align*}
\mathcal{P}&\models r_{i_1}(X_1,X_2)\wedge ... \wedge r_{i_s}(X_s,X_{s+1})\rightarrow r_1(X_1,X_{s+1})
\end{align*}
Clearly there is a path from $n_0$ to $n_{r_3}$ with \textit{eq}-reduced type $r_3$. In particular, there is a path from $n_0$ to $n_{r_3}$ with \textit{eq}-reduced type $r_3;r_{j_1};...r_{j_t}$. By induction, we thus have:
\begin{align*}
\mathcal{P}&\models r_3(X_0,X_1) \wedge r_{j_1}(X_1,X_2)\wedge ... \\
&\quad\quad\quad\wedge r_{j_t}(X_t,X_{t_1})\rightarrow r(X_0,X_{t+1})
\end{align*}
Together we find that the stated result is satisfied.

Finally, we need to show that the result remains satisfied after step 4. This is clearly the case, as this step replaces edges of type $r$ with paths of type $r;\textit{eq};...;\textit{eq}$. The \textit{eq}-reduced types of the paths from $n_0$ to $n_r$ thus remain unchanged after this step.
\end{proof}

\begin{proposition}\label{propLeftRegularRuleGraphAppendix}
For any left-regular set of closed path rules $\mathcal{P}$, there exists a rule graph for $\mathcal{P}$.
\end{proposition}
\begin{proof}
We need to show that the graph $\mathcal{H}$ obtained using the proposed construction process satisfies (R1)--(R4). The fact that (R1), (R3) and (R4) are satisfied follows immediately from Lemmas \ref{lemmaConstructionRuleGraphR1Satisfied}, \ref{lemmaR3Satisfied} and \ref{lemmaRuleGraphR4Satisfied}. The fact that (R2) is satisfied follows trivially from the construction.
\end{proof}

\subsection{Bounded Inference}

Let $\textit{paths}^m_{\mathcal{G}}(b)$ be the set of all paths in $\mathcal{G}$ of length at most $m$ which are ending in $b$.

\begin{lemma}\label{lemmaGNNconvergenceFinite}
For any entity $e\in\mathcal{E}$ it holds that
\begin{align*}
\mathbf{e^{(m)}}\preceq\max\Big(\mathbf{e^{(0)}}, \max_{\pi\in \textit{paths}^m_{\mathcal{G}}(e)} \mu_{\mathit{rels}(\pi)}\big(\textit{emb}_0(\textit{head}(\pi))\big)\Big)
\end{align*}
\end{lemma}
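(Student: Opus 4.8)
The plan is to prove the bound by induction on $m$, working with the flattened entity embeddings $\mathbf{e^{(l)}}$ rather than the matrix representations. By Lemma \ref{eqEquivalenceRepresentationsEntities}, the matrix update \eqref{eqGNNformB} is equivalent, after flattening, to the component-wise recurrence
\begin{align*}
\mathbf{e^{(l+1)}} = \max\Big(\mathbf{e^{(l)}},\ \max_{(g,r,e)\in\mathcal{G}} \mu_r(\mathbf{g^{(l)}})\Big),
\end{align*}
where the maxima are taken coordinate-wise and the inner one ranges over the incoming edges of $e$. This reformulation is the starting point of the argument.

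For the base case $m=0$, the right-hand side of the claimed inequality contains $\mathbf{e^{(0)}}$ as one of its arguments, so $\mathbf{e^{(0)}}\preceq\max(\mathbf{e^{(0)}},\dots)$ holds trivially. For the inductive step, I would assume the bound holds for $m$ and expand $\mathbf{e^{(m+1)}}$ using the recurrence above. The term $\mathbf{e^{(m)}}$ is handled directly by the induction hypothesis together with the inclusion $\textit{paths}^m_{\mathcal{G}}(e)\subseteq \textit{paths}^{m+1}_{\mathcal{G}}(e)$. For each incoming edge $(g,r,e)\in\mathcal{G}$, I would apply the induction hypothesis to $\mathbf{g^{(m)}}$ and then push $\mu_r$ through the resulting maximum. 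The two facts that make this possible are that $\mu_r$ is monotone with respect to $\preceq$ and that it distributes over component-wise maxima, both immediate from the fact that $\mu_r$ merely selects (and zeroes out) coordinates. This yields
\begin{align*}
\mu_r(\mathbf{g^{(m)}}) \preceq \max\Big(\mu_r(\mathbf{g^{(0)}}),\ \max_{\pi\in \textit{paths}^m_{\mathcal{G}}(g)} \mu_r\big(\mu_{\mathit{rels}(\pi)}(\textit{emb}_0(\textit{head}(\pi)))\big)\Big),
\end{align*}
and Lemma \ref{lemmaCompositeMu} rewrites each composite $\mu_r\circ\mu_{\mathit{rels}(\pi)}$ as $\mu_{\mathit{rels}(\pi);r}$.

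The final step, which I expect to be the main bookkeeping obstacle, is to recognise that every term produced above corresponds to a genuine path in $\textit{paths}^{m+1}_{\mathcal{G}}(e)$. Concretely, appending the edge $(g,r,e)$ to a path $\pi\in\textit{paths}^m_{\mathcal{G}}(g)$ gives a path $\pi'\in\textit{paths}^{m+1}_{\mathcal{G}}(e)$ with $\textit{head}(\pi')=\textit{head}(\pi)$ and $\mathit{rels}(\pi')=\mathit{rels}(\pi);r$, while the single edge $(g,r,e)$ itself is a length-one path accounting for the term $\mu_r(\mathbf{g^{(0)}})=\mu_r(\textit{emb}_0(g))$. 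Since any path of length at most $m+1$ ending in $e$ either is the trivial path at $e$ (covered by the $\mathbf{e^{(0)}}$ term) or decomposes uniquely as such an extension of a length-$\leq m$ path, the union of all these contributions is exactly $\max_{\pi'\in\textit{paths}^{m+1}_{\mathcal{G}}(e)}\mu_{\mathit{rels}(\pi')}(\textit{emb}_0(\textit{head}(\pi')))$.

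Combining the bound on $\mathbf{e^{(m)}}$ with the contributions from all incoming edges then gives the desired inequality for $m+1$ and closes the induction. The only care needed is the length bookkeeping, namely that extending a path of length at most $m$ by a single edge never exceeds length $m+1$, which is immediate, so that no path outside $\textit{paths}^{m+1}_{\mathcal{G}}(e)$ is ever introduced. This mirrors the proof of the unbounded version in Lemma \ref{lemmaGNNconvergence}, the only difference being that the induction on the number of layers is now made explicit in order to track the length cutoff.
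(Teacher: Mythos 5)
Your proof is correct and follows the same route the paper intends: the paper's own proof of this lemma is just the one-line assertion that it ``follows immediately from the construction of the GNN,'' and your induction on the number of layers, using the monotonicity of $\mu_r$, its distribution over coordinate-wise maxima, and Lemma \ref{lemmaCompositeMu} to turn edge extensions into path extensions, is exactly the argument being left implicit. No gaps.
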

\begin{proof}
This follows immediately from the construction of the GNN.
\end{proof}

\begin{lemma}\label{lemmaNotEntailedZemptyFinite}
Let $\ell$ be the number of nodes in the given $m$-bounded rule graph.
Suppose $\mathcal{P}\cup\mathcal{G}\not \models_m (a,r,b)$. Then there is some $i\in\{1,...,\ell\}$ such that:
\begin{itemize}
\item $Z_i\subseteq I_r$; and
\item whenever $\pi\in \textit{paths}^{m+1}_{\mathcal{G}}(b)$ with $\textit{head}(\pi)=a$, it holds that
$
I_{\textit{rels}(\pi)} \cap Z_i = \emptyset
$.
\end{itemize}
\end{lemma}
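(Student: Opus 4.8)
The plan is to follow the proof of Lemma~\ref{lemmaNotEntailedZempty} almost verbatim, replacing condition (R4) by its bounded analogue (R4m) and restricting throughout to paths of length at most $m+1$. I would argue by contradiction: assuming that no node $n_i$ with the two stated properties exists, I would read off from the short witnessing paths a rule of bounded body length that (R4m) certifies to be $m$-step entailed by $\mathcal{P}$, and then chain it with the knowledge-graph path to obtain $\mathcal{P}\cup\mathcal{G}\models_m (a,r,b)$, contradicting the hypothesis.

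Concretely, I would set $\mathcal{Z}_r=\{i\in\{1,\dots,\ell\}\mid Z_i\subseteq I_r\}$, i.e.\ the set of indices whose node $n_i$ carries an incoming $r$-edge; by (R1) this set is nonempty. Negating the conclusion means that for every $i\in\mathcal{Z}_r$ there is a path $\pi\in\textit{paths}^{m+1}_{\mathcal{G}}(b)$ with $\textit{head}(\pi)=a$ and $I_{\textit{rels}(\pi)}\cap Z_i\neq\emptyset$. Exactly as in Lemma~\ref{lemmaNotEntailedZempty}, gathering the \textit{eq}-reduced types of these witnessing paths into a set $X$ shows that for every $r$-edge of $\mathcal{H}$ there is a connecting path whose \textit{eq}-reduced type lies in $X$; the only difference is that each $\pi$ is now drawn from $\textit{paths}^{m+1}_{\mathcal{G}}(b)$ rather than from all of $\textit{paths}_{\mathcal{G}}(b)$.

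The genuinely new ingredient is the length bookkeeping that lets me invoke (R4m). Since each witnessing $\pi$ has at most $m+1$ edges and \textit{eq}-reduction can only shorten a type, every element of $X$ is a sequence of length at most $m+1$. This is precisely the hypothesis of (R4m), which therefore yields some $r_{i1};\dots;r_{ip_i}\in X$ with $p_i\leq m+1$ such that $\mathcal{P}\models_m r_{i1}(X_1,X_2)\wedge\dots\wedge r_{ip_i}(X_{p_i},X_{p_i+1})\rightarrow r(X_1,X_{p_i+1})$. The path $\pi$ realising this \textit{eq}-reduced type witnesses the body triples of this rule along a chain in $\mathcal{G}$ from $a$ to $b$ (the \textit{eq}-edges are self-loops and drop out under reduction), so each body atom is available in zero steps; instantiating the rule, which (R4m) guarantees to be derivable from $\mathcal{P}$ in at most $m$ steps, on these facts gives $\mathcal{P}\cup\mathcal{G}\models_m (a,r,b)$, the desired contradiction. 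Hence some $i\in\mathcal{Z}_r$ escapes the assumption, and this $i$ satisfies $Z_i\subseteq I_r$ by construction together with $I_{\textit{rels}(\pi)}\cap Z_i=\emptyset$ for every such short $\pi$.

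I expect the main obstacle to be exactly this step-counting: one has to pin down the reading of $\mathcal{P}\models_m(\cdot)$ for a rule (a body of length $p_i$ is built from binary rules in $p_i-1\leq m$ compositions) and verify that instantiating such a rule on body triples already present in $\mathcal{G}$ produces a derivation of $(a,r,b)$ of depth at most $m$, and dually that an $m$-step derivation can only exploit $\mathcal{G}$-paths of length at most $m+1$ — which is what justifies restricting to $\textit{paths}^{m+1}_{\mathcal{G}}(b)$ in the statement. The combinatorial correspondence between witnessing paths and $r$-edges carries over unchanged from Lemma~\ref{lemmaNotEntailedZempty} and needs no new argument.
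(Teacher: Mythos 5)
Your proposal is correct and follows essentially the same route as the paper, which itself proves this lemma by observing that the argument for Lemma~\ref{lemmaNotEntailedZempty} goes through verbatim after replacing $\textit{paths}_{\mathcal{G}}(b)$ by $\textit{paths}^{m+1}_{\mathcal{G}}(b)$ and (R4) by (R4m). Your additional length bookkeeping (paths of at most $m+1$ edges have \textit{eq}-reduced types of length at most $m+1$, so (R4m) applies, and the resulting $m$-step-entailed rule instantiated on the $\mathcal{G}$-path yields $\mathcal{P}\cup\mathcal{G}\models_m(a,r,b)$) is exactly the detail the paper leaves implicit.
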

\begin{proof}
This lemma is shown in exactly the same way as Lemma \ref{lemmaNotEntailedZempty}, simply replacing $\textit{paths}_{\mathcal{G}}(b)$ by $\textit{paths}^{m+1}_{\mathcal{G}}(b)$ and replacing Condition (R4) by Condition (R4m).
\end{proof}

\begin{proposition}\label{propProbablyNoFalsePositivesBoundedAppendix}
Let $\mathcal{P}$ be a set of closed path rules $\mathcal{G}$ an \textit{eq}-complete knowledge graph.
Let $\mathcal{H}$ be an $m$-bounded rule graph for $\mathcal{P}$ and let $\mathbf{Z_e^{(l)}}$ be the entity representations that are learned by the GNN,  for the corresponding {\modelName} model.  For any $\varepsilon>0$, there exists some $k_0\in \mathbb{N}$ such that, when $k\geq k_0$, for any $i\leq m+1$ and $(a,r,b)\in\mathcal{E}\times\mathcal{R}\times\mathcal{E}$ such that $\mathcal{P}\cup\mathcal{G}\not\models_m (a,r,b)$, we have
\begin{align*}
\textit{Pr}[ \mathbf{B_r}\mathbf{Z_a^{(i)}}\preceq \mathbf{Z_b^{(i)}}] \leq \varepsilon
\end{align*}
\end{proposition}
\begin{proof}
This result is shown in the same way as Proposition \ref{propProbablyNoFalsePositivesAppendix}, by relying on Lemma \ref{lemmaNotEntailedZemptyFinite} instead of Lemma \ref{lemmaNotEntailedZempty}.
\end{proof}

Given a set of closed path rules $\mathcal{P}$ we can construct an $m$-bounded rule graph as follows.
\begin{enumerate}
\item We add the node $n_0$. 
\item For each relation $r\in \mathcal{R}$, we add a node $n_r$, and we connect $n_0$ to $n_r$ with an $r$-edge. 
\item We repeat the following until convergence. Let  $r\in\mathcal{R}$ and assume there is an $r$-edge from $n$ to $n'$. Let $r_1(X,Y)\wedge r_2(Y,Z)\rightarrow r(X,Z)$ be a rule from $\mathcal{P}$ and suppose that there is no $r_1;r_2$ path connecting $n$ and $n'$. Suppose furthermore that the edge $(n,n')$ is on some path from $n_0$ to a node $n_{r'}$, with $r'\in \mathcal{R}$, whose length is at most $m$. We add a fresh node $n''$ to the rule graph, an $r_1$-edge from $n$ to $n''$, and an $r_2$-edge from $n''$ to $n'$.
\item For each $r\in\mathcal{R}$ and $r$-edge $(n,n')$ such that for some rule $r_1(X,Y)\wedge r_2(Y,Z)\rightarrow r(X,Z)$ from $\mathcal{P}$ there is no $r_1;r_2$ path connecting $n$ and $n'$, we do the following:
\begin{enumerate}
\item We add a fresh node $n''$, an $r_1$-edge from $n$ to $n''$ and an $r_2$-edge from $n''$ to $n'$.
\item We repeat the following until convergence. For each $r'$-edge from $n$ to $n''$ and each rule $r_1'(X,Y)\wedge r_2'(Y,Z)\rightarrow r'(X,Z)$ from $\mathcal{P}$, we add an $r_1'$ edge from $n$ to $n''$ and an $r'_2$-loop to $n''$ (if no such edges/loops exist yet).
\item We repeat the following until convergence. For each $r'$-edge from $n''$ to $n'$ and each rule $r_1'(X,Y)\wedge r_2'(Y,Z)\rightarrow r'(X,Z)$ from $\mathcal{P}$, we add an $r_1'$-loop to $n''$ and an $r_2'$-edge from $n''$ to $n'$ (if no such edges/loops exist yet).
\item We repeat the following until convergence. For each $r'$-loop at $n''$, and each rule $r_1'(X,Y)\wedge r_2'(Y,Z)\rightarrow r'(X,Z)$ from $\mathcal{P}$, we add an $r_1'$-loop and an $r_2'$-loop to $n''$ (if no such loops exist yet).
\end{enumerate}
\item For each node $n$ with multiple incoming $r$-edges for one or more relations from $\mathcal{R}$, we do the following. Let $\sharp(r,n)$ be the number of incoming $r$-edges for node $n$. Let $p = \max_{r\in\mathcal{R}}\sharp(r,n)$. We create fresh nodes $n_1,...,n_{p-1}$ and add \textit{eq}-edges from $n_i$ to $n_{i-1}$ ($i\in\{1,...,p-1\}$), where we define $n_0=n$. Let $r\in \mathcal{R}$ be such that $\sharp(r,n)>1$. Let $n'_0,...,n'_q$ be the nodes with an $r$-link to $n$; then we have $q\leq p-1$. For each $i\in\{1,...,q\}$ we replace the edge from $n'_i$ to $n$ by an edge from $n'_i$ to $n_i$.
\end{enumerate}

We provide two examples to illustrate the construction process.

\begin{example}\label{exBoundedRuleGraphConstruction1}
Let us consider the following set of rules:
\begin{align*}
r_1(X,Y) \wedge r_2(Y,Z) & \rightarrow r_3(X,Z)\\
r_3(X,Y) \wedge r_1(Y,Z) & \rightarrow r_2(X,Z)
\end{align*}
The corresponding $1$-bounded rule graph is shown in Fig.\ \ref{figBoundedRuleGraphConstruction1}.

\begin{figure}
\centering
\begin{tikzpicture}[scale=0.9]
\begin{scope}[every node/.style={thick,draw}]
    \node (A) at (0,0) {$n_0$};
    \node (B) at (-1,2) {$n_{r_1}$};
    \node (C) at (1,2) {$n_{\textit{eq}}$};    
    \node (D) at (-2,1) {$n_1$};     
    \node (E) at (-4,0) {$n_2$};   
    \node (F) at (-2,-1) {$n_{r_3}$};   
    \node (G) at (2,1) {$n_3$}; 
    \node (H) at (4,0) {$n_4$}; 
    \node (I) at (2,-1) {$n_{r_2}$}; 
\end{scope}

\begin{scope}[>={Stealth[black]},
              every edge/.style={draw=black,thick}]   
            \path [->] (A) edge node[left] {$r_1$} (B);   
            \path [->] (A) edge node[left] {\textit{eq}} (C);  
            \path [->] (A) edge node[below] {$r_1$} (D);   
            \path [->] (D) edge node[above] {$r_1,r_3$ \phantom{ab}} (E);   
            \path [->] (D) edge node[left] {$r_2$} (F);
            \path [->] (E) edge node[above] {$r_1$} (F);
            \path [->] (A) edge node[above] {$r_3$} (F);
            \path [->] (A) edge node[above] {$r_2$} (I); 
            \path [->] (A) edge node[above] {$r_1$} (G); 
            \path [->] (A) edge node[above] {$r_3$} (H); 
            \path [->] (G) edge node[above] {\phantom{ab} $r_2,r_1$} (H);
            \path [->] (H) edge node[above] {$r_1$} (I);       
            \path [->] (E) edge[loop below] node[below] {$r_1,r_2,r_3$} (E);             
            \path [->] (G) edge[loop above] node[above right] {$r_1,r_2,r_3$} (G); 
\end{scope}
\end{tikzpicture}
\caption{Rule graph for Example \ref{exBoundedRuleGraphConstruction1}. \label{figBoundedRuleGraphConstruction1}}
\end{figure}
\end{example}

\begin{example}\label{exBoundedRuleGraphConstruction2}
Let us consider the following set of rules:
\begin{align*}
r_1(X,Y) \wedge r_2(Y,Z) &\rightarrow r_3(X,Z)\\
r_4(X,Y) \wedge r_5(Y,Z) &\rightarrow r_1(X,Z)\\
r_4(X,Y) \wedge r_5(Y,Z) &\rightarrow r_2(X,Z)
\end{align*}
The corresponding $2$-bounded rule graph is shown in Fig.\ \ref{figBoundedRuleGraphConstruction2}. Note how this graph is in fact also a rule graph: due to the fact that there are no cyclic dependencies in the rule base $\mathcal{P}\cup\mathcal{G}\models_2 (e,r,g)$ is equivalent with   $\mathcal{P}\cup\mathcal{G}\models (e,r,g)$.

\begin{figure}
\centering
\begin{tikzpicture}[scale=0.9]
\begin{scope}[every node/.style={thick,draw}]
    \node (A) at (0,0) {$n_0$};
    \node (B) at (0,2) {$n_1$};
    \node (C) at (2,1) {$n_2$};    
    \node (D) at (4,2) {$n_3$};     
    \node (E) at (4,0) {$n_{r_3}$};   
    \node (F) at (3,-1) {$n_4$};   
    \node (G) at (1,-2) {$n_{r_2}$}; 
    \node (H) at (-1,-2) {$n_{r_1}$}; 
    \node (I) at (-3,-1) {$n_5$}; 
    \node (J) at (-4,0) {$n_{r_4}$}; 
    \node (K) at (-3,1) {$n_{r_5}$}; 
    \node (L) at (-2,2) {\textit{eq}}; 
\end{scope}

\begin{scope}[>={Stealth[black]},
              every edge/.style={draw=black,thick}]   
            \path [->] (A) edge node[above] {$r_4$} (J); 
            \path [->] (A) edge node[above] {$r_5$} (K);  
            \path [->] (A) edge node[right] {\textit{eq}} (L); 
            \path [->] (A) edge node[right] {$r_4$} (B);   
            \path [->] (B) edge node[above] {$r_5$} (C);   
            \path [->] (A) edge node[above] {$r_1$} (C);   
            \path [->] (C) edge node[above] {$r_4$} (D);   
            \path [->] (D) edge node[left] {$r_5$} (E);   
            \path [->] (C) edge node[above] {$r_2$} (E);   
            \path [->] (A) edge node[above] {$r_3$} (E);   
            \path [->] (A) edge node[below] {$r_4$} (F);               
            \path [->] (F) edge node[above] {$r_5$} (G);   
            \path [->] (A) edge node[right] {$r_2$} (G);   
            \path [->] (A) edge node[left] {$r_1$} (H);   
            \path [->] (A) edge node[below] {$r_4$} (I);   
            \path [->] (I) edge node[above] {$r_5$} (H); 
\end{scope}
\end{tikzpicture}
\caption{Rule graph for Example \ref{exBoundedRuleGraphConstruction2}. \label{figBoundedRuleGraphConstruction2}}
\end{figure}
\end{example}

The construction process clearly terminates after a finite number of steps. Indeed, only edges that are on a path of length $m$ are expanded in step 3, and given that there are only finitely many such paths, step 3 must terminate. It is also straightforward to see that the other steps must terminate. We now show that the construction process yields a valid $m$-bounded rule graph.

Conditions (R1) and (R2) are clearly satisfied.  Next, we show that condition (R3) is satisfied.
\begin{lemma}\label{lemmaABC0}
Let $\mathcal{P}$ be a set of closed path rules and let $\mathcal{H}$ be the resulting $m$-bounded rule graph, constructed using the proposed process.  Suppose nodes $n$ and $n'$ are connected with an edge of type $r$ and suppose $\mathcal{P}\models r_{i_1}(X_1,X_2) \wedge r_{i_2}(X_2,X_3) \wedge ... \wedge r_{i_p}(X_p,X_{p+1}) \rightarrow r(X_1,X_{p+1})$. Then there is a path connecting $n$ to $n'$, whose \textit{eq}-reduced type is $r_{i_1};...;r_{i_p}$. 
\end{lemma}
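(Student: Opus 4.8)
The plan is to prove (R3) by well-founded induction on the derivation witnessing $\mathcal{P}\models r_{i_1}(X_1,X_2)\wedge\ldots\wedge r_{i_p}(X_p,X_{p+1})\rightarrow r(X_1,X_{p+1})$. Since every rule of $\mathcal{P}$ has exactly two body atoms, such an entailment corresponds to a binary derivation tree whose leaves are labelled $r_{i_1},\ldots,r_{i_p}$ and whose internal nodes each apply one rule from $\mathcal{P}$, and I would induct on the number of internal nodes. This mirrors the argument for the left-regular case in Lemma~\ref{lemmaR3Satisfied}, but the essential difference is that, without the restriction $r_2\notin\mathcal{R}_1$, both body atoms of a rule may themselves be rule heads; the derivation tree is therefore genuinely binary rather than a left-comb, and it is precisely this two-sided branching that the gadget nodes of step 4 are designed to absorb.

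The preliminary fact I would isolate, playing the role of Lemma~\ref{lemmaRulePathsInH}, is the claim $(\star)$: for every rule $r_1(X,Y)\wedge r_2(Y,Z)\rightarrow r(X,Z)$ of $\mathcal{P}$ and every $r$-edge $(n,n')$ of the final graph $\mathcal{H}$, there is a path from $n$ to $n'$ whose \textit{eq}-reduced type is $r_1;r_2$. I would prove $(\star)$ directly from the construction: for each pair consisting of an $r$-edge $(n,n')$ and a rule $r_1\wedge r_2\rightarrow r$ for which no $r_1;r_2$ path yet exists after step 3, step 4(a) inserts a fresh node $n''$ with an $r_1$-edge $n\to n''$ and an $r_2$-edge $n''\to n'$, which is exactly the required path. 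The work is then to verify that the edges and loops newly incident to $n''$ leave no further rule body unrealised: this is the purpose of the fixpoint loops 4(b)--(d), which, for every $r'$-edge into $n''$, out of $n''$, or loop at $n''$, and every rule $r_1'\wedge r_2'\rightarrow r'$, install the missing first or second atom together with the complementary atom as a loop at $n''$, thereby supplying the needed $r_1';r_2'$ path through that loop. These loops terminate because the finitely many edges and loops at $n''$ can carry only labels from the finite set $\mathcal{R}$, and step 5 merely reroutes edges along \textit{eq}-chains and so leaves all \textit{eq}-reduced types unchanged.

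Granting $(\star)$, the induction is straightforward. In the base case the derivation uses at most one rule: if $p=1$ then $r_{i_1}=r$ and the edge $(n,n')$ is itself the desired $r$-path, while if $p=2$ the rule $r_{i_1}\wedge r_{i_2}\rightarrow r$ lies in $\mathcal{P}$ and $(\star)$ supplies the path directly. For the inductive step, let the final rule of the derivation be $s_1(X,Y)\wedge s_2(Y,Z)\rightarrow r(X,Z)\in\mathcal{P}$, splitting the body at an index $j$ so that the prefix $r_{i_1},\ldots,r_{i_j}$ entails $s_1$ and the suffix $r_{i_{j+1}},\ldots,r_{i_p}$ entails $s_2$, both through strictly smaller derivations. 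Applying $(\star)$ to $s_1\wedge s_2\rightarrow r$ and the edge $(n,n')$ yields a path $n\rightsquigarrow u\xrightarrow{s_1}v\rightsquigarrow w\xrightarrow{s_2}x\rightsquigarrow n'$, in which $\rightsquigarrow$ denotes an \textit{eq}-only segment and $(u,v)$, $(w,x)$ are its unique $s_1$- and $s_2$-edges. The induction hypothesis, applied to the edge $(u,v)$ for the sub-rule with body $r_{i_1},\ldots,r_{i_j}$ and head $s_1$, and to $(w,x)$ for the sub-rule with body $r_{i_{j+1}},\ldots,r_{i_p}$ and head $s_2$, returns sub-paths of the required \textit{eq}-reduced types; splicing these in place of the corresponding edges and concatenating along the \textit{eq}-segments gives a path from $n$ to $n'$ of \textit{eq}-reduced type $r_{i_1};\ldots;r_{i_p}$.

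I expect the only real obstacle to be the proof of $(\star)$, in particular checking that the closure loops 4(b)--(d) truly saturate every gadget node $n''$, so that each of its incident edges and loops has all of its one-step rule bodies realised, and that introducing these gadgets provokes no unbounded cascade of new unrealised pairs. Once $(\star)$ is secured, the remainder is a routine binary-tree induction, the only bookkeeping being that applying the hypothesis to the single interior $s_1$- and $s_2$-edges and padding with the surrounding \textit{eq}-segments leaves the \textit{eq}-reduced type invariant under concatenation.
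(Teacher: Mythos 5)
Your proposal is correct and follows essentially the same route as the paper: establish from steps 4(a)--(d) that every $r$-edge has its one-step rule bodies realised (your $(\star)$, which the paper states at the end of step 4), then run the structural induction on the derivation exactly as in the left-regular case (Lemma~\ref{lemmaR3Satisfied}), and finally observe that step 5 only inserts \textit{eq}-edges and so preserves \textit{eq}-reduced types. The only cosmetic difference is that you state $(\star)$ for the final graph and splice around \textit{eq}-segments inside the induction, whereas the paper performs the induction on the graph as it stands after step 4 and defers the step-5 argument to the end.
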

\begin{proof}
First, we show that at the end of step 4, there must be a path of type $r_{i_1};...;r_{i_p}$ connecting $n$ and $n'$.
By construction, we immediately have that whenever two nodes $(n,n')$ are connected with an $r_i$-edge and $\mathcal{P}$ contains the rule $r_j(X,Y)\wedge r_l(Y,Z)\rightarrow r_i(X,Z)$ it holds that there exists some node $n''$ such that there is an $r_j$-edge from $n$ to $n''$ and an $r_l$ edge from $n''$ to $n'$. The existence of a path of type $r_{i_1};...;r_{i_p}$ then follows in the same way as in the proof of Lemma \ref{lemmaR3Satisfied}. It remains to be shown that the proposition remains valid after step 5. However, the paths in the final graph are those that can be found in the graph after step 4, with the possible addition of some \textit{eq}-edges. This means in particular that after step 5, there must still be a path from $n$ to $n'$ whose \textit{eq}-reduced type is $r_{i_1};...;r_{i_p}$.
\end{proof}

Finally, the fact that (R4m) is satisfied follows from the following lemma.

\begin{lemma}\label{lemmaABC1}
Let $\mathcal{P}$ be a set of closed path rules, and let $\mathcal{H}$ be the resulting $m$-bounded rule graph, constructed using the process outlined above. Suppose there is a path from $n_0$ to $n_r$ whose \textit{eq}-reduced type if $r_1;...;r_p$, with $p\leq m+1$. Then it holds that $\mathcal{P} \models r_1(X_1,X_2)\wedge ... \wedge r_p(X_p,X_{p_1})\rightarrow r(X_1,X_{p+1})$.
\end{lemma}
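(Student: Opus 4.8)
The plan is to track how the entailment guarantee is built up stage by stage during the five-step construction, and to argue that the loops introduced in step~4 never interfere with the short paths that condition (R4m) concerns. Write $\mathcal{H}_3$ and $\mathcal{H}_4$ for the graphs obtained after steps~3 and~4, respectively. The central intermediate claim is that \emph{every} path in $\mathcal{H}_3$ from $n_0$ to $n_r$, of \textit{eq}-reduced type $r_1;\dots;r_p$, already witnesses $\mathcal{P}\models r_1(X_1,X_2)\wedge\dots\wedge r_p(X_p,X_{p+1})\rightarrow r(X_1,X_{p+1})$, with no bound on $p$ needed at this stage.

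First I would prove this claim by induction on the sequence of edge-insertions performed in step~3, exactly as in the proof of Lemma~\ref{lemmaRuleGraphR4Satisfied}. The base case is the graph after step~2, where the only $n_0$-to-$n_r$ path is the single $r$-edge and the entailment is the trivial rule $r\rightarrow r$. For the inductive step, suppose step~3 processes a $q$-edge $(n,n')$ using a rule $r_1(X,Y)\wedge r_2(Y,Z)\rightarrow q(X,Z)$, inserting a fresh node $n''$ together with an $r_1$-edge $(n,n'')$ and an $r_2$-edge $(n'',n')$ (the edge $(n,n')$ itself is retained). Since each firing introduces a brand-new node with exactly these two incident edges, any new $n_0$-to-$n_r$ path through $n''$ enters via $(n,n'')$ and leaves via $(n'',n')$, so it is obtained from an existing path using $(n,n')$ by substituting the segment $r_1;r_2$ for the single letter $q$. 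As $\mathcal{P}\models r_1\wedge r_2\rightarrow q$, combining this with the inductive entailment for the old path by substitution (as in the proof of Lemma~\ref{lemmaR3Satisfied}) yields the entailment for the new path. Hence the claim holds throughout step~3.

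Next I would reduce an arbitrary short path of the final graph to a path of $\mathcal{H}_3$ with the same \textit{eq}-reduced type. Step~5 only redirects each incoming edge of a node into the head of a fresh \textit{eq}-chain terminating at that node; contracting these chains turns any path of the final graph into a path of $\mathcal{H}_4$ with an identical \textit{eq}-reduced type, so it suffices to treat $\mathcal{H}_4$. The crux is that \emph{no} path from $n_0$ to $n_r$ of \textit{eq}-reduced length $p\leq m+1$ in $\mathcal{H}_4$ visits a step-4 node. Each step-4 node $n''$ sits on an edge $(n,n')$ that still lacked an $r_1;r_2$ witness \emph{after} step~3 had converged, and by construction $n''$ can only be entered from $n$ and left towards $n'$; moreover all its incident labels come from rule bodies and hence differ from \textit{eq}. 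Thus any detour from $n$ to $n'$ through such fresh nodes has \textit{eq}-reduced length at least $2$, whereas the retained direct edge $(n,n')$ has \textit{eq}-reduced length $1$ (its label is a rule head, so not \textit{eq}). Replacing, on a putative short path, every step-4 detour by its direct edge produces a path of $\mathcal{H}_3$ from $n_0$ to $n_r$ of \textit{eq}-reduced length at most $m$ that still traverses some processed edge $(n,n')$. This exhibits $(n,n')$ on a path from $n_0$ to $n_r$ of length at most $m$, so step~3 would already have supplied the missing $r_1;r_2$ path — contradicting the fact that $(n,n')$ was processed in step~4.

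Combining the two parts gives the lemma: a path from $n_0$ to $n_r$ of \textit{eq}-reduced type $r_1;\dots;r_p$ with $p\leq m+1$ reduces to a path of the same type in $\mathcal{H}_3$, which by the first claim certifies $\mathcal{P}\models r_1\wedge\dots\wedge r_p\rightarrow r$; since every rule of $\mathcal{P}$ is binary, this derivation chains $p-1\leq m$ applications, matching the bounded entailment needed to establish (R4m). I expect the delicate part to be this second step — controlling the loops added in step~4. The careful bookkeeping is to shorten one step-4 detour at a time and observe that a short path through any fresh node would force the underlying edge onto a short $n_0$-to-$n_r$ path, contradicting step~3's convergence; the \textit{eq}-reduced length accounting (each detour costs at least one extra edge) is exactly what makes the contradiction go through.
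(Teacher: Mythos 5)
Your proposal is correct and follows essentially the same route as the paper's proof: establish the entailment property for all $n_0$-to-$n_r$ paths after step 3 by induction on the edge insertions, observe that any path through a step-4 node costs at least one extra (non-\textit{eq}) edge so that paths of \textit{eq}-reduced length at most $m+1$ cannot use step-4 material (otherwise step 3 would already have processed the underlying edge), and note that step 5 leaves \textit{eq}-reduced types unchanged. Your write-up just makes explicit the detour-shortening bookkeeping that the paper states more tersely.
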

\begin{proof}
We clearly have that the proposition holds after step 3 of the construction method. After step 3, if there is an $r$-link between nodes $n$ and $n'$ and a rule $r_1(X,Y)\wedge r_2(Y,Z)\rightarrow r(X,Z)$ such that $n$ and $n'$ are not connected by an $r_1;r_2$ path, it must be the case that any path from $n_0$ to some node $n_r$ which contains the edge $(n,n')$ must have a length of at least $m+1$. It follows that any path from $n_0$ to some node $n_r$ which contains an edge that was added during step 4 must have length at least $m+2$. We thus have in particular that the proposition still holds after step 4. The paths in the final graph are those that can be found in the graph after step 4, with the possible addition of some \textit{eq}-edges. Since the proposition only depends on the \textit{eq}-reduced types of the paths, the result still holds after step 5.
\end{proof}

Together, we have shown the following result.
\begin{proposition}\label{propAlwaysMboundedRuleGraphAppendix}
For any set of closed path rules $\mathcal{P}$, there exists an $m$-bounded rule graph for $\mathcal{P}$.
\end{proposition}
\begin{proof}
Let $\mathcal{P}$ be a set of closed path rules and let $\mathcal{H}$ be the graph obtained using the proposed construction method for $m$-bounded rule graphs. We need to show that $\mathcal{H}$ satisfies (R1)--(R3) and (R4m). (R1) and (R2) are trivially satisfied. The fact that (R3) is satisfied was shown in Lemma \ref{lemmaABC0}, while the fact that (R4m) is satisfied was shown in Lemma \ref{lemmaABC1}.
\end{proof}

\section{Beyond Closed Path rules}
We start by considering a graph-based abstraction of the model, similar to the notion of rule graph that we used for encoding closed path rules. To avoid confusion, we will refer to the graphs constructed here as \emph{intersection graphs}. Let $\mathcal{P}$ be an arbitrary set of intersection and hierarchy rules, defined over a set of relations $\mathcal{R}$. We construct the corresponding intersection graph as follows:
\begin{enumerate}
\item We add a node $n_0$.
\item For each relation $r\in\mathcal{R}$, we add a node $n_r$, and we connect $n_0$ to $n_r$ with an $r$-edge.
\item We repeat the following steps until convergence
\begin{enumerate}
\item For each hierarchy rule of the form $r_1(X,Y)\rightarrow r_2(X,Y)$ in $\mathcal{P}$, and every $r_2$-edge from $n_0$ to some node $n$, we add an $r_1$-edge from $n_0$ to $n$, if such an edge does not already exist.
\item For each intersection rule of the form $r_1(X,Y) \wedge r_2(X,Y) \rightarrow r_3(X,Y)$ in $\mathcal{P}$, and every $r_3$-edge from $n_0$ to some node $n$, such that there is no $r_1$-edge and no $r_2$-edge from $n_0$ to $n$, we complete the following steps:
\begin{enumerate}
\item We add a new node $n'$ to the graph.
\item For each edge from $n_0$ to $n$ labelled with some relation $r$, we add an $r$-edge from $n_0$ to $n'$.
\item We add an $r_1$-edge from $n_0$ to $n$, and an $r_2$-edge from $n_0$ to $n'$.
\end{enumerate}
\end{enumerate}
\end{enumerate}
Note that the construction process must terminate after a finite number of steps, given that there are only finitely many edges that can be added between two nodes, and each time a node is duplicated in step 3(b), the number of edges between $n_0$ and that node is increased.
Let $\mathcal{H}$ be the resulting graph. We construct a {\modelName} model from $\mathcal{H}$ in the same way as for rule graphs. In particular, the matrices $\mathbf{B_r}=(b_{ij})$ are again defined as in \eqref{eqDefGNNfromRuleGraph}.
Note that by construction, Condition (R1) is clearly satisfied for $\mathcal{H}$. Moreover, the only edges in $\mathcal{H}$ are between $n_0$ and the other nodes, which means that Condition (R2) is also satisfied. It follows that the constructed {\modelName} model is indeed valid, i.e.\ that the resulting $\mathbf{B_r}$ matrices satisfy the constraint that each row is either a one-hot vector or a 0-vector.

\begin{lemma}\label{lemmaIntersectionGraphIntersections}
Assume that $\mathcal{P}\models r_1(X,Y)\wedge ... \wedge r_p(X,Y)\rightarrow r(X,Y)$. Assume that $\mathcal{H}$ contains an $r$-edge between $n_0$ and some node $n$. Then it holds that $\mathcal{H}$ also contains an edge between $n_0$ and $n$ which is labelled with one of $r_1,...,r_p$.
\end{lemma}
\begin{proof}
First note that the result clearly holds for the rules that are contained in $\mathcal{P}$.  Indeed, if $p=1$ and $\mathcal{P}$ contains the rule $r_1(X,Y) \rightarrow r(X,Y)$, then the result holds because of step 3(a) of the construction process. If $p=2$ and $\mathcal{P}$ contains the rule $r_1(X,Y)\wedge r_2(X,Y)\rightarrow r(X,Y)$, then the result holds because of step 3(b) of the construction process. We now show, by structural induction, that the result remains satisfied for rules that are derived from $\mathcal{P}$.

Suppose that the result holds for the rule $r_1(X,Y)\wedge ... \wedge r_i' (X,Y) \wedge ... \wedge r_p(X,Y)\rightarrow r(X,Y)$ and $\mathcal{P}$ contains the rule $r_i(X,Y)\rightarrow r_i'(X,Y)$. We then have that there is an edge between $n_0$ and $n$ labelled with one of $r_1,...,r_i',...,r_p$. However, if there is an edge labelled with $r_i'$, then because of step 3(a) of the construction process, there will also be an edge labelled with $r_i$. We thus find that there must be an edge labelled with one of $r_1,...,r_i,...,r_p$, meaning that the result holds for the rule $r_1(X,Y)\wedge ... r_i (X,Y) \wedge ... \wedge r_p(X,Y)\rightarrow r(X,Y)$.

Finally, assume that the result holds for $r_1(X,Y)\wedge ... \wedge r_{i-1} (X,Y) \wedge r_i' (X,Y) \wedge r_{i+2} (X,Y) \wedge ... \wedge r_p(X,Y)\rightarrow r(X,Y)$ and $\mathcal{P}$ contains the rule $r_i(X,Y) \wedge r_{i+1}(X,Y) \rightarrow r_i'(X,Y)$.  We then have that there is an edge between $n_0$ and $n$ labelled with one of $r_1,...,r_i',r_{i+2},...,r_p$. However, if there is an edge labelled with $r_i'$, then because of step 3(b) of the construction process, there will also be an edge labelled with either $r_i$ or $r_{i+1}$. We thus find that there must be an edge labelled with one of $r_1,...,r_p$, meaning that the result holds for the rule $r_1(X,Y)\wedge ... r_i (X,Y) \wedge ... \wedge r_p(X,Y)\rightarrow r(X,Y)$.
\end{proof}

\begin{lemma}\label{lemmaPropHierarchyIntersectionA}
Consider a triple $(a,r,b)$ such that $\mathcal{P}\cup\mathcal{G}\models (a,r,b)$. Let $\eta$ be the {\modelName} embedding constructed above, and let $\tau$ be an entity embedding such that $(\tau,\eta)$ captures every triple in $\mathcal{G}$. It holds that $(\tau,\eta)$ captures $(a,r,b)$.
\end{lemma}
\begin{proof}
Let us write $\mathbf{Z_e}$ for the entity embedding $\tau(e)$.
The fact that $\mathcal{P}\cup\mathcal{G}\models (a,r,b)$ holds means that $\mathcal{G}$ either contains the triple $(a,r,b)$ itself, or that $\mathcal{G}$ contains triples of the form $(a,r_1,b),...,(a,r_p,b)$ such that $\mathcal{P}\models r_1(X,Y)\wedge ... \wedge r_p(X,Y) \rightarrow r(X,Y)$. In the former case, the result is trivial 

Now assume that $\mathcal{G}$ contains triples of the form $(a,r_1,b),...,(a,r_p,b)$ such that $\mathcal{P}\models r_1(X,Y)\wedge ... \wedge r_p(X,Y) \rightarrow r(X,Y)$. Then by assumpion, we have that the following conditions are satisfied:
\begin{align*}
\mathbf{B_{r_1}}\mathbf{Z_a}&\preceq \mathbf{Z_b} \\
&...\\
\mathbf{B_{r_p}}\mathbf{Z_a}&\preceq \mathbf{Z_b} 
\end{align*}
We need to show that the following condition is satisfied:
$$
\mathbf{B_r}\mathbf{Z_a}\preceq \mathbf{Z_b} 
$$
Suppose this condition was violated for the component on row $i$ and column $j$. We then find that the following inequality must also be violated (for the same component):
$$
\mathbf{B_r}\mathbf{Z_a}\preceq \max(\mathbf{B_{r_1}}\mathbf{Z_a},...,\mathbf{B_{r_p}}\mathbf{Z_a})
$$
Let us write $x_r$ for the value at position $(i,j)$ in the matrix $\mathbf{B_r}\mathbf{Z_a}$, and similar for $x_{r_1},...,x_{r_p}$. The fact that the latter condition is violated then means:
$$
x_r > \max(x_{r_1},...,x_{r_p})
$$
This is only possible if row $i$ of $\mathbf{B_r}$ is a one-hot vector with the non-zero component in some position $q$ such that $\mathbf{B_{r_1}},...,\mathbf{B_{r_p}}$ are all 0 at position $(i,q)$. By definition of the matrices $\mathbf{B_r}$, i.e.\ \eqref{eqDefGNNfromRuleGraph}, this means that $\mathcal{H}$ has an $r$-edge from $n_q$ to $n_i$, while there are no edges labelled with any of the relations $r_1,...,r_p$  between these two nodes. However, using Lemma \ref{lemmaIntersectionGraphIntersections}, we know that this is not possible, given that we had assumed $\mathcal{P}\models r_1(X,Y)\wedge ... \wedge r_p(X,Y) \rightarrow r(X,Y)$.
\end{proof}

In the following, we will write $\mathbf{Z_a^{(m)}}$ to denote the entity representations that are learned by the GNN that is associated with the constructed {\modelName} model. We will furthermore write $\mathbf{B_r}$ to denote the parameters of the corresponding model.

\begin{lemma}\label{lemmaIntersectionGraphUpdates0}
For each $m\geq 1$, it holds that $\mathbf{B_r}\mathbf{Z_a^{(m)}}=\mathbf{B_r}\mathbf{Z_a^{(0)}}$.
\end{lemma}
\begin{proof}
This follows immediately from the fact that all edges in $\mathcal{H}$ start from $n_0$, together with the fact that $n_0$ does not have any incoming edges. 
\end{proof}
\begin{corollary}\label{corIntersectionGraphAllUpdatesSame}
For each $m\geq 1$, it holds that $\mathbf{Z_a^{(m)}}=\mathbf{Z_a^{(1)}}$.
\end{corollary}

\begin{lemma}\label{lemmaIntersectionsComplete}
Suppose that for every $r$-edge in $\mathcal{H}$, it holds that there is an edge between the same two nodes which is labelled with one of $r_1,...,r_p$. We have that $\mathcal{P}\models r_1(X,Y)\wedge ... \wedge r_p(X,Y) \rightarrow r(X,Y)$.
\end{lemma}
\begin{proof}
The result is trivially satisfied after step 2 of the construction process, as any two nodes are connected by at most one edge at this point. We show, by induction, that the result remains satisfied after each iteration of step 3.  

Let us first consider step 3(a). Assume that prior to executing this step, there are $r$-edges between $n_0$ and each of $n_1,...,n_q$. Let us write $r^*(X,Y)\rightarrow s(X,Y)$ for the hierarchy rule that is considered in this step. Suppose that after executing the step, for each $n_i$, with $i\in \{1,...,q\}$, there is an edge between $n_0$ and $n_i$ which is labelled with one of $r_1,...,r_p$. Prior to executing this step, either the same must have been true for $\{r_1,...,r_p\}$, or this property must have been true for $\{r_1,...,r_p\} \cup \{s\}$ such that $r^*\in \{r_1,...,r_p\}$. In the former case, we obtain  $\mathcal{P}\models r_1(X,Y)\wedge ... \wedge r_p(X,Y) \rightarrow r(X,Y)$ by induction. In the latter case, we must have that $r^*=r_j$ for some $j\in \{1,...,p\}$. By induction we then have $\mathcal{P}\models  r_1(X,Y)\wedge ... \wedge r_p(X,Y) \wedge s(X,Y)\rightarrow r(X,Y)$. Since $\mathcal{P}$ contains the rule $r_j(X,Y)\rightarrow s(X,Y)$, we again find  $\mathcal{P}\models r_1(X,Y)\wedge ... \wedge r_p(X,Y) \rightarrow r(X,Y)$.

Now consider step 3(b), and again assume that prior to executing this step, there are $r$-edges between $n_0$ and each of $n_1,...,n_q$. Let us write $r^*(X,Y) \wedge r'(X,Y) \rightarrow s(X,Y)$ for the intersection rule that is considered in this step. Suppose that after executing the step, for each $n_i$, with $i\in \{1,...,q\}$, there is an edge between $n_0$ and $n_i$ which is labelled with one of $r_1,...,r_p$. Prior to executing this step, either the same must have been true for $\{r_1,...,r_p\}$, or this property must have been true for $\{r_1,...,r_p\} \cup \{s\}$ such that $\{r^*,r'\}\subseteq \{r_1,...,r_p\}$.  Similar as for step 3(a), in both cases we find that  $\mathcal{P}\models r_1(X,Y)\wedge ... \wedge r_p(X,Y) \rightarrow r(X,Y)$.
\end{proof}


\begin{lemma}\label{lemmaPropHierarchyIntersectionB}
For any $\varepsilon>0$ there is a $k_0\in\mathbb{N}$ such that, when $k\geq k_0$, for any $m\in \mathbb{N}$ and $(a,r,b)\in\mathcal{E}\times\mathcal{R}\times\mathcal{E}$ such that $\mathcal{P}\cup\mathcal{G} \not\models (a,r,b)$, we have $\textit{Pr}[ \mathbf{B_r}\mathbf{Z_a^{(m)}}\preceq \mathbf{Z_b^{(m)}}] \leq \varepsilon$.
\end{lemma}
\begin{proof}
Suppose $\mathcal{P}\cup\mathcal{G} \not\models (a,r,b)$.
By Lemma \ref{lemmaIntersectionGraphUpdates0} and Corollary \ref{corIntersectionGraphAllUpdatesSame}, we know that $\mathbf{B_r}\mathbf{Z_a^{(m)}}\preceq \mathbf{Z_b^{(m)}}$ is equivalent with:
$$
\mathbf{B_r}\mathbf{Z_a^{(0)}}\preceq \mathbf{Z_b^{(1)}}
$$
Using \eqref{eqGNNformB}, we can rewrite this as:
\begin{align}\label{eqIntersectionProofCompletenessMaxA}
\mathbf{B_r}\mathbf{Z_a^{(0)}}\preceq \max\big(\{\mathbf{Z_b^{(0)}}\} \cup \{\mathbf{B_s}\mathbf{Z_e^{(0)}}\,|\, (e,s,b)\in \mathcal{G}\}\big)
\end{align}
Let $r_1,...,r_p$ be the set of all relations in $\mathcal{R}$ such that $\mathcal{G}$ contains a triple $(a,r_i,b)$. Since we assumed $\mathcal{P}\cup\mathcal{G} \not\models (a,r,b)$, we must have that $\mathcal{P}\not\models r_1(X,Y)\wedge ... \wedge r_p(X,Y) \rightarrow r(X,Y)$. 
From Lemma \ref{lemmaIntersectionsComplete}, we know that there must then exist an edge in $\mathcal{H}$ from $n_0$ to some node $n$, such that there is no edge labelled with any of $r_1,...,r_p$ from $n_0$ to $n$. Let this be the edge associated with coordinate $(i,j)$ in the matrices $\mathbf{B_s}$. Note that because $n_0\neq n$ (since the construction process never adds self-loops), we must have $i\neq j$. This means that the $i\textsuperscript{th}$ row of $\mathbf{B_r}\mathbf{Z_a^{(0)}}$ corresponds to the $j\textsuperscript{th}$ row of $\mathbf{Z_a^{(0)}}$. For each of the triples $(e,s,b)$ we have that either $e\neq a$ or that $r\in \{r_1,...,r_p\}$. Furthermore, since there is no $r_i$-edge from $n_0$ to $n$, we have that the component $(i,j)$ of $\mathbf{B_{r_i}}$ is 0 for each $i\in \{1,...,p\}$. We thus have that the $i\textsuperscript{th}$ row of $\mathbf{B_s}\mathbf{Z_e^{(0)}}$ does not correspond to the $j\textsuperscript{th}$ row of $\mathbf{Z_a^{(0)}}$, for any of the triples $(e,s,b)$, noting that when $s\notin \{r_1,...,r_p\}$ we must have $e\neq a$. Furthermore, we also have that the $i\textsuperscript{th}$ row of $\mathbf{Z_b^{(0)}}$ cannot correspond to the $j\textsuperscript{th}$ row of $\mathbf{Z_a^{(0)}}$. Note that this is the case even when $a=b$ since $i\neq j$. Let us write $\mathbf{a}$ for the $i\textsuperscript{th}$ row of $\mathbf{B_r}\mathbf{Z_a^{(0)}}$ and $\mathbf{b_1},...,\mathbf{b_u}$ for the $i\textsuperscript{th}$ rows  of the arguments of the maximum in the right-hand side of \eqref{eqIntersectionProofCompletenessMaxA}. Then we have
\begin{align*}
\textit{Pr}[ \mathbf{B_r}\mathbf{Z_a^{(m)}}\preceq \mathbf{Z_b^{(m)}}] 
&\leq \textit{Pr}[ \mathbf{a} \preceq \max(\mathbf{b_1},...,\mathbf{b_u}) ]\\
&\leq (1-\lambda^u)^k\\
&\leq  e^{-k\lambda^u}
\end{align*}
with $\lambda$ defined as in the proof of Proposition \ref{propProbablyNoFalsePositivesAppendix}.
The value of $u$ is upper bounded by $\ell\cdot |\mathcal{E}|$, with $\ell$ the number of nodes in the rule graph.
By choosing $k$ sufficiently large, we can thus make this probability arbitrarily small. In particular:
\begin{align*}
e^{-k\lambda^{u}} \leq \varepsilon
\quad\Leftrightarrow\quad k \geq \frac{1}{\lambda^{u}} \log \frac{1}{\varepsilon}
\end{align*}

\end{proof}

\begin{proposition}\label{propHierarchyIntersectionAppendix}
Let $\mathcal{P}$ be a set of hierarchy and intersection rules. There exists a {\modelName} model such that for every knowledge graph $\mathcal{G}$ the following conditions are satisfied:
\begin{enumerate}
\item Suppose that $\mathcal{P}\cup\mathcal{G} \models (a,r,b)$ and let $\tau$ be an entity embedding such that $(\tau,\eta)$ captures every triple in $\mathcal{G}$. It holds that $(\tau,\eta)$ captures $(a,r,b)$.
\item For any $\varepsilon>0$ there is a $k_0\in\mathbb{N}$ such that, when $k\geq k_0$, for any $m\in \mathbb{N}$ and $(a,r,b)\in\mathcal{E}\times\mathcal{R}\times\mathcal{E}$ such that $\mathcal{P}\cup\mathcal{G} \not\models (a,r,b)$, we have $\textit{Pr}[ \mathbf{B_r}\mathbf{Z_a^{(m)}}\preceq \mathbf{Z_b^{(m)}}] \leq \varepsilon$, where $\mathbf{Z_e^{(m)}}$ are the entity representations that are learned by the GNN \eqref{eqGNNformB}.
\end{enumerate}
\end{proposition}
\begin{proof}
This follows immediately from Lemmas \ref{lemmaPropHierarchyIntersectionA} and \ref{lemmaPropHierarchyIntersectionB}.
\end{proof}


\section{Experimental Details}
\label{app:ExpDetails}

This section lists additional details about our experiment's setup, benchmark datasets, and evaluation protocol. 
Section~\ref{app:ImplementationAndReproducibility} provides some additional implementation details.
The origins and licenses of the standard benchmarks for inductive KGC are discussed in Section \ref{app:BenchData}. Section \ref{secBaselines} provides a justification for why certain baselines have not been considered in our experiments.
Details on \modelName's hyperparameter optimisation are discussed in Section \ref{app:ExperimentSetup}.
Finally, details about the evaluation protocol, together with the complete evaluation results, are provided in Section \ref{app:Metrics}.

\subsection{Implementation Details}
\label{app:ImplementationAndReproducibility}
\modelName\ is trained on an NVIDIA Tesla V100 PCIe 32 GB GPU. 
We train \modelName\ for up to $1000$ epochs, minimizing the margin ranking loss with 
the Adam optimiser \citep{Adam}. If the Hits@10 score on the validation split of $\mathcal{G}_{\textit{Train}}$ does not increase by at least $1\%$ within $100$ epochs, we stop the training early. 

\modelName\ was implemented using the Python library PyKEEN 1.10.1 \citep{PyKeen}. PyKEEN employs the MIT license and 
offers numerous benchmarks for KGC, facilitating the comfortable reuse of \modelName's code for upcoming applications and comparisons.

\begin{table}[t]
\centering
\footnotesize
\begin{tabular}{llllllll}
\toprule
          &    & $\mathcal{R}_{\textit{Train}}$ & $\mathcal{E}_{\textit{Train}}$ & $\mathcal{G}_{\textit{Train}}$ & $\mathcal{R}_{\textit{Test}}$ & $\mathcal{E}_{\textit{Test}}$ & $\mathcal{G}_{\textit{Test}}$  \\
          \midrule
 \multirow{4}{*}{\rotatebox[origin=c]{90}{\textbf{\scriptsize FB15k-237}}} & v1 & 180                   & 1594                  & 5226                  & 142                  & 1093                 & 2404                    \\
          & v2 & 200                   & 2608                  & 12085                 & 172                  & 1660                 & 5092                    \\
          & v3 & 215                   & 3668                  & 22394                 & 183                  & 2501                 & 9137                    \\
          & v4 & 219                   & 4707                  & 33916                 & 200                  & 3051                 & 14554                   \\
          \midrule
 \multirow{4}{*}{\rotatebox[origin=c]{90}{\textbf{\scriptsize WN18RR}}}    & v1 & 9                     & 2746                  & 6678                  & 8                    & 922                  & 1991                     \\
          & v2 & 10                    & 6954                  & 18968                 & 10                   & 2757                 & 4863                     \\
          & v3 & 11                    & 12078                 & 32150                 & 11                   & 5084                 & 7470                     \\
          & v4 & 9                     & 3861                  & 9842                  & 9                    & 7084                 & 15157                    \\
          \midrule
 \multirow{4}{*}{\rotatebox[origin=c]{90}{\textbf{\scriptsize NELL-995}}}  & v1 & 14                    & 3103                  & 5540                  & 14                   & 225                  & 1034                     \\
          & v2 & 88                    & 2564                  & 10109                 & 79                   & 2086                 & 5521                     \\
          & v3 & 142                   & 4647                  & 20117                 & 122                  & 3566                 & 9668                     \\
          & v4 & 76                    & 2092                  & 9289                  & 61                   & 2795                 & 8520                    \\
          \bottomrule
\end{tabular}
\caption{Number of relation, entities, and triples of the train, validation, and test split of the training and testing graph of the inductive benchmarks, split by corresponding benchmark versions v1-4.}
\label{tab:BenchCharacteristics}
\end{table}

\subsection{Benchmarks}
\label{app:BenchData}
Table \ref{tab:BenchCharacteristics} states the entity, relation, and triple counts of the training and test graphs, for each of the considered benchmarks. 

We did not find a license for any of the three inductive benchmarks nor their corresponding transductive supersets. Furthermore, WN18RR is a subset of the WordNet database \citep{DBLP:journals/cacm/Miller95}, which states lexical relations of English words. We also did not find a license for this dataset. FB15k-237 is a subset of FB15k \citep{DBLP:conf/nips/BordesUGWY13}, which is a subset of Freebase \citep{DBLP:conf/acl-cvsc/ToutanovaC15}, a collaborative database that contains general knowledge, such as about celebrities and awards, in English. We did not find a license for FB15k-237 but found that FB15k \citep{DBLP:conf/nips/BordesUGWY13} uses the CC BY 2.5 license. Finally, NELL-995 \citep{DBLP:conf/emnlp/XiongHW17} is a subset of NELL \citep{DBLP:conf/aaai/CarlsonBKSHM10}, a dataset that was extracted from semi-structured and natural-language data on the web and that includes information about e.g., cities, companies, and sports teams. Also for NELL, we did not find any license information.


\subsection{Baselines}\label{secBaselines}
As explained in the main text, we train \modelName\ on the train split of $\mathcal{G}_{\textit{Train}}$, tune on the validation split of $\mathcal{G}_{\textit{Train}}$, and evaluate on the test split of $\mathcal{G}_{\textit{Test}}$. As discussed by \citet{DBLP:conf/coling/AnilGIS24}, some approaches in the literature have been evaluated in different ways, e.g.\ by tuning hyperparameters on the validation split of $\mathcal{G}_{\textit{Test}}$, and their reported results are thus not directly comparable. This is the case, among others, for RED-GNN \cite{DBLP:conf/www/ZhangY22}, CBGNN \cite{DBLP:conf/icml/YanMGT022}, Node-
Piece \cite{DBLP:conf/iclr/0001DWH22} and ReFactor GNN \cite{DBLP:conf/nips/ChenM0MS022}. For this reason, these models have not been included in our results table.

\subsection{Hyperparameter Optimisation}
\label{app:ExperimentSetup}



\begin{table}[t]
\centering
\footnotesize
\begin{tabular}{lcccccc}
          \toprule
          &    & \#Layers & $l$ & $k$ & $\lambda$ & lr    \\
          \midrule
\multirow{4}{*}{\rotatebox[origin=c]{90}{\textbf{\scriptsize FB15k-237}}}  & v1 & 4        & 25  & 80  & 2.0       & 0.005 \\
          & v2 & 3        & 30  & 60  & 1.0       & 0.005 \\
          & v3 & 5        & 25  & 40  & 0.5       & 0.005 \\
          & v4 & 3        & 30  & 80  & 1.0       & 0.01  \\
          \midrule
\multirow{4}{*}{\rotatebox[origin=c]{90}{\textbf{\scriptsize WN18RR}}}     & v1 & 3        & 20  & 40  & 1.0       & 0.01  \\
          & v2 & 3        & 20  & 60  & 0.5       & 0.01  \\
          & v3 & 3        & 20  & 40  & 1.0       & 0.01  \\
          & v4 & 3        & 30  & 80  & 1.0       & 0.01  \\
          \midrule
\multirow{4}{*}{\rotatebox[origin=c]{90}{\textbf{\scriptsize NELL-995}}}   & v1 & 3        & 20  & 80  & 2.0       & 0.005 \\
          & v2 & 4        & 30  & 60  & 2.0       & 0.01  \\
          & v3 & 4        & 25  & 40  & 0.5       & 0.01  \\
          & v4 & 4        & 30  & 60  & 1.0       & 0.01 \\
          \bottomrule
\end{tabular}
\caption{\modelName's best-performing hyperparameters on FB15k-237 v1-4, WN18RR v1-4, and NELL-995 v1-4.}
\label{tab:hyperparas}
\end{table}

\begin{table*}[t]
\centering
\footnotesize
\setlength\tabcolsep{4pt}
\begin{tabular}{ccccccccccccc}
\toprule
              & \multicolumn{4}{c}{\textbf{FB15k-237}} & \multicolumn{4}{c}{\textbf{WN18RR}}    & \multicolumn{4}{c}{\textbf{NELL-995}}  \\
                                       \cmidrule(lr){2-5} \cmidrule(lr){6-9} \cmidrule(lr){10-13}
                & v1    & v2    & v3    & v4    & v1    & v2    & v3    & v4    & v1    & v2    & v3    & v4    \\
Seed 1                & 0.751 & 0.879 & 0.905 & 0.918 & 0.713 & 0.727 & 0.614 & 0.693 & 0.630 & 0.874 & 0.871 & 0.816 \\
Seed 2                & 0.744 & 0.892 & 0.908 & 0.916 & 0.707 & 0.726 & 0.574 & 0.690 & 0.650 & 0.860 & 0.893 & 0.808 \\
Seed 3                & 0.746 & 0.883 & 0.897 & 0.918 & 0.710 & 0.736 & 0.617 & 0.698 & 0.635 & 0.848 & 0.881 & 0.812 \\
\midrule
$\textit{mean}$ & 0.747 & 0.885 & 0.903 & 0.918 & 0.710 & 0.729 & 0.602 & 0.694 & 0.638 & 0.861 & 0.882 & 0.812 \\
$\textit{stdv}$ & 0.004 & 0.007 & 0.005 & 0.001 & 0.003 & 0.006 & 0.024 & 0.004 & 0.010 & 0.013 & 0.011 & 0.004 \\
\bottomrule
\end{tabular}
\caption{\modelName's benchmark Hits@10 scores on all seeds together with the mean ($\textit{mean}$) and standard deviation ($\textit{stdv}$) of Hits@10.}
\label{tab:BenchRes}
\end{table*}

Following \citet{DBLP:conf/icml/TeruDH20}, we manually tune \modelName's hyperparameters on the validation split of $\mathcal{G}_{\textit{Train}}$. We use the following ranges for the hyperparameters: the number of \modelName's layers $\textit{\#Layers} \in \{3, 4, 5\}$, the embedding dimensionality parameters $l \in \{20, 25, 30\}$ and $k \in \{40, 60, 80\}$, the loss margin $\lambda \in \{0.5, 1.0, 2.0\}$, and finally the learning rate $\textit{lr} \in \{0.005, 0.01\}$. We use the same batch and negative sampling size for all runs. In particular, we set the batch size to $1024$ and the negative sampling size to $100$. We report the best hyperparameters for \modelName\ split by each inductive benchmark in Table~\ref{tab:hyperparas}. Finally, we reuse the same hyperparameters for each of \modelName's ablations, namely, \modelName$_{\sf nL}$ and \modelName$^2$.

\subsection{Evaluation Protocol and Complete Results}
\label{app:Metrics}

Following the standard evaluation protocol for inductive KGC, introduced by \citet{DBLP:conf/icml/TeruDH20}, we evaluate \modelName's final performance on the test split of the testing graph by measuring the ranking quality of any test triple $r(e, f)$ over $50$ randomly sampled entities $e'_i \in \mathcal{E}$ and $f'_i \in \mathcal{E}$:  $r(e'_i, f)$ and $r(e, f'_i)$ for all $1 \leq i \leq 50$. Following \citet{DBLP:conf/icml/TeruDH20}, we report the Hits@10 metric, i.e., the proportion of true triples (those within the test split of the testing graph) among the predicted triples whose rank is at most $10$. 

\medskip
\noindent
Table \ref{tab:BenchRes} states \modelName's benchmark results over all inductive datasets, as well as their means and standard deviations.

\end{document}